\documentclass{article}

    \PassOptionsToPackage{numbers, compress}{natbib}

    \usepackage[preprint]{neurips_2025}

\usepackage[utf8]{inputenc} 
\usepackage[T1]{fontenc}    
\usepackage{hyperref}       
\usepackage{url}            
\usepackage{booktabs}       
\usepackage{amsfonts}       
\usepackage{nicefrac}       
\usepackage{microtype}      
\usepackage{xcolor}         
\usepackage{amssymb,amsmath}
\usepackage[ruled]{algorithm2e} 

\usepackage{enumitem}

\title{Active Value Querying to Minimize Additive Error in Subadditive Set Function Learning}

\author{%
  Martin \v{C}ern\'{y}\thanks{Equal contribution (shared first authorship).} \\
  Charles University \\
  Prague, Czechia \\
  \texttt{cerny@kam.mff.cuni.cz} \\
  \And
  David Sychrovsk\'{y}\footnotemark[1] \\
  Charles University \\
  Prague, Czechia \\
  \texttt{sychrovsky@kam.mff.cuni.cz} \\
  \And
  Filip \'{U}radn\'{i}k\footnotemark[1] \\
  Charles University \\
  Prague, Czechia \\
  \texttt{uradnik@kam.mff.cuni.cz} \\
  \And
  Jakub \v{C}ern\'{y} \\
  Columbia University \\
  New York, USA \\
  \texttt{jakub.cerny@columbia.edu} \\
}

\usepackage{graphicx}
\usepackage{amsfonts}  
\usepackage{amsmath,thm-restate}  
\usepackage{todonotes}  
\usepackage{mathtools} 
\usepackage{xcolor}

\usepackage{csquotes}
\usepackage{subcaption}
\usepackage{cleveref}
\makeatletter
\AddToHook{cmd/appendix/before}{\def\cref@chapter@alias{appendix}}
\AddToHook{cmd/appendix/before}{\def\cref@section@alias{appendix}}
\AddToHook{cmd/appendix/before}{\def\cref@subsection@alias{appendix}}
\makeatother

\usepackage{thmtools}

\newcommand{\argmin}{\operatornamewithlimits{arg\,min}}
\newcommand{\argmax}{\operatornamewithlimits{arg\,max}}

\makeatletter
\newcommand{\pushright}[1]{\ifmeasuring@#1\else\omit$\displaystyle#1$\ignorespaces\fi}
\makeatother

\newcommand{\BibTeX}{\rm B\kern-.05em{\sc i\kern-.025em b}\kern-.08em\TeX}

\newcommand{\R}{\ensuremath{\mathbb{R}}}

\newcommand{\K}{\ensuremath{\mathcal{K}}}
\renewcommand{\L}{\ensuremath{\mathcal{L}}}
\renewcommand{\ss}{\ensuremath{\mathcal{S}}}

\newcommand{\valueDistribution}{\mathcal{F}}
\newcommand{\gap}{\ensuremath{\Delta}}

\newcommand{\norm}[1]{\left\lVert#1\right\rVert}

\renewcommand{\S}{\ensuremath{\mathbb{S}^n}}
\renewcommand{\SS}{\ensuremath{\mathbb{SS}^n}}

\newcommand{\SAM}{\ensuremath{\mathbb{SAM}^n}}
\newcommand{\XOS}{\ensuremath{\mathbb{XOS}^n}}
\newcommand{\SCMM}{\ensuremath{\mathbb{SCMM}^n}}

\newcommand{\C}{\ensuremath{\mathbb{C}^n}}
\newcommand{\CA}{\ensuremath{\mathbb{CA}^n}}

\def\absolute#1{\left\lvert #1 \right\rvert }

\usepackage{amsthm}

\theoremstyle{plain}
\newtheorem{definition}{Definition}

\newtheorem{lemma}{Lemma}

\newtheorem{observation}{Observation}

\newtheorem*{acks*}{Acknowledgements}

\DeclareMathOperator*{\supp}{supp}
\DeclareMathOperator*{\poly}{poly}

\usepackage{tikz,pgfplots}

\pgfplotsset{compat=1.18}
\usepgfplotslibrary{fillbetween}
\usetikzlibrary{positioning}
\usepackage{pgffor}
\usepgfplotslibrary{groupplots}
\usepgfplotslibrary{colormaps}
\pgfplotsset{ytick style={draw=none}}
\pgfplotsset{ymajorgrids=true}
\pgfplotsset{xmajorgrids=true}

\pgfplotsset{colormap/thermal} 

\pgfplotscolormapaccess[1:4]{1}{viridis}
\definecolor{expected_greedy}{rgb}{\pgfmathresult}
\pgfplotscolormapaccess[1:4]{2}{viridis}
\definecolor{expected_best_states}{rgb}{\pgfmathresult}
\pgfplotscolormapaccess[1:4]{3}{viridis}
\definecolor{eval_ln_lin}{rgb}{\pgfmathresult}
\definecolor{eval}{rgb}{\pgfmathresult}
\definecolor{eval_ln}{rgb}{\pgfmathresult}
\pgfplotscolormapaccess[1:4]{4}{viridis}
\definecolor{random_eval}{rgb}{\pgfmathresult}

\tikzset{random_eval_style/.style={solid}}
\tikzset{best_states_style/.style={solid}}
\tikzset{solve_greedy_style/.style={densely dashed}}
\tikzset{eval_style/.style={densely dotted}}
\tikzset{eval_ln_lin_style/.style={densely dotted}}
\tikzset{eval_ln_style/.style={densely dotted}}
\tikzset{expected_best_states_style/.style={densely dashdotted}}
\tikzset{expected_greedy_style/.style={densely dashed}}

\newcommand\exploitabilityplot[3]{
	\addplot [#2_style, style=thick, color=#2, mark=none] table [col sep=space, x=step, y=#2] {#1};
	\addlegendentry{#3}
	\addplot [forget plot, name path=#2_upper,draw=none, mark=none] table[x=step,y expr=\thisrow{#2}+\thisrow{#2_err}] {#1};
	\addplot [forget plot, name path=#2_lower,draw=none, mark=none] table[x=step,y expr=\thisrow{#2}-\thisrow{#2_err}] {#1};
	\addplot [forget plot, fill=#2, opacity=0.3] fill between[of=#2_upper and #2_lower];
}

\def\exploitabilityplotwidth{0.3\textwidth}
\def\exploitabilityplotheight{0.3\textwidth}
\def\exploitabilityplotsep{1.8cm}
\tikzset{font=\scriptsize}
\tikzset{/pgfplots/legend style={very thin, font=\scriptsize,/tikz/every even column/.append style={column sep=0.2cm}}}
\tikzset{/pgfplots/legend image post style={xscale=0.5}}

\def\offlinegreedy{\textsc{Offline Greedy}}
\def\offlineoptimal{\textsc{Offline Optimal}}
\def\oraclegreedy{\textsc{Oracle Greedy}}
\def\oracleoptimal{\textsc{Oracle Optimal}}
\def\random{\textsc{Random}}
\def\ppo{\textsc{PPO}}

\newcommand\covg[1][n]{\texttt{sam-covg}(\ensuremath{#1})}
\newcommand\convex[1][n]{\texttt{submod-neg}(\ensuremath{#1})}
\newcommand\kbudget[1][n]{\texttt{$k$-budget}(\ensuremath{#1})}
\newcommand\xos[1][n]{\texttt{xos-6}(\ensuremath{#1})}

\def\pr#1{\Pr\! \left[ #1 \right]}
\def\viable{\mathcal{V}}

\usepackage{tikz}
\usetikzlibrary{decorations.pathreplacing,calc}

\begin{document}

\maketitle

\begin{abstract}

\noindent Subadditive set functions play a pivotal role in computational economics (especially in combinatorial auctions), combinatorial optimization or artificial intelligence applications such as interpretable machine learning. However, specifying a set function requires assigning values to an exponentially large number of subsets in general, a task that is often resource-intensive in practice, particularly when the values derive from external sources such as retraining of machine learning models. A~simple omission of certain values introduces ambiguity that becomes even more significant when the incomplete set function has to be further optimized over. Motivated by the well-known result about inapproximability of subadditive functions using deterministic value queries with respect to a multiplicative error, we study a problem of approximating an unknown subadditive (or a subclass of thereof) set function with respect to an additive error -- i. e., we aim to efficiently close the distance between minimal and maximal completions. Our contributions are threefold: (i) a thorough exploration of minimal and maximal completions of different classes of set functions with missing values and an analysis of their resulting distance; (ii) the development of methods to minimize this distance over classes of set functions with a known prior, achieved by disclosing values of additional subsets in both offline and online manner; and (iii) empirical demonstrations of the algorithms' performance in practical scenarios.

\end{abstract}

\section{Introduction} \label{sec: intro}

Set function optimization comes into play whenever we need to optimize over subsets of elements from a larger ground set, where adding more inputs to a fixed subset has (possibly) non-linear additional benefits depending on the other elements. Its applications span diverse fields, including combinatorial auctions~\cite{bhawalkar2011welfare,dobzinski2005approximation,feige2006maximizing}, supply chain management~\cite{nagarajan2008game}, communication networks~\cite{saad2009coalitional}, logistics and resource allocation~\cite{lozano2013cooperative}, or environmental agreements~\cite{finus2008game}. 
For efficient optimization, we often rely on an oracle access to evaluate function values. However, describing a set function for $n$ items in general requires $2^n$ values -- one for each possible subset. It is natural to ask whether a reasonable approximation of such functions can be obtained with significantly less information while allowing for some controlled loss in accuracy, particularly when the function exhibits structural properties often required by the application, such as subadditivity.

Subadditive functions appear in a variety of applications, particularly in economics, where they capture the absence of complementarities between items. For example, a buyer's valuation is subadditive if the value of acquiring two sets of items together does not exceed the sum of their separate values. Such valuation models are common in combinatorial auctions and have been studied extensively~\cite{feige2006maximizing,dobzinski2005approximation}. Subadditive functions represent the most general class in a hierarchy of complement-free valuations, which includes more restrictive classes such as submodular functions~\cite{lehmann2001combinatorial}. Beyond economics, subadditivity also arises in areas such as risk modeling, where it reflects the principle that diversification does not increase overall risk~\cite{Artzner1999}, or algorithm design, where subadditive cost functions are used in facility location and network design problems~\cite{Shmoys1997}.

The problem of finding efficient (subadditive) set function approximations, typically using a small number of set value queries, has been studied from various angles. Perhaps the most developed is the literature on Probably Mostly Approximately Correct (PMAC) learning, a PAC-style passive learning model from existing datasets. Initial work in this area showed strong impossibility results for learning general submodular functions under PMAC guarantees~\cite{balcan2011learning,balcan2010submodular}. In response, subsequent research has focused on identifying subclasses of functions, such as coverage, self-bounding, and XOS functions, that allow more favorable learning outcomes \cite{FeldmanKothari2014,FeldmanEtAl2017,balcan2012}. Some approaches leverage structural properties such as bounded curvature or decision tree representations \cite{feldman2013representation}, while others exploit connections to Fourier analysis of Boolean functions \cite{feldman2013optimal}. An approach more closely related to ours, due to its active nature, comes from the literature on preference elicitation. In this setting, the learner actively queries an agent to uncover its valuation or preference function, often with the goal of making efficient decisions in combinatorial problems. This, typically, does not require to recover the function for every set. Early work showed that, under certain assumptions, preference elicitation can be done in polynomial time using value or comparison queries~\cite{HudsonSandholm2003,ZinkevichEtAl2003}. Other works have focused on the query complexity of such models under additional structural assumptions, such as substitutability or submodularity~\cite{lahaie2004applying}, or considered minimizing the number of queries needed to approximate the preference function accurately~\cite{blum2004near}. Besides value queries, some works have also considered demand queries as a more powerful alternative~\cite{zhang2020learning}. In contrast to preference elicitation, another line of work aims to approximate valuation functions everywhere, providing guarantees on the accuracy of the approximation for all subsets. Some papers develop succinct representations, such as \(\alpha\)-sketches~\cite{goemans2009approximating,badanidiyuru2012sketching}. Others study notions of approximate modularity to measure how close a function is to modular and derive corresponding approximation results \cite{chierichetti2015approximate,feige2017approximate}. 

For the majority of algorithmic works above, the number of queries being ``small'' is defined by asymptotics in terms of the size of the ground set $n$. For our contributions, we adopt colloquial, but often times more practically useful meaning of ``small'', meaning ``small enough'' (e.g., say, in single or lower double digits). This is motivated by a fact that in practice, acquiring even a single value for a single subset can be a resource-intensive endeavor. Take, for instance, the realm of machine learning, where determining the value of a feature subset in the already well established explainable-AI algorithm SHAP~\cite{NIPS2017_7062} may correspond to retraining a significant portion of the model, consuming time and computational resources. Similarly, in the corporate world, estimating an employee's contribution to collective performance may facilitate their fair evaluation~\cite{murphy1995understanding}, but obtaining the value of a \enquote{subset} may involve a process of rearranging teams of employees.


Secondly, a significant amount of work approximates a set function within a multiplicative factor. Multiplicative approximations are often preferred when the identified set function is later used in approximation algorithms with relative performance guarantees. However, \citet{badanidiyuru2012sketching} prove the following negative result: if a deterministic algorithm can identify an approximate set function of every subadditive function using $\mathcal O(\poly(n))$ value queries, then the multiplicative factor is of order $\Omega(n^{1-\epsilon})$ for every $\epsilon>0$\footnote{We remark that our approach, minimizing additive error, gives similar multiplicative bounds, see \Cref{sec: experiments}.}. Instead, in this paper we address the following question: 
\begin{center}
\textit{Given a budget of $k$ value queries, which subsets should we query to find a approximate set function with minimal total additive error across all sets}?    
\end{center}
Additive error bounds can be useful in different kinds of settings. 
Consider, for instance, the SHAP framework, where one seeks to estimate each feature's contribution to a set function, i.e., how “valuable” a feature is. More broadly, in cooperative game theory, players lacking precise information about subset (coalition) values may overestimate their individual contributions, leading to unrealistic demands that can exceed the total value of the grand coalition~\cite{uradnik2024reducing}. This mismatch is seen when companies overprice data or employees seek inflated wages. Such gaps reflect a divergence between perceived and feasible outcomes. In this work, we aim to minimize this divergence, defined as the distance between two extremal completions of the partial set function.

To do so, we assume an external principal, such as a manager or ML engineer, who controls which subset values are revealed, subject to a fixed query budget. Each query is assumed to have equal cost, making the principal’s sole goal to reduce divergence as efficiently as possible. While we do not specify how to handle residual divergence, we assume lower divergence is generally preferred. The main contribution of this paper is the study of tight completions for various subclasses of subadditive functions, which allow the principal to reduce divergence more effectively. Leveraging these tighter completions, we develop algorithms for selecting which subsets to query, in both offline and online settings. Although the subproblems involved are exponential in nature and computationally intensive, their decomposable structure makes them amenable to parallelization. As discussed earlier, in many practical applications, reducing the number of queries may be more valuable than limiting (relatively cheap) computational cost. Finally, our experimental results demonstrate the typical reduction in divergence that can be achieved and highlight the benefits of informed subset selection compared to baseline naive strategies.

\section{Preliminaries and Problem Formulation}

Let us begin by defining basic notions we operate with.

\begin{definition}
	\label{def: set function}
	A \emph{set function} $f\colon 2^N \to \mathbb{R}$ assigns values to subsets of the ground set $N$.
We say $f$ is \emph{subadditive} if $\forall S, T \subseteq N, S\cap T=\emptyset$, $f(S) + f(T) \ge f(S\cup T)$. We denote the set of subadditive functions by $\S$.
\end{definition}

To model a scenario when acquiring (or storing) all $2^n$ values proves to be cost-prohibitive, and only selected values are known, we introduce an \textit{incomplete set function} $(f, \K)$, where $\K \subseteq 2^N$ denotes the set of subsets where the values are known. Expanding the set $\K$ then models the acquisition of new information about the initially unknown (yet well-defined) values. The set $\K$ can be hence seen as a \enquote{masking set}, acting as a filter applied to a complete function.  
Assuming additional properties of the underlying set function $f$, i.e., $f$ belonging to a class $\C$ of set functions on ground set $n$, one can impose constraints on values of $S \notin \K$, even in the absence of the exact knowledge. For reasons outlined in the introduction, we focus on subclasses of subadditive set functions $\S$. 
To bound set $\C$, and thus also all values of $S \notin \K$, 
it is necessary to know at least the {\it minimal information} $\K_0$, defined as $\K_0 \coloneqq \{\emptyset,N\} \cup \{\{i\} \mid i \in N\}$.

\begin{definition}
	Let $(f, \K)$, $\K_0\subseteq \K$ be an incomplete set function. Then $g$ is a \emph{$\C$-extension} of $(f,\K)$ if $g \in \C$ and $ \forall S\in \K:
		f(S) = g(S)$.
We say $(f,\K)$ is \emph{$\C$-extendable} if it has an $\C$-extension and we denote the set of $\C$-extensions by $\C(f, \K)$. 
\end{definition}
Sets of $\C$ extensions studied in this text form a compact subset of $\R^{2^n}$, therefore, each unknown value of any $S$ can be bounded by a closed real interval. The bounds of these intervals can be captured by the \emph{lower/upper} completion functions.
\begin{definition}
    Let $(f,\K)$, $\K_0 \subseteq \K$ be an $\C$-extendable incomplete set function. We call $\underline{f}_\K$, $\overline{f}_\K$ \emph{$\C$-lower} and \emph{$\C$-upper functions of $(f,\K)$} if for every $g \in \C(f,\K)$, and for every $S \subseteq N$, it holds $\underline{f}_\K(S) \leq g(S) \leq \overline{f}_\K(S)$.
    Further, these functions are \emph{$\C$-tight}, if additionally  $\forall S \notin \K$, there are $\C$-extensions $\underline{g}, \overline{g}$ such that $\underline{g}(S) = \underline{f}_\K(S)$ and $\overline{g}(S) = \overline{f}_\K(S)$.
\end{definition}
Different classes $\C$ might have different tight upper/lower functions and will be of focus of the next section. We write $\underline{f}_\K[\C]$, $\overline{f}_\K[\C]$ to stress this relation to $\C$ when needed. 
The \enquote{size} of the set of extensions $\C(f, \K)$ represents the amount of uncertainty arising from only knowing values of $f$ in $\K$. 
We call the distance of the lower and the upper functions the \textit{divergence}, characterizing on the amount of uncertainty in an additive sense.

\begin{definition}
	\label{def: exploitability}
	Let $f\in\C$ and $\|\cdot\|$ be a set function norm. The 
    \emph{$\C$-divergence} of $f$ and $\K$ induced by $\|\cdot\|$ is a function $\gap_f \colon 2^{2^N \setminus \K_0} \to \mathbb{R}$ defined as $\gap_f(\C,\K) \coloneqq \left\|\overline{f}_{\K_0 \cup \K}[\C] - \underline{f}_{\K_0 \cup \K}[\C]\right\|$.
\end{definition}
We use $\Delta_f(\K)$ when it is obvious from context we talk about a specific $\C$-divergence.
It follows trivially from the properties of norms that the divergence is non-negative.
Furthermore, it is zero if and only if $\forall S\subseteq N: \overline{f}_\K(S) = \underline{f}_\K(S)$, or equivalently, when there is just a single extension.

Since 
the set functions on a ground set $N$ can be regarded as elements of a vector space $\mathbb{R}^{2^n}$, we may use any vector norm as a divergence inducing norm. 
In the reminder of this text, we shall focus on divergences induced by absolute norms, i.e. those, where the norm of $x$ is equal to the norm of $|x|$. These include all $l_p$-norms as well as many others. Since the divergence is non-negative, this restriction is without loss of generality. In addition, it has the following properties:


\begin{restatable}[]{proposition}{gapInK}
\label{prop: gap in K}
Let $f\in\S$, $\C \subseteq \S$ and $(\alpha f + \beta)(S) \coloneqq \alpha f(S) + \sum_{i \in S} \beta_i$. The $\C$-divergence is
\begin{enumerate}[itemsep=-3pt, topsep=-3pt]
	\item monotonically non-increasing, i.e., $\gap_f(\K) \ge \gap_f(\mathcal{L})$
      for $\K \subseteq \mathcal{L} \subseteq 2^{N} \setminus \K_0$,
	\item subadditive, i.e., $ \gap_f(\K) + \gap_f(\mathcal{L}) \geq \gap_f(\K \cup \mathcal{L})$
      for $\K,\mathcal{L} \subseteq 2^N\setminus \K_0$ such that $\K \cap \mathcal{L} = \emptyset$,
    \item normalizable, i.e., $\gap_{\alpha f + \beta}(\K) = \alpha \cdot \gap_f(\K)$
    for $\alpha > 0$, $\beta_i \in \mathbb{R}$ for $i \in N$.
\end{enumerate}
\end{restatable}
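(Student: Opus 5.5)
The plan is to work throughout with the tight lower and upper functions. For every $S$ these are the pointwise infimum and supremum of $g(S)$ over $g \in \C(f,\K_0\cup\K)$; the set is nonempty because $f$ itself is an extension. I will also use that the norm is absolute and monotone on nonnegative vectors. For an absolute norm on $\R^{2^n}$ this monotonicity holds: absolute norms are exactly the monotone norms, by the Bauer–Stoer–Witzgall theorem.

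\emph{Monotonicity.} If $\K\subseteq\L$, then every extension agreeing with $f$ on $\K_0\cup\L$ also agrees with $f$ on $\K_0\cup\K$. Hence $\C(f,\K_0\cup\L)\subseteq\C(f,\K_0\cup\K)$. Taking infima and suprema over the smaller set gives, pointwise,
\[
0\le \overline{f}_{\K_0\cup\L}-\underline{f}_{\K_0\cup\L}\le \overline{f}_{\K_0\cup\K}-\underline{f}_{\K_0\cup\K}.
\]
Monotonicity of the absolute norm on nonnegative vectors then yields $\gap_f(\K)\ge\gap_f(\L)$. The only delicate point is the monotonicity of the norm itself, which is why the paper restricts to absolute norms.

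\emph{Subadditivity.} This follows from the first item together with nonnegativity of the divergence. Since $\K\subseteq\K\cup\L$, we get $\gap_f(\K\cup\L)\le\gap_f(\K)\le\gap_f(\K)+\gap_f(\L)$, because $\gap_f(\L)\ge 0$. Disjointness of $\K$ and $\L$ is not needed.

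\emph{Normalizability.} This is the step that needs real care. Let $T(g)\coloneqq\alpha g+\beta$ for $\alpha>0$, with $\beta$ the modular function $S\mapsto\sum_{i\in S}\beta_i$. I want to show that $T$ is a bijection from $\C(f,\K_0\cup\K)$ onto $\C(\alpha f+\beta,\K_0\cup\K)$. Agreement on $\K_0\cup\K$ is clearly preserved in both directions, since $T$ is invertible with inverse of the same form. The substantive part is that $T$ maps $\C$ into $\C$, which holds if $\C$ is closed under positive scaling and under adding arbitrary, possibly negative, modular functions.

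For $\S$ this is immediate. For disjoint $S,T$ we have $\beta(S)+\beta(T)=\beta(S\cup T)$, so the subadditive inequality is merely multiplied by $\alpha>0$. For the subclasses studied later I would verify this invariance case by case, or state it as the standing assumption on $\C$; this is the main obstacle, since, for instance, nonnegativity or monotonicity requirements would not survive negative $\beta$. Granting the invariance, $T$ is affine and increasing in each coordinate, so $\overline{(\alpha f+\beta)}_{\K_0\cup\K}=\alpha\overline{f}_{\K_0\cup\K}+\beta$, and likewise for the lower function. The difference is therefore exactly $\alpha(\overline{f}-\underline{f})$, and absolute homogeneity of the norm gives $\gap_{\alpha f+\beta}(\K)=\alpha\,\gap_f(\K)$.
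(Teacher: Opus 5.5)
Your arguments for items 1 and 2 match the paper's. Both bounds tighten pointwise as $\K$ grows, absolute norms are monotone on nonnegative vectors (the paper cites the same Bauer 1961 result), and subadditivity follows from monotonicity plus nonnegativity; you are right that disjointness is never used. For item 3, the paper's proof is only the one-line chain $\norm{\overline{\alpha f+\beta}-\underline{\alpha f+\beta}}=\alpha\norm{\overline{f}-\underline{f}}$. The only justification it gives anywhere is the Observation in the normalization appendix, which covers \S{} via $f\in\S\Rightarrow\alpha f+\beta\in\S$. That is precisely your bijection $T$ between extension sets, so your route is the paper's route, made explicit.

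The caveat you raise is not a gap in your proof but a real defect in the statement. For general $\C\subseteq\S$, item 3 is false, so the invariance hypothesis cannot be dropped. Here is a counterexample with $\C=\SAM$.
\begin{itemize}
\item Take $N=\{1,2,3\}$, $\K=\emptyset$, $f(\{i\})=1$, $f(\{1,2\})=f(\{1,3\})=2$, $f(\{2,3\})=\tfrac32$ and $f(N)=2$.
\item Any two pairs intersect and none contains another. So the only constraints on an extension are $1\le g(\{i,j\})\le 2$, and the gap is $1$ on each pair.
\item Now take $\alpha=1$ and $\beta=(-\tfrac12,0,0)$. Then $h=f+\beta$ has singletons $\tfrac12,1,1$, and all pairs and $N$ equal to $\tfrac32$; one checks $h\in\SAM$.
\item The constraints for $h$ become $1\le g(\{i,j\})\le h(N)=\tfrac32$. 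The gap on each pair is now $\tfrac12$, so $\gap_h(\K)=\tfrac12\,\gap_f(\K)\neq\gap_f(\K)$ for every norm.
\end{itemize}
Reading the example backwards (start from $h$ and add $(\tfrac12,0,0)$) shows that even nonnegative $\beta$ does not rescue the claim for \SAM. For monotone classes, such a $\beta$ yields only one inclusion of extension sets, not a bijection. Item 3 should therefore be stated for classes closed under $g\mapsto\alpha g+\beta$ in both directions, such as \S. That is exactly the standing assumption you proposed.
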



Large divergence indicates high uncertainty in missing values. As more is known about the underlying function, the divergence shrinks, becoming zero when there is a single $\C$-extension. Our goal is to reduce divergence as much as possible using a limited number of value queries. To formalize this, we assume the presence of a \textit{principal}, tasked with selecting which subsets of $N$ to query in order to minimize divergence. The principal has domain expertise, modeled as a prior distribution over possible functions. For instance, in medicine, a doctor may choose drug combinations based on clinical experience; similarly, a machine learning engineer may prioritize features using past knowledge. We assume each instance is drawn from a known prior $\valueDistribution$\footnote{Assuming the true function lies in $\valueDistribution$ ensures divergence behaves well: each new value narrows the set of consistent extensions, and the bounds converge to the true function. If not, the bounds may become inconsistent (e.g., lower exceeding upper), rendering divergence ill-defined.}. In short, the principal knows the prior but not the specific instance. Subset selection can then proceed in one of two modes: 

\begin{definition}[Value Querying Problems]\label{def: online principal problem}
	Let $t\in\mathbb{N}$, $\valueDistribution$, $\supp \valueDistribution\subseteq\C$ be a distribution on a class of subadditive set functions and $\gap_f$ be a $\C$-divergence.
	Then $\K^*_t\subseteq 2^N \setminus \K_0$ is a solution of
    \begin{itemize}[itemsep=-3pt, topsep=-3pt]
    \item \textit{offline principal's problem} of size $t$ if
    $
		\K_t^* \in
		\argmin_{\K\subseteq 2^N\setminus \K_0, |\K|= t}\left\{
		\mathop{\mathbb{E}}_{f\sim \valueDistribution}\left[\gap_f(\K)\right]\right\}
    $, or
        \item \textit{online principal's problem} of size $t$ if
    $
		\K^*_t \in
		\argmin_{\K_t}\left\{
		\mathop{\mathbb{E}}_{f\sim \valueDistribution}\left[\gap_f(\K_t)\right]\right\},
    $
	where $\K_t = \K_{t-1} \cup \left\{ S_t \right\} $, $S_t \in 2^N \setminus \K_{t-1}$ and $\K_{t-1}$ is a solution to the problem of size $ t-1$.
    \end{itemize}
\end{definition}

\section{Upper and Lower Completions for Classes of Subadditive Functions}\label{sec:bounds}

In this section, we derive upper and lower completions (or bounds) for several key subclasses of subadditive set functions. Tighter bounds yield smaller divergences and thus better approximation guarantees. We focus on three major classes: subadditive monotone functions; fractionally subadditive (XOS) functions, central to assignment and auction settings; and a subclass of SCMM functions with applications in machine learning \cite{Bilmes2017, Stobbe2010}. For subadditive monotone functions, we provide a new characterization for non-negative functions, clarifying their relation to XOS. We also derive tighter completions, along with a more computationally efficient (but slightly looser) alternative.

All derived bounds form a hierarchy of increasingly tight completions, based on progressively stronger assumptions: from subadditive (\S), to subadditive monotone (\SAM), to XOS, and finally SCMM (\SCMM). Within SCMM, we give bounds for concave additive (\CA) and symmetric submodular (\SS) functions. We have $\SS \subsetneq \CA \subsetneq \SCMM \subsetneq \XOS \subsetneq \SAM \subsetneq \S$. Since all bounds are tight, each step gives a tighter enclosure of completions: 
\begin{equation*} \gap_f(\SS,\K) \leq \gap_f(\CA,\K) \leq \gap_f(\XOS,\K) \leq \gap_f(\SAM,\K) \leq \gap_f(\S,\K).
\end{equation*} 
These inequalities may not always be strict, but examples show that using a bound from a superclass can lead to significantly worse results. Throughout, we assume $\K_0 \subseteq \K$ for each $(f, \K)$, and that $(f, \K)$ is $\C$-extendable when constructing $\C$-upper and $\C$-lower functions, for our results to hold.

\subsection{Subadditive functions}
For the class of subadditive functions $\S$, following results by~\citet{Masuya2016}, the $\S$-tight upper function $\overline{f}_{\K}$ of $(f,\K)$ is a (complete) set function given by
\begin{equation}
    \label{eq:s-upper-function}
    \overline{f}_\K[\S](S) \coloneqq \min\nolimits_{\substack{S_1,\dots,S_k \in \K,\,
            \bigcup_i S_i = S,\,
            S_i \cap S_j = \emptyset}}
    \sum\nolimits_{i=1}^k f(S_i),
\end{equation}
and the \S-tight lower function of $(f,\K)$ is
$
    \underline{f}_{\K}[\S](S) \coloneqq \max_{T \in \K: S \subseteq T} \left( f(T) - \overline{f}_\K(T \setminus S)\right).
$

\begin{restatable}{proposition}{sBounds}
\label{prop:S-bounds}
Functions $\overline{f}_\K[\S]$ and $\underline{f}_\K[\S]$  are \S-tight upper and lower functions.
\end{restatable}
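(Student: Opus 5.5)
The proof has two parts: validity (every extension lies between the two functions) and tightness (for each $S \notin \K$, some subadditive extension attains each bound at $S$). Throughout, $(f,\K)$ is $\S$-extendable and $\K_0 \subseteq \K$. Because all singletons lie in $\K$, every $S$ has at least one partition into members of $\K$, so $\overline{f}_\K[\S](S)$ is a finite minimum. Write $\overline{f}$ for $\overline{f}_\K[\S]$ and $\underline{f}$ for $\underline{f}_\K[\S]$.

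\emph{Validity.} Let $g \in \S(f,\K)$. Take any partition $S = S_1 \cup \dots \cup S_k$ with $S_i \in \K$. Iterating subadditivity gives $g(S) \le \sum_i g(S_i) = \sum_i f(S_i)$, and minimizing over partitions yields $g \le \overline{f}$. For the lower bound, let $T \in \K$ with $S \subseteq T$. Subadditivity on the disjoint pair $S$ and $T\setminus S$ gives $f(T) = g(T) \le g(S) + g(T \setminus S)$. Hence $g(S) \ge f(T) - g(T\setminus S) \ge f(T) - \overline{f}(T\setminus S)$, using the upper bound just proved. Maximizing over $T$ gives $g \ge \underline{f}$.

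\emph{Tightness of the upper function.} I would show that $\overline{f}$ is itself an $\S$-extension.
\begin{itemize}
\item Subadditivity: concatenating optimal partitions of disjoint $A$ and $B$ gives a feasible partition of $A \cup B$, so $\overline{f}(A\cup B) \le \overline{f}(A) + \overline{f}(B)$.
\item Agreement with $f$ on $\K$: for $S \in \K$, the trivial partition $\{S\}$ gives $\overline{f}(S) \le f(S)$. Conversely, apply the validity bound to any extension $g$, which exists by extendability, to get $f(S) = g(S) \le \sum_i f(S_i)$ for every partition. Thus $\overline{f}(S) = f(S)$.
\end{itemize}
So $\overline{g} := \overline{f}$ works simultaneously for all $S$. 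The empty set is not an issue, since $\emptyset \in \K_0$ and it only adds zero terms.

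\emph{Tightness of the lower function.} This is the main obstacle. Fix $S \notin \K$ and let $\ell := \underline{f}(S)$; note $\ell$ is well defined because $N \in \K$. I would take $\underline{g} := \overline{f}$ modified only at $S$, setting $\underline{g}(S) := \ell$. Since $\ell \le \overline{f}(S)$ by validity, values only decrease, and agreement on $\K$ is kept because $S\notin\K$.

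It remains to check subadditivity of $\underline{g}$ for disjoint $A, B$. If $S \notin \{A, B, A\cup B\}$, the inequality is inherited from $\overline{f}$. If $A\cup B = S$, the left side only decreased, so the inequality survives. The hard case is $A = S$ (or symmetrically $B = S$), where we need $\overline{f}(S \cup B) \le \ell + \overline{f}(B)$. Here I would argue as follows.
\begin{itemize}
\item Set $U = S \cup B$ and take an optimal partition $U = U_1 \cup\dots\cup U_m$ into $\K$-sets.
\item The plan is to bound $\overline{f}(U) - \overline{f}(B)$ by $f(T) - \overline{f}(T\setminus S)$ for a suitable $T \in \K$ containing $S$. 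That would require $\ell$ to dominate the increments $\overline{f}(S\cup B) - \overline{f}(B)$.
\item This domination is not obvious, and may fail in general. If the single-point modification breaks, I would instead define $\underline{g}(X) := \min\{\overline{f}(X),\ \ell + \overline{f}(X\setminus S)\}$ for $X \supseteq S$, and keep $\underline{g} = \overline{f}$ elsewhere.
\end{itemize}

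For the alternative construction, the checks are:
\begin{itemize}
\item Agreement on $\K$: for $T \in \K$ with $T \supseteq S$, the definition of $\ell$ gives $\ell + \overline{f}(T\setminus S) \ge f(T)$. Hence $\underline{g}(T) = f(T)$, and $\underline{g}(S) = \ell$ because $\overline{f}(\emptyset)=0$.
\item Subadditivity: this reduces to case analysis on which of $A$, $B$, $A\cup B$ contain $S$. Only $A\cup B \supseteq S$ with neither part containing $S$ is delicate. There the pieces $A\cap S$ and $B \cap S$ must be handled via the subadditivity of $\overline{f}$ and the maximality in the definition of $\ell$, possibly forcing a further ``lifted'' correction. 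I expect this verification to be the genuine work of the proof.
\end{itemize}
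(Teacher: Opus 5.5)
Your validity argument and your proof that $\overline{f}=\overline{f}_\K[\S]$ is itself an $\S$-extension are correct. You were also right to abandon the single-point modification: for $n=4$, $\K=\K_0$, $S=\{1,2\}$, $B=\{3\}$ it would force $f(N)\ge\sum_i f(\{i\})$. The gap is that lower tightness is never proved. For your second candidate, $\underline g(X)=\min\{\overline f(X),\ell+\overline f(X\setminus S)\}$ on $X\supseteq S$, you do not verify subadditivity, and you allow that a further correction may be needed. So no extension attaining $\underline f(S)$ has been exhibited. You have also misplaced the difficulty. The case $S\subseteq A\cup B$ with $S\not\subseteq A$ and $S\not\subseteq B$ is trivial, because $\underline g\le\overline f$ with equality off supersets of $S$. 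Hence $\underline g(A\cup B)\le\overline f(A\cup B)\le\overline f(A)+\overline f(B)=\underline g(A)+\underline g(B)$.

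Your construction is nevertheless right, and one observation closes it. $\underline g$ is exactly the upper function \eqref{eq:s-upper-function} of the incomplete function $(f',\K\cup\{S\})$, where $f'=f$ on $\K$ and $f'(S)=\ell$. This holds because a disjoint partition of $X$ into members of $\K\cup\{S\}$ uses $S$ at most once, and only when $S\subseteq X$. Subadditivity then follows from the same concatenation argument you used for $\overline f$. Directly, the only nontrivial case is $S\subseteq A$ with $B\cap S=\emptyset$. If $\underline g(A)=\overline f(A)$, use $\underline g(A\cup B)\le\overline f(A\cup B)$. Otherwise $\underline g(A\cup B)\le\ell+\overline f((A\setminus S)\cup B)\le\ell+\overline f(A\setminus S)+\overline f(B)=\underline g(A)+\underline g(B)$. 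Together with your agreement check, this completes the proof.

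The paper proceeds differently: it invokes Theorem~3 of~\cite{Masuya2016}. That theorem says the function $f^S$, equal to $\underline f$ on every superset of $S$ and to $\overline f$ elsewhere, is an $\S$-extension. The text labels the two choices the other way round, but $f^N=\overline f$, so the intended pick is $\underline g=f^S$. That extension lies below yours on supersets of $S$, since $\underline f(X)\le\ell+\overline f(X\setminus S)$, and it attains the lower bound on all of them at once. Your route gives a witness that is tight only at $S$, but it is self-contained rather than imported: reveal $S$ at its lowest feasible value and take the upper completion.
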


\subsection{Subadditive monotone functions}
Subadditive monotone set functions are part of the hierarchy of complement-free functions and are among the most widely studied classes of set functions. The monotonicity property captures the straightforward idea that adding more elements to a set never lowers its value. This is crucial in models where extra resources, agents, or items are never a disadvantage—such as in production or cooperative scenarios.
\begin{definition}[Subadditive monotone function]
Set function $f \colon 2^N \to \R$ is \emph{subadditive monotone} if it is subadditive and further $f(S) \leq f(S \cup i)$ for $S \subseteq N$ and $i \in N \setminus S$.
We denote such functions on a ground set of size $n$ by $\SAM$.
\end{definition}



The \SAM-upper tight function has the following form:
\begin{equation}\label{eq:sam-upper-function}
    \overline{f}_{\K}[\SAM](S) \coloneqq \min\nolimits_{\substack{S_1,\dots,S_k \in \K,\,
            \bigcup_i S_i \supseteq S}}
    \sum\nolimits_{i=1}^k f(S_i).
\end{equation}
Note the similarity with~\eqref{eq:s-upper-function} -- the difference lies in $S_i$ being disjoint and their union being exactly $S$. The \SAM-lower function is then defined as
$
    \underline{f}_\K[\SAM](S) \coloneqq \max\left[\max_{Y \in \K, Y \subseteq S}f(Y), \max_{X \in \K, S \subseteq X}\left[f(X) - \overline{f}_\K[\SAM](X \setminus S)\right]\right].
$

\begin{restatable}{proposition}{samBounds}
\label{prop:sam-bounds}
$\overline{f}_\K[\SAM]$ and $\underline{f}_\K[\SAM]$ are \SAM-tight upper and \SAM-lower functions.
\end{restatable}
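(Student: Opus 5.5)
The proof has three parts: the upper function is a valid bound, it is attained by some extension (tightness), and the lower function is a valid bound. The statement only claims tightness for the upper function; for the lower function we need only validity.

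\emph{Validity of the upper function.} Take any $g \in \SAM(f,\K)$ and any cover $S_1,\dots,S_k \in \K$ with $\bigcup_i S_i \supseteq S$. Monotonicity gives $g(S) \le g(\bigcup_i S_i)$. Subadditivity, applied repeatedly to the disjoint pieces $S_1, S_2\setminus S_1, \dots$, together with monotonicity on each piece $S_j \setminus \bigcup_{i<j} S_i \subseteq S_j$, gives
\[
g\Bigl(\bigcup_i S_i\Bigr) \le \sum_i g(S_i) = \sum_i f(S_i).
\]
Minimizing over covers yields $g(S) \le \overline{f}_\K[\SAM](S)$. The minimum exists because the singletons lie in $\K_0 \subseteq \K$, so a cover always exists, and there are finitely many covers without repeated sets.

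\emph{Validity of the lower function.} For $Y \in \K$ with $Y \subseteq S$, monotonicity gives $g(S) \ge g(Y) = f(Y)$. For $X \in \K$ with $S \subseteq X$, subadditivity gives $f(X) = g(X) \le g(S) + g(X\setminus S)$. By the previous step, $g(X\setminus S) \le \overline{f}_\K[\SAM](X\setminus S)$, so $g(S) \ge f(X) - \overline{f}_\K[\SAM](X\setminus S)$. Taking the maximum over both families gives $\underline{f}_\K[\SAM](S) \le g(S)$.

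\emph{Tightness of the upper function.} This is the main step. I would show that $h \coloneqq \overline{f}_\K[\SAM]$ is itself a $\SAM$-extension of $(f,\K)$, so a single function $\overline{g} = h$ attains the bound for every $S \notin \K$.
\begin{itemize}
\item \emph{Monotonicity.} Any cover of $S \cup i$ also covers $S$, so $h(S) \le h(S\cup i)$.
\item \emph{Subadditivity.} For disjoint $S, T$, the union of an optimal cover of $S$ and an optimal cover of $T$ covers $S\cup T$, so $h(S\cup T) \le h(S) + h(T)$.
\item \emph{Agreement on $\K$.} For $S \in \K$, the one-set cover $\{S\}$ gives $h(S) \le f(S)$. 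For the reverse inequality, fix any extension $g \in \SAM(f,\K)$, which exists by extendability. For every cover of $S$, the validity argument gives $f(S) = g(S) \le \sum_i f(S_i)$, hence $h(S) \ge f(S)$.
\end{itemize}
The only subtlety is this last point, which is exactly where extendability is used. One should also check that $h$ is finite and take $h(\emptyset) = 0$ via the empty cover, consistent with $\emptyset \in \K_0$. Note that covers may overlap, and this is what makes $h$ monotone rather than merely subadditive, in contrast to~\eqref{eq:s-upper-function}.
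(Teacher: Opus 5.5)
Your proof is correct and follows the paper's route. Validity of the upper function comes from disjointifying a cover: you pass to $\bigcup_i S_i$ by monotonicity first, while the paper instead intersects the disjointified pieces with $S$. Tightness comes from showing that $\overline{f}_\K[\SAM]$ is itself monotone and subadditive by the same cover arguments, and the lower bound comes from monotonicity together with $f(X)\le g(S)+g(X\setminus S)$.

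Your explicit check that $\overline{f}_\K[\SAM]$ agrees with $f$ on $\K$, using extendability, is a step the paper's proof omits. One small caveat: allowing the empty cover presumes $f(\emptyset)=0$, so if you do not assume normalization, require $k\ge 1$ (the cover $\{\emptyset\}$ is available since $\emptyset\in\K$).
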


This refinement for \S\ increases computational complexity. To address this, we propose alternative \SAM-upper and \SAM-lower functions that trade off tightness for computational tractability. These approximations can be refined iteratively, starting from the \S-tight upper function. Each iteration alternates between enforcing monotonicity (potentially violating subadditivity) and restoring subadditivity (possibly breaking monotonicity). The process stops when changes fall below a threshold or a maximum number of iterations is reached. The full procedure is detailed in Algorithm~\ref{algo:SAM-bound} (Appendix~\ref{app:sam:algo}). Once the \SAM-upper function is computed, the \SAM-lower function can be computed using $\underline{f}_\K[\SAM]$. We show that
\begin{restatable}{proposition}{samUpperAlgorithm}
\label{prop:sam-upper-algorithm}
Algorithm~\ref{algo:SAM-bound} produces a \SAM-upper function.
\end{restatable}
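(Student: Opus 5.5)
We have not seen Algorithm~\ref{algo:SAM-bound}, but from the description we take it to work as follows. It starts from $u_0=\overline{f}_\K[\S]$ and alternates two operations. The \emph{monotonization step} replaces $u$ by $u'(S)=\min_{T\supseteq S}u(T)$; it may instead be implemented element by element via $u(S)\gets\min(u(S),u(S\cup i))$. The \emph{subadditivization step} replaces $u$ by $u''(S)=\min_{S_1\sqcup S_2=S}\bigl(u(S_1)+u(S_2)\bigr)$, or by the full partition minimum as in \eqref{eq:s-upper-function} with all sets allowed. Values on $\K$ are kept fixed, or at least never pushed below $f$. The algorithm stops after finitely many iterations and outputs the current $u$.

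The claim only requires that the output is a \SAM-upper function, i.e.\ $g(S)\le u(S)$ for every $g\in\SAM(f,\K)$ and every $S$. Tightness and membership of $u$ in \SAM\ are not required, so stopping at an arbitrary iteration is harmless. The proof is therefore an invariant argument: by induction on the number of elementary updates, every function $u$ produced by the algorithm satisfies $u\ge g$ pointwise for every $g\in\SAM(f,\K)$.

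\textbf{Base case.} Since $\SAM\subseteq\S$, every $g\in\SAM(f,\K)$ lies in $\S(f,\K)$. By Proposition~\ref{prop:S-bounds}, $\overline{f}_\K[\S]$ is an \S-upper function, so $g\le u_0$.

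\textbf{Monotonization step.} Fix $g\in\SAM(f,\K)$ and assume $g\le u$. For any $T\supseteq S$, monotonicity of $g$ gives $g(S)\le g(T)\le u(T)$. Taking the minimum over $T$ yields $g(S)\le u'(S)$. The elementwise variant is the special case $T=S\cup i$ combined with $T=S$.

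\textbf{Subadditivization step.} For any decomposition $S=S_1\sqcup\dots\sqcup S_k$, subadditivity of $g$ gives $g(S)\le\sum_i g(S_i)\le\sum_i u(S_i)$. Minimizing over decompositions gives $g\le u''$. If the algorithm uses non-disjoint covers instead, the same bound follows by combining monotonicity and subadditivity: $g(S)\le g(\bigcup S_i)\le\sum g(S_i)$.

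Both operations therefore preserve the invariant. If the algorithm also clamps $u$ on $\K$ to $f$, or takes a minimum with some previously valid bound, the invariant still holds, since $g=f$ on $\K$ and the minimum of two upper bounds is an upper bound. Induction over the finitely many iterations shows that the output satisfies the defining inequality of a \SAM-upper function.

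The main obstacle is matching the argument to the exact update rules of Algorithm~\ref{algo:SAM-bound}. Each rule must be expressible as $u_{\text{new}}(S)=\min$ over expressions $\Phi(u)$ that upper-bound $g(S)$ whenever $g\le u$ and $g\in\SAM$. In particular, in-place (Gauss–Seidel style) updates must be checked to read only values that already satisfy the invariant. This is automatic, because every intermediate value is itself obtained by a valid update. If the algorithm additionally uses the lower bound, for example $f(X)-\underline{u}(X\setminus S)$-type terms, that term would need a separately maintained valid lower bound. We expect no such term appears in the upper-function computation.
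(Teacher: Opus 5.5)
Your proposal is correct and follows essentially the same induction as the paper. The base case holds because $f_0$ starts from $\overline{f}_\K[\S]$, which bounds every $g\in\SAM(f,\K)$ from above since $\SAM\subseteq\S$. The subadditivization update $\min_{X\subseteq S} f_{k-1}(X)+f_{k-1}(S\setminus X)$ and the monotonization update $\min_{T\supseteq S} f'_k(T)$ each preserve $g\le u$, by subadditivity and monotonicity of $g$ respectively.

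The only difference is minor. You apply the inductive hypothesis $g\le f_{k-1}$ directly, whereas the paper routes the inequality through $\overline{f}_\K[\SAM]$, which implicitly uses that this tight function is itself a \SAM-extension; your route avoids that dependence.
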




Once we learn that function $(f,\K)$ lies not only in $\S$ but also in $\SAM$, we can learn more about its values. This idea is reflected in $\S$-divergence and $\SAM$-divergence based on the \S-tight and \SAM-tight upper functions we defined. The following result holds for divergences with $l_1$-norm. 

\begin{restatable}{proposition}{gapSeparation}
\label{prop:gap-separation}
There is a $\SAM$ $(f,\K)$ such that $\gap_f(\SAM,\K) = 0$, while $\gap_f(\S,\K)$ grows at least exponentially in $n$.
\end{restatable}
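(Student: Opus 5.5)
We need a monotone subadditive $f$ and a set $\K$ such that the $\SAM$ bounds coincide everywhere, while the $\S$ bounds leave a gap summing (in $l_1$) to something exponential in $n$. The natural choice is $f(S)=1$ for all nonempty $S$ and $f(\emptyset)=0$, with $\K=\K_0$, i.e., only $f(\emptyset)$, $f(N)$ and the singletons are known. This $f$ is clearly monotone and subadditive, so $(f,\K)$ is \SAM-extendable.

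\textbf{Step 1: $\gap_f(\SAM,\K)=0$.} I use the \SAM-tight functions from \Cref{prop:sam-bounds}. For the upper function \eqref{eq:sam-upper-function}, the cover $\{N\}$ of any $S$ gives $\overline{f}_\K[\SAM](S)\le f(N)=1$. For the lower function, any nonempty $S$ contains a singleton $\{i\}\in\K$, so the first term gives $\underline{f}_\K[\SAM](S)\ge f(\{i\})=1$. Hence upper and lower agree on every nonempty set, and on $\emptyset$ both equal $0$. So the divergence is $0$. Alternatively, one can argue directly that monotonicity forces $1=f(\{i\})\le g(S)\le g(N)=1$.

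\textbf{Step 2: $\gap_f(\S,\K)$ is exponential.} I use \eqref{eq:s-upper-function} and the $\S$-lower function, both tight by \Cref{prop:S-bounds}. For $S\neq N$ nonempty, the only disjoint decomposition of $S$ into sets of $\K$ is into singletons, since $N\not\subseteq S$. Hence $\overline{f}_\K[\S](S)=|S|$.

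For the lower function, the candidate sets $T\in\K$ with $S\subseteq T$ are $T=S$ when $S$ is a singleton, and $T=N$. For $2\le|S|\le n-1$ only $T=N$ applies, so
\[
\underline{f}_\K[\S](S)=f(N)-\overline{f}_\K[\S](N\setminus S)=1-(n-|S|).
\]
This needs $N\setminus S\neq N$, which holds, and $N\setminus S\neq\emptyset$. The gap at $S$ is therefore $|S|-1+n-|S|=n-1$. Summing over the $2^n-n-2$ unknown sets gives
\[
\gap_f(\S,\K)=(n-1)(2^n-n-2),
\]
which grows at least exponentially in $n$.

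\textbf{Main check.} The step needing the most care is the evaluation of $\overline{f}_\K[\S](N\setminus S)$. Here $\K$ contains $N$ itself, but $N$ cannot appear in a disjoint decomposition of a proper subset. Consistency is already guaranteed, since $f$ itself is an $\S$-extension. If a larger gap is desired, one could instead scale the singletons, but $n$-linear times $2^n$ already suffices.
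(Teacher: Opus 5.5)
Your proof is correct and follows the paper's approach. You use the same instance ($\K=\K_0$, $f(N)=f(\{i\})=1$) and the same argument that monotonicity forces every \SAM-extension to equal $1$ on nonempty sets. The one difference is that you also evaluate the $\S$-lower function exactly, giving $\gap_f(\S,\K)=(n-1)(2^n-n-2)$. The paper instead uses only the gap between the two upper functions, $\sum_{s=2}^{n-1}\binom{n}{s}(s-1)\ge 2^n-n-2$, as a lower bound.
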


\subsection{Fractionally subadditive functions}
Fractionally subadditive (or \emph{XOS}) functions are defined as the maximum of a collection of additive set functions. This representation reduces storage from $2^n$ values down to $n \times k$ values, where $k$ is the number of additive functions. Moreover, any subadditive function can be approximated by an XOS function up to a logarithmic factor, further increasing the importance in applications~\cite{feige2006maximizing}.
\begin{definition}
    A set function $f\colon 2^N \to \R$ is XOS function, if there are additive set functions $a_1,\dots,a_s$ such that $f(S) = \max_{i=1,\dots,s}a_i(S)$.
    We denote such functions on ground set of size $n$ by $\XOS$.
\end{definition}

XOS functions are a subclass of subadditive functions, and when based on non-negative additive functions, they are also subadditive and monotone. We present an \XOS-tight upper function that refines the \S-tight bound, and under non-negativity, also improves upon the \SAM-tight bound. This result relies on a key property of XOS functions, explaining their alternative name: \textit{fractionally subadditive}.

\begin{restatable}{proposition}{xosCharacterization}
\label{prop:XOS-characterization}
A set function $f$ is XOS if and only if for every collection $\{\alpha_i,T_i\}_{i=1}^{k}$ with $\alpha_i > 0$ and $T_i \subseteq N$ such that for every $S \subseteq N$, it holds $\sum_{j \in T_i}\alpha_i \geq 1$ for all $j \in S$ and $f(S) \leq \sum_{i=1}^k \alpha_i f(T_i)$. 
\end{restatable}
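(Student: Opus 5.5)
We prove both directions of \Cref{prop:XOS-characterization}. Read precisely, the statement says: $f$ is XOS if and only if, for every $S\subseteq N$ and every collection $\{\alpha_i,T_i\}_{i=1}^k$ with $\alpha_i>0$ and $\sum_{i: j\in T_i}\alpha_i\ge 1$ for every $j\in S$ (a fractional cover of $S$), we have $f(S)\le \sum_i \alpha_i f(T_i)$. We will work with normalized functions, $f(\emptyset)=0$, and with nonnegative additive clauses. This is the setting of Feige's classical result, which we adapt, and it is the setting in which the paper uses XOS for completions.

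\emph{XOS implies fractional subadditivity.} Let $f=\max_\ell a_\ell$ and fix a fractional cover of $S$. Choose $\ell^*$ with $f(S)=a_{\ell^*}(S)$. Then
\[
f(S)=\sum_{j\in S}a_{\ell^*}(j)\le \sum_{j\in S}a_{\ell^*}(j)\sum_{i: j\in T_i}\alpha_i\le \sum_i\alpha_i\, a_{\ell^*}(T_i)\le\sum_i\alpha_i f(T_i).
\]
The first inequality uses the cover condition together with $a_{\ell^*}(j)\ge 0$. The second enlarges each sum from $S\cap T_i$ to $T_i$, again using nonnegativity.

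\emph{Fractional subadditivity implies XOS.} This is the main step. For each $S$ we construct an additive function $a_S$ with $a_S(S)=f(S)$ and $a_S(T)\le f(T)$ for all $T$; then $f=\max_S a_S$. To build $a_S$, consider the LP
\[
\min \sum_{T}\alpha_T f(T)\quad\text{s.t.}\quad \sum_{T\ni j}\alpha_T\ge 1\ \ (j\in S),\qquad \alpha\ge 0 .
\]
The hypothesis says its value is at least $f(S)$, and $\alpha_S=1$ shows the value is exactly $f(S)$. Its dual is
\[
\max \sum_{j\in S}x_j\quad\text{s.t.}\quad \sum_{j\in T\cap S}x_j\le f(T)\ \ \text{for all } T,\qquad x\ge 0 .
\]
By strong duality (both programs are feasible and bounded, using $f\ge 0$, which follows from the cover condition applied to singletons), there is a dual optimum $x$ with $\sum_{j\in S}x_j=f(S)$. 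Define $a_S(j)=x_j$ for $j\in S$ and $a_S(j)=0$ otherwise. Then $a_S(T)=\sum_{j\in T\cap S}x_j\le f(T)$ for every $T$, and $a_S(S)=f(S)$, as required. The only delicate points are checking LP feasibility and boundedness and the degenerate case $S=\emptyset$, which is handled by $f(\emptyset)=0$. Since the dual variables are nonnegative, the resulting clauses are nonnegative additive functions, which closes the loop with the first direction.
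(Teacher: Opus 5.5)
Your proof is correct and takes the same route the paper indicates: the forward direction is a direct computation from the definition, and the converse is Feige's LP-duality construction of a supporting additive clause $a_S$ for each $S$, which the paper only cites and you carry out in full. Your explicit restriction to nonnegative additive clauses is necessary, not merely convenient. Under the paper's unrestricted definition of XOS, the forward direction fails: take $f(S)=-|S|$ with $S=T_1=\{1\}$ and $\alpha_1=2$, which gives $-1\le -2$.
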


Following from Proposition~\ref{prop:XOS-characterization}, and based on the constructions of \S-tight upper function, one can derive the bound $\overline{f}_\K[\XOS](S) \coloneqq \min_{\{\alpha_i,T_i\}_{i=1}^k \in M(S)}\sum \alpha_if(T_i)$ for every $S \subseteq N$, where the set $M(S) = \{\{\alpha_i,T_i\}^k_{i=1} \mid T_i \in \K,\ \sum_{i: j \in T_i}\alpha_i \geq 1, \forall j \in S\}$.
\begin{restatable}{proposition}{xosBound}
\label{prop:xos-bound}
Function $\overline{f}_\K[\XOS]$
is \XOS-tight upper function.
\end{restatable}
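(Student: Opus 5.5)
We have to establish two facts: that $\overline{f}_\K[\XOS]$ bounds every \XOS-extension from above, and that for each $S\notin\K$ this bound is attained by some \XOS-extension of $(f,\K)$.

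\textbf{Upper bound.} This follows from Proposition~\ref{prop:XOS-characterization}. Let $g\in\XOS(f,\K)$ and $\{\alpha_i,T_i\}\in M(S)$. Each $T_i\in\K$, so $g(T_i)=f(T_i)$. The covering condition $\sum_{i:j\in T_i}\alpha_i\ge1$ holds for all $j\in S$, so fractional subadditivity of $g$ gives
\[
g(S)\le\sum_i\alpha_i g(T_i)=\sum_i\alpha_i f(T_i).
\]
Taking the minimum over $M(S)$ yields $g(S)\le\overline{f}_\K[\XOS](S)$.

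The minimum exists. Since $\K_0\subseteq\K$, the singletons give a feasible cover, so $M(S)\neq\emptyset$. Collapsing repeated sets $T_i$ does not change the objective, so we may index by $T\in\K$. The problem then becomes a finite linear program in the variables $(\alpha_T)_{T\in\K}\ge0$, with constraints $\sum_{T\ni j}\alpha_T\ge1$ for $j\in S$. It is bounded below because $f(S)$ is a lower bound for any extension. Hence an optimum is attained.

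\textbf{Tightness.} Fix $S\notin\K$. I would use LP duality. The dual of the covering LP is
\[
\max\ \sum_{j\in S}\beta_j \quad\text{s.t.}\quad \beta\ge0,\ \ \sum_{j\in T\cap S}\beta_j\le f(T)\ \ \forall T\in\K.
\]
Strong duality gives a dual optimum $\beta^*$ with $\sum_{j\in S}\beta^*_j=\overline{f}_\K[\XOS](S)$. Extend $\beta^*$ to an additive function $b$ on $N$ by choosing values $b_j$ for $j\notin S$ small enough (e.g.\ very negative, or $0$ in the non-negative setting) so that $b(T)\le f(T)$ for all $T\in\K$. This gives an additive clause $b$ with $b(S)=\overline{f}_\K[\XOS](S)$ that lies below $f$ on $\K$.

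Next take a fixed extension $h\in\XOS(f,\K)$, which exists by extendability, written as $h=\max_\ell a_\ell$. Define $\overline g=\max(h,b)$. This is XOS, being a maximum of additive functions. On $\K$ we have $b\le f=h$, so $\overline g=f$ there. At $S$ we have $\overline g(S)\ge b(S)=\overline{f}_\K[\XOS](S)$, and the upper bound already shown forces equality.

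\textbf{Main obstacle.} The delicate step is getting the dual constraint right. The primal sums only over $j\in S$, so the dual restricts $\beta$ to $S$ and gives constraints only on $T\cap S$. Two details need care.

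\begin{itemize}
\item The coordinates outside $S$ must be chosen so that $b(T)\le f(T)$ also for sets $T\in\K$ with $T\not\subseteq S$. Taking $b_j$ sufficiently negative for $j\notin S$ handles this in general XOS.
\item If the paper's XOS is meant with non-negative clauses, negative coordinates are not allowed, so the dual must instead be checked directly with $b_j=0$ off $S$. Then $b(T)=\beta^*(T\cap S)\le f(T)$ holds exactly by the dual constraints, so this case also works.
\end{itemize}

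The sign of $\beta$ also needs care. With $\alpha\ge0$ and a covering constraint "$\ge1$", the dual has $\beta\ge0$, which is consistent with the construction above.
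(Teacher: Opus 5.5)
Your argument is correct, but your tightness proof takes a different route from the paper's.

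\textbf{The paper's route.} It shows that $\overline{f}_\K[\XOS]$ is itself fractionally subadditive. Given a fractional cover $\{\alpha_i,S_i\}$ of $S$, it expands each $\overline{f}_\K[\XOS](S_i)$ by an optimal fractional cover from $\K$. The product weights $\alpha_i\beta_{ij}$ then form a fractional cover of $S$ by known sets. The hard direction of Proposition~\ref{prop:XOS-characterization} (Feige's LP-duality result) then makes $\overline{f}_\K[\XOS]$ an XOS function. So it is a single extension attaining the bound at every $S$ simultaneously, mirroring the \SAM{} proof.

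\textbf{Your route.} You apply LP duality directly to the covering LP defining $\overline{f}_\K[\XOS](S)$, and adjoin the dual optimum $b$ as one extra clause to a known extension $h$. This buys several things:
\begin{enumerate}
\item You need only the easy direction of the characterization.
\item You get an explicit clause representation of the tight extension.
\item You make explicit two points the paper leaves implicit: that the minimum is attained, and that the constructed function agrees with $f$ on $\K$. The paper's argument also needs $\overline{f}_\K[\XOS]=f$ on $\K$, which again uses extendability.
\end{enumerate}
Taking $\max\bigl(h,\max_{S\notin\K} b^S\bigr)$ also recovers the paper's stronger global statement that $\overline{f}_\K[\XOS]$ is itself an extension. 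In effect, you inline the duality argument that the paper outsources to the characterization.

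\textbf{A simplification.} Your first bullet is unnecessary. Setting $b_j=0$ off $S$ works under either sign convention, because the dual constraints already give $\beta^*(T\cap S)\le f(T)$ for every $T\in\K$. Moreover, the general-sign reading is not really available here. Covering-type fractional subadditivity fails for the XOS function $g(S)=-\lvert S\rvert$, so both the upper bound and the characterization require non-negative clauses.
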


We do not derive any special lower bound; however, one can, once again, rely on the construction of $\underline{f}_\K[\S]$, substituting the \SAM-tight upper function with the \XOS-tight upper function. 
To further understand the relation between $\SAM$ and $\XOS$, we characterize the set of $\SAM$ functions. To the best of or knowledge, this characterization is not presented anywhere else.
\begin{restatable}{proposition}{samCharacterization}
\label{prop:sam-characterization}
A set function $f \colon 2^N \to \R$ is subadditive monotone if and only if it holds for any $S_1,\dots,S_k$ such that $S \subseteq \bigcup_{i=1}^k S_i$, $f(S) \leq \sum_{i=1}^k f(S_i).$
\end{restatable}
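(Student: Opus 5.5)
The proof splits into the two implications of \Cref{prop:sam-characterization}, and in both directions I would use only the definitions of subadditivity and monotonicity.

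\emph{($\Leftarrow$).} Assume that $f(S)\le\sum_{i=1}^k f(S_i)$ whenever $S\subseteq\bigcup_i S_i$.
\begin{itemize}
\item \textbf{Subadditivity.} For disjoint $S,T$, take the cover $S_1=S$, $S_2=T$ of $S\cup T$. This gives $f(S\cup T)\le f(S)+f(T)$ immediately.
\item \textbf{Monotonicity.} For $S$ and $i\notin S$, take $k=1$ and $S_1=S\cup i$. Since $S\subseteq S_1$, we get $f(S)\le f(S\cup i)$.
\end{itemize}
One degenerate point needs a check: the case $S=\emptyset$ covered by the empty family. If $k=0$ is admitted, it forces $f(\emptyset)\le 0$. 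I would restrict to $k\ge1$, or note that $k\ge 1$ is all the argument uses.

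\emph{($\Rightarrow$).} Let $f$ be subadditive and monotone, and let $S\subseteq\bigcup_{i=1}^k S_i$. The plan is to disjointify the cover:
\begin{itemize}
\item Define $T_i \coloneqq (S\cap S_i)\setminus\bigcup_{j<i}S_j$. The $T_i$ are pairwise disjoint, their union is exactly $S$, and $T_i\subseteq S_i$.
\item Iterating the two-set subadditivity inequality, by induction on $k$, gives $f(S)\le\sum_i f(T_i)$.
\item Monotonicity upgrades this: applying $f(A)\le f(A\cup i)$ repeatedly yields $f(A)\le f(B)$ whenever $A\subseteq B$, so $f(T_i)\le f(S_i)$ and the claim follows.
\end{itemize}

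The main obstacle is empty pieces. Some $T_i$ may be empty, and the bound $f(S)\le\sum_i f(T_i)$ involves them. One option is simply to drop the empty $T_i$ from the decomposition, since the remaining nonempty ones still partition $S$. But then the right-hand side omits the terms $f(S_i)$ for those indices, and these may be negative, because nothing in the definition forces $f\ge0$.

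To handle this, note that subadditivity with $S=T=\emptyset$ gives $f(\emptyset)\ge 0$. Monotonicity then gives $f(S_i)\ge f(\emptyset)\ge 0$ for every $i$. So every omitted term is nonnegative, and adding them back only increases the right-hand side.

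Two edge cases remain. If $S=\emptyset$, the bound $f(\emptyset)\le f(S_1)$ follows directly from monotonicity, given $k\ge 1$. The same nonnegativity observation also handles repeated sets among the $S_i$, since any duplicate contributes a nonnegative extra term.
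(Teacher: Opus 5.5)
Your proof is correct and follows the same route as the paper. For $(\Leftarrow)$ you note that subadditivity and monotonicity are special cases of the covering condition, and for $(\Rightarrow)$ you partition $S$ into pieces $T_i\subseteq S_i$ and apply subadditivity, then monotonicity. Two small differences from the paper: it simply keeps the empty $T_i$, since subadditivity with $\emptyset$ and $f(\emptyset)\le f(S_i)$ already cover them, so your nonnegativity detour is valid but unnecessary; and your remark that $k\ge 1$ must be assumed is a real refinement the paper leaves implicit (the constant function $1$ is subadditive monotone but violates the $k=0$ case).
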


This characterization is equivalent to restricting to sets $S_1,\dots,S_k$ that satisfy $S_i \cap S_j = \emptyset$. This shows that the class differs from subadditive set functions only in the condition $\cup_i S_i \subseteq S$; for subadditive functions, the stronger condition $\cup_i S_i = S$ must hold. 
Moreover, similarly to the relation of \S and \SAM functions, preferring \XOS-tight over \SAM-tight upper function might result in a huge difference in the divergence.
\begin{restatable}{proposition}{xosGapLowerBound}
\label{prop:xos-gap-lower-bound}
There is an $\XOS$ $(f,\K)$ such that $\overline{f}_\K[\SAM] - \overline{f}_\K[\XOS] = 2^k$ for any $k \in \R_+$.
\end{restatable}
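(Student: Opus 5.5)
We will exhibit an explicit family of incomplete XOS functions in which the \SAM-tight upper bound at one set exceeds the \XOS-tight upper bound at that set by an arbitrarily large amount. Since both completion functions are tight by Propositions~\ref{prop:sam-bounds} and~\ref{prop:xos-bound}, it suffices to compare the two minimization formulas \eqref{eq:sam-upper-function} and $\overline{f}_\K[\XOS](S)=\min_{M(S)}\sum\alpha_i f(T_i)$ at a single set $S$. The gap comes from the gain of fractional covers over integral covers, so we scale a configuration where a fractional cover is much cheaper than any integral one.

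\textbf{Construction.} Take $N=\{1,2,3\}$ (or larger) and let $\K$ consist of $\K_0$ together with the three pairs $\{1,2\},\{2,3\},\{1,3\}$. Let $f$ be the additive function $f(S)=c\,|S|$ for $S\neq N$, adjusted to stay XOS. A cleaner choice is $f=\max(a_1,a_2,\dots)$ built so that:
\begin{itemize}
\item each singleton has a large value $L$;
\item each pair has value $c$ with $c\ll L$;
\item the set of interest is $S=N$, but $N\in\K_0$ is known.
\end{itemize}
So instead we place the comparison on a set $S\notin\K$. We use $n=4$, take $S=\{1,2,3\}$, and let $\K$ contain $\K_0$, the pairs of $\{1,2,3\}$, and $N$. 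Define $f$ as the XOS function $\max$ of additive functions:
\begin{itemize}
\item $a_0=(0,0,0,M)$;
\item $a_{ij}$ giving weight $c/2$ on $i$ and on $j$, for each pair $\{i,j\}\subseteq\{1,2,3\}$;
\item functions giving singleton $\{i\}$ value $L$, namely $L e_i$.
\end{itemize}
This yields $f(\{i\})=L$, $f(\{i,j\})=\max(c,L)$. This shows the singletons dominate, so the construction must be tuned so that pairs are cheap relative to singletons. That is impossible under monotonicity. The saving of fractional covers must therefore come from covering by pairs versus using a pair plus a pair: integral cost $2c$, fractional cost $\tfrac32 c$. The difference is then $c/2$, and choosing $c=2^{k+1}$ gives gap exactly $2^k$.

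\textbf{Key steps.}
\begin{enumerate}
\item Set $f(S)=\max_{\{i,j\}}a_{ij}(S)$ with $a_{ij}=\tfrac c2(e_i+e_j)$ for pairs in $\{1,2,3\}$, plus a component for element $4$ if $n=4$. Verify that $f$ is XOS, nonnegative, and has singleton values $c/2$ and pair values $c$.
\item Compute $\overline{f}_\K[\SAM](\{1,2,3\})$. Any integral cover of $\{1,2,3\}$ by known sets uses either two pairs (cost $2c$), a pair plus a singleton ($\tfrac32 c$), three singletons ($\tfrac32 c$), or $N$. We must therefore make singletons and $N$ expensive enough that the minimum is $2c$. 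This is the hard part, since singletons have value only $c/2$.
\end{enumerate}

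\textbf{Main obstacle.} The obstacle is that monotone XOS forces singletons to be cheap. I would fix this by working at scale: use the ground set $[m]$ with all $(m-1)$-subsets known, and target $S=[m]$ minus a dummy element. The fractional cover with weights $1/(m-1)$ then beats any integral cover, which needs two sets, while singleton covers cost about $m\cdot$(singleton value). To keep singletons expensive, I would use instead $f(T)=\min(|T|,\,\cdot)$-type concave values, $f(T)=\max$ of uniform additives. I would then compute both bounds explicitly as functions of a scale $c$, check that $\overline{f}_\K[\SAM](S)-\overline{f}_\K[\XOS](S)=\gamma c$ with $\gamma>0$, and set $c=2^k/\gamma$. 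Note that normalizability (Proposition~\ref{prop: gap in K}, part 3) lets any positive gap be rescaled to equal exactly $2^k$, so the proof reduces to finding one instance with a strictly positive gap.
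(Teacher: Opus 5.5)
\textbf{There is a genuine gap: you never exhibit an instance with a positive gap.}

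Your only fully specified construction has gap $0$ at $\{1,2,3\}$. It uses components $\frac c2(e_i+e_j)$, so $f(\{i\})=c/2$ and $f(\{i,j\})=c$. As you observe yourself, a pair plus a singleton, or three singletons, already cost $\frac32 c$, which equals the fractional value.

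Your explanation for this, that ``monotone XOS forces singletons to be cheap'', is incorrect. Monotonicity only gives $f(\{i\})\le f(\{i,j\})$, and equality is allowed.

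The fallback on $[m]$ does not rescue the argument. No $f$ is specified, and the setup is degenerate as written. If all $(m-1)$-subsets of $N=[m]$ are known, then the target $[m]$ minus a dummy element is itself in $\K$. Also $[m]=N\in\K_0$. So the two upper functions coincide at every candidate target.

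The scaling reduction at the end is fine, since both upper functions are positively homogeneous in $f$. Strictly, Proposition~\ref{prop: gap in K}(3) is stated for the divergence rather than for this difference, but the scaling follows directly from the two formulas. In any case, the reduction only hands the problem back to the step you did not complete.

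\textbf{The missing idea is to make singletons exactly as expensive as pairs.} The paper takes $N=\{1,2,3,4\}$, $\K=2^N$ minus all $3$-sets, and $f$ the maximum of $2^k e_i$ ($i\in N$) and $2^{k-1}\mathbf{1}$. The extra components $\frac12(e_i+e_j+e_l)$ are dominated. This gives $f(\{i\})=f(\{i,j\})=2^k$ and $f(N)=2^{k+1}$.

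Consequences for a $3$-set:
\begin{itemize}
\item Every integral cover by known sets costs at least $2^{k+1}$: two pairs, a pair plus a singleton, or $N$.
\item Putting weight $\frac12$ on its three pairs gives a fractional cover of cost $\frac32 2^k$.
\item No fractional cover is cheaper. No known set covers the needed elements at less than $2^{k-1}$ per element; only the pairs inside the $3$-set achieve $2^{k-1}$, while $N$ costs $\frac23 2^k$ per needed element.
\end{itemize}

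In your own setup the fix is one line. Add the components $c\,e_i$ for $i\in\{1,2,3\}$, plus one such as $2c\,e_4$ that forces $f(N)\ge 2c$. The pair and singleton values are then both $c$, the \SAM-bound at $\{1,2,3\}$ becomes $2c$, and the \XOS-bound is $\frac32 c$.

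Finally, the paper's own numbers give a difference of $2^{k-1}$, not $2^k$. Your normalization $c=2^{k+1}$ is therefore the right one for obtaining exactly $2^k$.
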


\subsection{SCMM functions}
SCMM functions decompose a submodular objective into simpler concave and additive components, retaining the “diminishing returns” property. They were motivated by intractability of general submodular minimization for large combinatorial problems arising in machine learning~\cite{Stobbe2010,Bilmes2017}.
\begin{definition}
    A set function $f\colon 2^N \to \R$ is \emph{SCMM} if $f(S) = \sum_{i}g_i(a_i(S)) + a_{\pm}(S)$ for every $S \subseteq N$,
    where $a_i \colon 2^N \to \R_+$ are nonnegative additive functions, $g_i \colon \left[0,a_i(N)\right] \to \R$ are non-decreasing concave functions with $\phi_i(0)=0$ and $a_{\pm} \colon 2^N \to \R$ is an additive function. We denote such functions on ground set of size $n$ by $\SCMM$.

\end{definition}

SCMM form a strict subset of submodular functions, thus also of XOS functions. In deriving upper and lower completions, we focus on \emph{symmetric submodular functions} and their slight generalization, which we call \emph{concave additive} and which were introduced in~\cite{Ahmed2011}.

\subsubsection{Symmetric submodular functions}
Cardinality-based symmetric submodular functions are studied because they capture the principle of diminishing returns while depending only on subset sizes. Their simple structure facilitates efficient algorithmic solutions in combinatorial optimization. Note that in the literature, the term \emph{symmetric submodular} can also refer to a different property where $f(S) = f(N \setminus S)$~\cite{Queyranne1998}.
\begin{definition}
    A set function $f \colon 2^N \to \mathbb{R}$ is \emph{symmetric submodular} if there is a monotone concave $g \colon [0,|N|] \to \mathbb{R}$ such that $f(S) = g(|S|)$. We denote such functions on ground set of size $n$ by $\SS$.
\end{definition}

The idea of establishing the \SS-tight lower and upper functions is to keep track of the known values of $g$ and to consider linear approximations of the function $g$. From concavity of $g$, we know that no value can lie below this approximation. In a similar manner, the upper bound on subset $S$ is determined a minimum of linear approximations given by pairs of subsets, both either larger, or smaller than $S$. Once again, from concavity, the value of $S$ cannot lie above these approximations. An illustration of this property is shown in \Cref{fig:bounds} in \Cref{app:ss-bounds}. Also, due to their technical nature, the analytical derivations of the bounds are also provided in \Cref{app:ss-bounds}.

\begin{restatable}{proposition}{ssBounds}
\label{prop:ss-bounds}
There exist explicit formulas for \SS-tight upper and lower functions.
\end{restatable}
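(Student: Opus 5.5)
Since $f(S)=g(|S|)$, the incomplete function $(f,\K)$ reduces to finitely many known points of $g$. Let $I=\{|T| : T\in\K\}\subseteq\{0,\dots,n\}$; it contains $0,1,n$ because $\K_0\subseteq\K$. Extendability forces $f$ to be constant on sets of equal cardinality, so $v_k:=g(k)$ is well defined for $k\in I$. Any $\SS$-extension $h$ is determined by a monotone concave $\tilde g$ on $[0,n]$ agreeing with $v$ on $I$. So it suffices to compute, for each $m\notin I$, the exact range of $\tilde g(m)$ over such $\tilde g$. The formulas are then $\underline f_\K(S)=L(|S|)$ and $\overline f_\K(S)=U(|S|)$.

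\textbf{Lower function.} For $m\notin I$, let $a<m<b$ be the nearest elements of $I$ on either side. These exist because $0,n\in I$. Set
\[
L(m)=v_a+\tfrac{m-a}{b-a}(v_b-v_a).
\]
Concavity of $\tilde g$ on $[a,b]$ gives $\tilde g(m)\ge L(m)$. For tightness, take the piecewise linear interpolant of $v$ over $I$. It is concave, since extendability means the slopes of consecutive chords are non-increasing. It is monotone, since its slopes are non-negative, by monotonicity of some extension. It attains $L(m)$ at $m$.

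\textbf{Upper function.} Concavity says that the chord through two known points extended outside their interval lies above $\tilde g$. Using pairs $a_1<a_2\le a$ to the left of $m$ and pairs $b\le b_1<b_2$ to the right, set
\[
U(m)=\min\Big\{\,v_{a_2}+\tfrac{m-a_2}{a_2-a_1}(v_{a_2}-v_{a_1}),\;\; v_{b_1}-\tfrac{b_1-m}{b_2-b_1}(v_{b_2}-v_{b_1}),\;\; v_b \Big\}.
\]
The last term $v_b$ comes from monotonicity. By concavity of chord slopes, only the adjacent pairs matter: $(a',a)$ with $a'$ the predecessor of $a$, and $(b,b')$ with $b'$ the successor of $b$. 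When no left pair exists ($a=0$ and no known point below), that term is dropped. The right chord $(b,b')$ can give a value below $v_b$ only if its slope is negative, which monotonicity excludes. So effectively
\[
U(m)=\min\{\text{left extrapolation},\, v_b\}.
\]
Upper validity of $U$ is immediate from the three-slope inequality for concave functions and from monotonicity.

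\textbf{Tightness of the upper bound (main obstacle).} We must build one monotone concave $\tilde g$ with $\tilde g(m)=U(m)$ that still matches every known value. Take the interpolant over $I\cup\{m\}$ with value $U(m)$ at $m$. Concavity only needs checking at the three nodes $a$, $m$, $b$:
\begin{itemize}
\item At $a$: the slope from $a'$ to $a$ must be at least the slope from $a$ to $m$. This holds because $U(m)$ is at most the left extrapolation.
\item At $b$: the slope from $m$ to $b$ must be at least the slope from $b$ to $b'$. This needs a case split. If the left extrapolation is the active term, the slope from $m$ to $b$ equals the slope of the chord $(a,b)$ minus a non-negative correction. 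One then has to check it still dominates the slope of $(b,b')$, which follows from the original interpolant being concave.
\item At $m$: this is automatic since $U(m)\ge L(m)$.
\end{itemize}
Monotonicity holds because $U(m)\le v_b$. Carrying out this slope bookkeeping, including the boundary cases where $a'$ or $b'$ does not exist, is the step I expect to be most delicate.
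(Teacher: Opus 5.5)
Your first display for $U(m)$ is fine, but the simplification after it has a sign error, and your tightness argument is then carried out for the wrong function. Extending the chord $(b,b')$ leftward to $m<b$ gives $v_b-(b-m)s$ with $s=(v_{b'}-v_b)/(b'-b)\ge 0$. This is \emph{at most} $v_b$, and strictly below it whenever $s>0$. So when $b$ has a successor, it is the monotonicity term $v_b$ that is redundant, and the right chord is the constraint that can bind.

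Your $U(m)=\min\{\text{left extrapolation},v_b\}$ is therefore a valid upper bound but not a tight one. Take $n=4$, known sizes $\{0,1,3,4\}$, and $g(0)=0$, $g(1)=1$, $g(3)=2$, $g(4)=\tfrac94$. The chord slopes are $1,\tfrac12,\tfrac14$, so this is extendable. Your formula gives $U(2)=\min\{2,2\}=2$. But any concave $\tilde g$ with $\tilde g(2)>\tfrac74$ has slope below $\tfrac14$ on $[2,3]$ followed by slope $\tfrac14$ on $[3,4]$, which concavity forbids. The true supremum is $\tfrac74$, attained by the interpolant with slopes $1,\tfrac34,\tfrac14,\tfrac14$.

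This is exactly where your bullet ``At $b$'' fails. Concavity of the original interpolant gives $\mathrm{slope}(a,b)\ge\mathrm{slope}(b,b')$. After subtracting the nonnegative correction, you cannot conclude $\mathrm{slope}(m,b)\ge\mathrm{slope}(b,b')$, because that inequality is equivalent to $U(m)\le v_b-(b-m)\,\mathrm{slope}(b,b')$, the very constraint you discarded.

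The repair is to keep both extrapolations: $U(m)=\min\{v_a+(m-a)\,\mathrm{slope}(a',a),\ v_b-(b-m)\,\mathrm{slope}(b,b')\}$. This is the paper's formula, with left pair $S_3,S_4$ and right pair $S_1,S_2$. Then:
concavity at $a$ and $b$ holds by construction;
concavity at $m$ follows from $U(m)\ge L(m)$, since each extrapolation dominates the chord $(a,b)$ on $[a,b]$ by concavity of the interpolant;
monotonicity follows from $\mathrm{slope}(m,b)\ge\mathrm{slope}(b,b')\ge 0$.

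Your $v_b$ term is still worth keeping for the boundary case $b=n$. There the paper's $S_2$ does not exist, and the left extrapolation alone can exceed $f(N)$. For example, sizes $\{0,1,4\}$ with values $0,1,\tfrac32$ give $2$ at $m=2$, whereas monotonicity caps the value at $\tfrac32$.
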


\subsubsection{Concave additive functions}

We conclude this section by introducing \emph{concave additive} functions, a slight generalization of symmetric submodular functions originally studied in~\cite{Ahmed2011}. These functions naturally arise in applications such as capital budgeting under uncertainty (capturing investor risk aversion), competitive facility location (modeling cannibalization effects), and combinatorial auctions (representing bidders' decreasing marginal valuations). Their special structure facilitates efficient solutions to large-scale discrete optimization problems.

\begin{definition}
A set function $f \colon 2^N \to \mathbb{R}$ is \emph{concave additive} if there exist a concave increasing $g \colon \mathbb{R} \to \mathbb{R}$ and an additive $a \colon 2^N \to \mathbb{R}$ such that $f(S) = g(a(S))$. We denote such functions on a ground set of size $n$ by \CA.
\end{definition}

Earlier works showed that these functions are submodular if and only if the additive function $a$ is either non-negative or non-positive~\cite{Ahmed2011}. Furthermore, they are monotone (and thus belong to our hierarchy) if $g$ is non-negative on $\mathbb{R}_+$. In practical scenarios, the additive component $a$ is often explicitly known or can be bounded based on problem specifics.

Consider an incomplete set function $(f,\K)$ with $f \in \CA$, and let there be known bounds $\underline{a}$ and $\overline{a}$ on the additive function $a$, where without loss of generality $\underline{a} \geq 0$. Lower and upper bounds on $f_\K$ can be constructed via linear interpolation using the known function values from $\K$. The idea behind these bounds closely follows the approach used for symmetric submodular functions, extending the interpolation concept to accommodate the more general additive structure. Explicit formulas for these interpolation-based bounds are somewhat technical and hence provided in the appendix.

\begin{restatable}{proposition}{caBounds}
\label{prop:ca-bounds}
There exist explicit formulas for \CA-tight lower and upper functions.
\end{restatable}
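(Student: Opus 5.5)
We will produce explicit formulas and prove tightness, mirroring the construction for \SS\ in Proposition~\ref{prop:ss-bounds}.

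\textbf{Reduction to concave interpolation.} Fix $(f,\K)$ with $f = g\circ a$, where $g$ is concave increasing and $\underline{a}\le a\le\overline{a}$ coordinatewise with $\underline{a}\ge 0$. Each $S\subseteq N$ then has an unknown abscissa $x_S = a(S)\in[\underline{a}(S),\overline{a}(S)]$. For known $T\in\K$ we also know the ordinate $f(T)$. A \CA-extension therefore amounts to two choices: an additive $a$ within the box, and a concave increasing $g$ passing through the points $(a(T), f(T))$ for $T\in\K$. We handle $S\notin\K$ as follows.

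\textbf{Upper function.} Since $g$ is increasing, $f(S)$ is largest when $a(S)$ is as large as the constraints permit. Concavity forbids $g$ from lying above the chord extensions. Concretely, for any $T_1,T_2\in\K$ with $a(T_1)<a(T_2)\le a(S)$, the value $g(a(S))$ is at most the linear extrapolation through $(a(T_1),f(T_1))$ and $(a(T_2),f(T_2))$. The symmetric statement holds for pairs lying above $a(S)$. The value is also bounded by $f(T)$ for every $T\in\K$ with $a(T)\ge a(S)$, by monotonicity. We define $\overline{f}_\K(S)$ as the maximum over admissible $a$ of the minimum of these extrapolations, with the abscissae substituted by the box bounds that are worst case for each term. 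The extrapolation slope is monotone in each endpoint, so the worst case sits at $\underline a$ or $\overline a$. This yields a closed formula of min-over-pairs type, exactly as in the \SS\ case where $a(S)=|S|$.

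\textbf{Lower function.} Dually, $g(a(S))$ is at least the chord between the nearest known points bracketing $a(S)$. We therefore take $\underline{f}_\K(S)$ as the maximum over pairs $T_1,T_2\in\K$ with $a(T_1)\le a(S)\le a(T_2)$ of the interpolated value, minimized over admissible abscissae. We also keep the monotone bound $\max_{T\in\K,\,T\subseteq S} f(T)$, which holds because $a\ge 0$.

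\textbf{Validity and tightness.} Validity follows directly from concavity and monotonicity, as sketched above. For tightness we exhibit extensions attaining each bound. For the upper bound, we fix the extremal $a$ and take $g$ to be the lower envelope of the extrapolation lines, i.e.\ the minimum of the affine pieces. This function is concave, passes through all known points, and equals $\overline f_\K(S)$ at $a(S)$. For the lower bound, we take $g$ to be the piecewise-linear interpolation of the known points (extended flat or by the last slope), with $a(S)$ placed at the extremal value. This $g$ is concave and increasing.

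The main obstacle is the coupling introduced by additivity of $a$. The abscissae $a(S)$ and $a(T)$ share coordinates, so we cannot set $a(S)$ extremal independently of the $a(T)$, $T\in\K$, while still matching $f(T)=g(a(T))$. Our first attempt is to assume that $a$ is known on $\K$, i.e.\ the box is tight there, as is often the case in practice. Under that assumption only the coordinates outside the known sets vary, and the optimization becomes a linear program over $a$ whose optimum is at a vertex of the box. If this assumption is too strong, we instead restrict the formulas to sets $S$ whose extremal $a(S)$ is consistent with feasibility of $\K$, and state tightness relative to that feasible region.
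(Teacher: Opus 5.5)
\textbf{There is a genuine gap, and it is the one you flag yourself.} The coupling through $a$ is never resolved, so you never get a formula that is both explicit and tight.

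Taking the maximum over admissible $a$ only restates the tight value as an optimization problem. Moreover, the admissible $a$ are not the whole box: they are those $a$ for which the points $(a(T),f(T))$, $T\in\K$, lie on one concave increasing curve. Evaluating each term at its own worst box endpoint is explicit, but it ignores exactly this constraint. Even the set of pairs you minimize over is defined through the unknown ordering of the abscissae.

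Here is a concrete failure. Let $n=4$, $\K=\K_0\cup\{\{1,2,3\}\}$, $f(\emptyset)=0$, $f(\{i\})=1$, $f(\{1,2,3\})=2$, $f(N)=\tfrac{9}{4}$, and $a_i\in[\tfrac{1}{2},\tfrac{3}{2}]$.
\begin{itemize}
\item If $a_i<a_j$, then $g(a_i)=g(a_j)=1$ and concavity would force $g\le 1$ on $[a_j,\infty)\ni a(N)$, contradicting $f(N)=\tfrac{9}{4}$. Hence $a_1=\dots=a_4=c$.
\item Consequently every \CA-extension satisfies $\tfrac{3}{2}\le f(\{1,2\})\le\tfrac{7}{4}$. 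The upper value is attained by the piecewise-linear $g$ through $(0,0),(c,1),(2c,\tfrac{7}{4}),(3c,2),(4c,\tfrac{9}{4})$.
\item Your binding term is the extrapolation $2-\tfrac{1}{4}\,a_3/a_4$ of the chord through $\{1,2,3\}$ and $N$. Its box worst case is $\tfrac{23}{12}$.
\item The remaining terms whose ordering is forced by inclusion have worst case at least $2$. So your formula returns at least $\tfrac{23}{12}>\tfrac{7}{4}$, which no extension attains.
\item If instead you let the box order $\{1\},\{2\}$ as $a(\{1\})<a(\{2\})$, that pair yields the flat value $1<\tfrac{3}{2}$, which is not even a valid bound.
\end{itemize}
For the same reasons, the claim that the problem is an LP with optimum at a box vertex is unsupported. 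The extrapolated values are ratios of affine functions of $a$, and the relevant feasible region is cut out by the concavity constraints of the data, not by the box.

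\textbf{Your first fallback does not rescue the general case.} Since $\K_0\subseteq\K$ contains every singleton, fixing $a$ on $\K$ fixes every coordinate $a_i$, hence $a$ on all of $2^N$. Nothing is left to vary and no LP arises. In that regime your argument is just the \SS\ argument with $|S|$ replaced by $a(S)$. That is essentially what the paper's sketch does: it invokes the \SS\ interpolation bounds built from the bracketing known sets $\underline{S},\overline{S}$ and $S_1,\dots,S_4$, and argues by concavity along chains of known sets, without treating the box separately.

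Your validity argument is fine in that regime, but the upper witness must be corrected. The minimum of the affine pieces through consecutive known points is the interpolant itself, and at $a(S)$ it gives the lower value. Including non-consecutive pairs instead pushes the envelope below known points. What you need is the interpolant of the known points together with $(a(S),\overline{f}_\K(S))$, i.e.\ drop exactly the piece spanning $a(S)$. This is the paper's combination of red and green lines on the segment of $S$.

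Your second fallback, restricting to sets $S$ whose extremal $a(S)$ is feasible, changes the statement, since tightness is required for every $S\notin\K$.
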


\section{Computing Optimal Subsets to Query}\label{sec:optimism-bias}

With tighter upper and lower completions at hand, we now present three methods to approximately solve the optimal value querying problems. Each method is designed to incorporate these bounds directly; plugging in a specific pair of bounds yields a version tailored to the class those bounds represent. 
Further details are in Appendix~\ref{app: alg specs}.

\subsection{Offline algorithms}
For the offline problem we introduce two methods -- {\sc Offline Optimal} and {\sc Offline Greedy}. Both rely on the same input: a probability distribution $\valueDistribution$ over a given class of set functions we derived bounds for, number of subsets we can query $t$, and number of samples $\kappa$ to estimate the expected divergence. 
The {\sc Offline Optimal} method aims to find a subset of size $t$ from $2^N \setminus \K_0$ that minimizes the expected divergence $\mathbb{E}[\gap_f]$ under $\valueDistribution$. For each such candidate subset, the algorithm approximates the expectation by drawing $\kappa$ samples from $\valueDistribution$, calculating the corresponding divergence $\gap_f$ for each sample, and averaging the results to obtain the mean. The subset with the smallest estimated expected divergence is selected as the solution. This method is formalized as Algorithm~\ref{algo: offline optimal} in Appendix~\ref{app:algo:ofop}. 
The optimality of this approach depends on the quality of the estimation of the divergence, which, to our advantage, improves exponentially with the number of samples.

\begin{restatable}{proposition}{offlineBoundFailProb}
\label{prop:offlineFailProb}
	Let $ \valueDistribution $ be a distribution of bounded set functions with compact support.
    Then the probability that the solution returned by \offlineoptimal{} is not an optimal solution of the offline principal's problem is in $\mathcal{O}_{\valueDistribution, n}( e^{-\kappa}) $.
\end{restatable}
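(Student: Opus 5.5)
The approach is a Hoeffding bound combined with a union bound over the finitely many candidate query sets. The candidate family is $\mathcal{C}_t = \{\K \subseteq 2^N\setminus \K_0 : |\K| = t\}$, whose size $M \le \binom{2^n}{t}$ depends only on $n$ (and $t$). For each $\K \in \mathcal{C}_t$, write $\mu(\K) = \mathbb{E}_{f\sim\valueDistribution}[\gap_f(\K)]$ and let $\hat\mu(\K)$ be the empirical average over the $\kappa$ samples.

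The first step is to show that $\gap_f(\K)$ is a bounded random variable. Since $\valueDistribution$ has compact support of bounded set functions, there is a constant $B$ with $|f(S)| \le B$ for all $S$ and all $f \in \supp\valueDistribution$. The upper and lower completions derived in \Cref{sec:bounds} are built from finitely many sums and differences of known values of $f$; for instance, \eqref{eq:s-upper-function} is bounded by $\sum_{i\in S} f(\{i\})$, and $\underline{f}_\K$ is bounded below using the singletons and $\K_0$. Hence $\overline{f}_\K - \underline{f}_\K$ is bounded entrywise by some $c_n B$. By norm equivalence on $\R^{2^n}$, $0 \le \gap_f(\K) \le D$ for a constant $D = D(\valueDistribution, n)$. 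Alternatively, one can simply use that $\gap_f(\K) \le \gap_f(\emptyset)$ (monotonicity in Proposition~\ref{prop: gap in K}) and bound $\gap_f(\emptyset)$ over the compact support.

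The second step handles the optimality gap. If every minimizer of $\mu$ over $\mathcal{C}_t$ is tied, the returned set is optimal whenever it attains the minimum. Otherwise, define the positive gap $\delta = \min\{\mu(\K) - \mu^* : \mu(\K) > \mu^*\}$, where $\mu^* = \min \mu$. This $\delta$ is positive because $\mathcal{C}_t$ is finite, and it depends only on $\valueDistribution$ and $n$. If $|\hat\mu(\K) - \mu(\K)| < \delta/2$ for all $\K \in \mathcal{C}_t$, then any empirical argmin has $\mu$-value within $\delta$ of $\mu^*$, hence equal to $\mu^*$, and is therefore an optimal solution of the offline principal's problem.

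The final step applies Hoeffding's inequality to each $\K$:
\[
\Pr\!\left[|\hat\mu(\K)-\mu(\K)| \ge \delta/2\right] \le 2\exp\!\left(-\kappa\delta^2/(2D^2)\right).
\]
A union bound over the $M$ candidates bounds the failure probability by $2M e^{-c\kappa}$ with $c = \delta^2/(2D^2)$, which lies in $\mathcal{O}_{\valueDistribution,n}(e^{-c\kappa})$. To reach the stated rate $e^{-\kappa}$, the constant $c$ is absorbed into the hidden dependence on $\valueDistribution$ and $n$ (as is standard in such notation). I expect the main subtlety to be this: the exponent constant genuinely depends on the gap $\delta$, so the statement should be read with those constants suppressed. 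A second subtlety is making the boundedness of the completions rigorous uniformly over the support, including extendability, which holds since $f$ itself is an extension. The samples are assumed i.i.d., and they may be shared across candidates because the union bound does not require independence between candidates.
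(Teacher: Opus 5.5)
Your proof is essentially the paper's. Both use Hoeffding for each candidate family, a union bound over the finitely many (at most $2^{2^n}$) families of size $t$, and the smallest positive optimality gap $\delta$ to guarantee that the empirical minimizer is exactly optimal. The paper compares each suboptimal family against a fixed optimum through the midpoint $m$, where you use a uniform $\delta/2$-deviation event; this difference is inessential, and your $D^2$ in the Hoeffding exponent is correct where the paper's computation has $M$ instead of $M^2$.

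The subtlety you flag at the end is real and is shared by the paper's proof, but your resolution of it is not valid. A bound of the form $C e^{-c\kappa}$ with $c=\delta^2/(2D^2)$ is not in $\mathcal{O}_{\valueDistribution,n}(e^{-\kappa})$ when $c<1$, because the $\mathcal{O}$ notation hides only a multiplicative constant, not a constant in the exponent. What both arguments actually establish is a failure probability of $e^{-\Omega_{\valueDistribution,n}(\kappa)}$.
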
 
The {\sc Offline Greedy} method is then a computationally simpler alternative to {\sc Offline Optimal}. At each step, this method selects the next subset $S_t$ that, when added to the existing trajectory $\left\{ S_i \right\}_{i=1}^{t-1}$, minimizes the expected divergence. Again, the expected divergence is approximated by drawing $\kappa$ samples from the distribution $\valueDistribution$, calculating the corresponding divergences, and averaging the results. This approach iteratively builds the solution by greedily selecting the subset that minimizes the expected divergence at each step, hence it cannot perform better than {\sc Offline Optimal}. It is formalized as Algorithm~\ref{algo: offline greedy} in Appendix~\ref{app:algo:ofgr}. We have the following complexity result:
\begin{restatable}{proposition}{timeComplexities}
\label{prop:timecomplex}
Let $T_{\gap}\!(n)$ be the time of computing the divergence.
In our case, $ T_\gap\!(n) $ is $ \mathcal O(2^{2n}) $ in the standard model and $ \mathcal O(n) $ in a parallel model with $\Omega(2^n)$ processors.
Then, in the standard model,
    \offlineoptimal{} $\in\Theta\!\big(\kappa \binom{2^n\!-\!n\!-\!2}{t} T_{\gap}\!(n)\big)$ and 
    \offlinegreedy{} $\in\Theta\!\big(\kappa (2^n\!-\!n) t T_{\gap}\!(n)\big)$.
    In a parallel model with $ \Omega\!\left(2^{3n}\right) $ processors,
    \offlineoptimal{} $\in\mathcal{O}\!\big(\kappa \binom{2^n\!-\!n\!-\!2}{t} T_{\gap}\!(n)\big)$ 
    and 
    \offlinegreedy{} $\in\mathcal{O}\!\left(t T_\gap\!(n)\kappa\right)$.
\end{restatable}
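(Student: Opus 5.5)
The proof is a direct operation count over the loop structures of Algorithms~\ref{algo: offline optimal} and~\ref{algo: offline greedy}, combined with a separate bound on $T_\gap(n)$. I would first justify the claimed cost of one divergence evaluation, then count how many evaluations each algorithm makes, and finally redo both counts in the parallel model.

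\emph{Step 1: $T_\gap(n)$.} A divergence evaluation needs the upper function on all $2^n$ subsets, then the lower function, then a norm over $2^n$ entries. For the upper functions (e.g.\ \eqref{eq:s-upper-function} and \eqref{eq:sam-upper-function}), I would use the dynamic program that processes subsets in order of cardinality. For a subset $S$, it sets $\overline f(S)=f(S)$ if $S\in\K$, and otherwise takes $\min_{T\subsetneq S}\big(\overline f(T)+\overline f(S\setminus T)\big)$, or the covering variant for $\SAM$. Each subset inspects at most $2^n$ candidates, which gives $\mathcal O(2^{2n})$ work. The lower function has the same cost, since each $S$ maximizes over at most $2^n$ sets $T\in\K$. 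In the parallel model I would assign one processor to each subset. Processing by cardinality level $|S|=1,\dots,n$ gives $n$ rounds, and in round $m$ subset $S$ combines values already computed at lower levels. The point that needs care is that one processor must finish each round in $\mathcal O(1)$ time. To get this, I would restrict the recursion to single-element splits $S\setminus\{i\}$ combined with the precomputed values, so the round reduces to a min over $\mathcal O(1)$ candidates plus the final norm. Alternatively, I would borrow extra processors and charge the min-reductions to them. I would state this as the modelling assumption behind ``$\mathcal O(n)$ with $\Omega(2^n)$ processors''; this is the main obstacle, and I would make the level-by-level schedule explicit.

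\emph{Step 2: sequential counts.} \offlineoptimal{} enumerates every $t$-subset of $2^N\setminus\K_0$, and there are exactly $\binom{2^n-n-2}{t}$ of these. For each candidate it draws $\kappa$ samples and evaluates $\gap_f$ once per sample. Averaging and keeping the running minimum cost $\mathcal O(\kappa)$ per candidate, which is dominated. Since the loop always runs in full, this gives $\Theta\big(\kappa\binom{2^n-n-2}{t}T_\gap(n)\big)$. \offlinegreedy{} runs $t$ rounds. In each round it tries every remaining set, of which there are $2^n-n-2-(i-1)=\Theta(2^n-n)$ as long as $t\ll 2^n$, and each trial costs $\kappa$ divergence evaluations. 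This gives $\Theta\big(\kappa(2^n-n)tT_\gap(n)\big)$. The $\Theta$ is justified because every evaluation is performed unconditionally.

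\emph{Step 3: parallel counts.} In \offlinegreedy{} the $\le 2^n$ candidates of a round are independent. Each candidate's divergence evaluation uses $2^n$ processors, so a round with $\Omega(2^{3n})$ processors leaves $2^n$ processors per candidate-evaluation even after reserving further factors for the reduction. I would evaluate the candidates in parallel, run the $\kappa$ samples sequentially (matching the stated bound), and finish with an argmin over $2^n$ values in $\mathcal O(n)$ tree depth, which is absorbed into $T_\gap(n)$. This gives $\mathcal O(t\,\kappa\,T_\gap(n))$, because the $t$ rounds are inherently sequential. For \offlineoptimal{} I would only claim the trivial upper bound: the sequential schedule is valid in the parallel model, so its running time $\mathcal O\big(\kappa\binom{2^n-n-2}{t}T_\gap(n)\big)$ still holds, now with the parallel $T_\gap(n)$ of Step 1 substituted.
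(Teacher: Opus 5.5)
Your operation counts in Steps 2 and 3, and the sequential bound $T_\gap(n)=\mathcal O(2^{2n})$, follow the paper's argument. The paper also multiplies the number of candidate families by $\kappa$ divergence evaluations, reuses the sequential schedule for \offlineoptimal{} in the parallel model, and evaluates all $\mathcal O(2^n)$ candidates of a greedy round at once. Your caveat $t\ll 2^n$ is not needed, since $\sum_{i=1}^{t}(M-i+1)\ge tM/2$ for all $t\le M$.

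The genuine gap is the parallel bound $T_\gap(n)=\mathcal O(n)$: the mechanism you propose there fails.

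\emph{Single-element splits compute the wrong function.} Restricting the recursion to splits $S\setminus\{i\}$ only sees decompositions of $S$ in which at most one block has more than one element. For $\S$ it returns $\min_{K\in\K,\,K\subseteq S}\bigl(f(K)+\sum_{i\in S\setminus K}f(\{i\})\bigr)$. Take $n=5$, the coverage function with $X_1=X_2=\{a\}$, $X_3=X_4=\{b\}$, $X_5=\{c\}$, and $\K=\K_0\cup\{\{1,2\},\{3,4\}\}$. Then $\overline{f}_\K[\S](\{1,2,3,4\})=f(\{1,2\})+f(\{3,4\})=2$, and likewise for $\SAM$, whereas single-element splits give $3$. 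So you would be timing a different, looser divergence.

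\emph{The schedule is incomplete even granting the shortcut.} Each round is a min over $|S|$ candidates, not $\mathcal O(1)$. The schedule also never treats the lower function, where $\underline{f}(S)$ is a max over up to $2^{n-|S|}$ supersets in $\K$.

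\emph{The root cause is the processor budget.} The correct subset recursion performs $\sum_S 2^{|S|}=3^n$ comparisons, so on $2^n$ processors it needs depth at least $(3/2)^n$. The single-element shortcut lowers the work only by computing the wrong function.

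Your fallback of borrowing processors for the reductions does not rescue a cardinality-level schedule either. With $n$ levels, each needing an $\mathcal O(n)$-depth min over up to $2^n$ terms, the cost is $\mathcal O(n^2)$.

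The paper argues in one shot instead. All candidate terms for all subsets are evaluated simultaneously, followed by one logarithmic-depth min per subset and one logarithmic-depth sum, giving $\mathcal O(\log 2^n)=\mathcal O(n)$. This implicitly allots about $2^{2n}$ processors per divergence evaluation. That is exactly what the count $2^{3n}/2^n$ behind its \offlinegreedy{} bound presupposes; your Step 3 allots only $2^n$ per candidate.

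The paper's argument is itself informal, because $\overline{f}(S)$ depends on values at smaller sets. To make your Step 1 sound, do one of the following:
\begin{enumerate}
\item State the one-shot accounting explicitly as the modelling assumption behind the parallel bound.
\item Use a schedule with few sequential rounds. For $\S$, start from $g_0=f$ on $\K$ and $+\infty$ elsewhere, and run $\lceil\log_2 n\rceil$ rounds of min-plus doubling, $g_{k+1}(S)=\min\bigl\{g_k(S),\,\min_{\emptyset\neq T\subsetneq S}\bigl(g_k(T)+g_k(S\setminus T)\bigr)\bigr\}$, on $2^{2n}$ processors. This costs an extra $\log n$ factor with binary-tree minima.
\end{enumerate}
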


A natural question is under which conditions the local greedy search yields a ``good'' solution. 
It is known that, if the optimized function is supermodular, the locally optimal steps are guaranteed to yield a $(1-1/e)$-approximation of the global optimum~\cite[Proposition 3.4]{Nemhauser1978}.
As long as the size of the ground set is sufficiently small, the divergence for \S and \SS are among these functions:

\begin{restatable}{proposition}{smDivergenceOnFourPlayers}
\label{prop:SM-of-divergence-on-4-players}
Let $n\le4$ and $\C\in\{\S,\SS\}$. If $f\in\C$, then the $\C$‑divergence $\gap_f$ with $l_1$‑norm is supermodular.
\end{restatable}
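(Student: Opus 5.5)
The plan is to reduce supermodularity of $\gap_f$, viewed as a function of the query set $\K$, to supermodularity of a single-set gap. For the $l_1$-norm we have $\gap_f(\K)=\sum_{S\subseteq N}\bigl(\overline{f}_{\K_0\cup\K}(S)-\underline{f}_{\K_0\cup\K}(S)\bigr)$. A sum of supermodular functions is supermodular, so it suffices to show that for every fixed $S$ the maps $\K\mapsto \overline{f}_{\K_0\cup\K}(S)$ and $\K\mapsto -\underline{f}_{\K_0\cup\K}(S)$ are supermodular. Equivalently, their marginals $\K\mapsto h(\K\cup\{A\})-h(\K)$ must be non-decreasing in $\K$.

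The basic tool is the following lemma. Let $w_a$ be real numbers indexed by ``options'' $a$. Each option is either always available or is activated by a single query $A_a$. Define $h(\K)=\min\{w_a : a \text{ available under } \K\}$. Then $h$ is supermodular. To see this, adding the query $A$ changes $h$ by $-(h(\K)-m_A)^+$, where $m_A$ is the minimum weight of the options activated by $A$. Since $h$ is non-increasing in $\K$, this marginal is non-decreasing in $\K$. Symmetrically, the negative of a maximum over singly-activated options is supermodular.

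\textbf{Case $\S$.} I would inspect the formulas \eqref{eq:s-upper-function} and $\underline{f}_\K[\S]$ and check that, for $n\le 4$, every term in the min and the max depends on at most one queried (non-$\K_0$) set. For $n\le 3$ the only unknown sets are pairs $\{i,j\}$. Here $\overline f(\{i,j\})=\min\bigl(f(i)+f(j),\,f(\{i,j\})\text{ if queried}\bigr)$ and $\underline f(\{i,j\})=f(N)-f(k)$. Each term involves at most one query, so the lemma applies directly; in fact the resulting gap is modular.

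For $n=4$ the unknown sets are the six pairs and the four triples, and I would go through them one by one:
\begin{itemize}
\item The upper value of a triple is a minimum over partitions into known sets. Any such partition contains at most one non-singleton (a pair or the triple itself), so each option is activated by a single query.
\item The lower value of a triple uses only $T=N$, giving $f(N)-f(i)$, which is always available.
\item The lower value of a pair $S$ is a maximum over two kinds of term. The first is $f(T)-f(j)$ for a queried triple $T\supseteq S$, which needs one query. The second is $f(N)-\overline f(N\setminus S)$. Since $\overline f(N\setminus S)$ is itself a minimum over singly-activated options, this term equals $\max\bigl(f(N)-f(k)-f(l),\ f(N)-f(N\setminus S)\bigr)$, again singly activated.
\end{itemize}
The lemma then finishes this case.

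\textbf{Case $\SS$.} Here the values depend only on which cardinalities have been revealed. For $n\le3$ at most the size $2$ is unknown, so the claim is trivial. For $n=4$ write $\gap_f(\K)=h(c(\K))$, where $c(\K)\subseteq\{2,3\}$ is the set of revealed sizes and $h$ is defined on $2^{\{2,3\}}$. I would check that $h\circ c$ is supermodular whenever $h$ is non-increasing and supermodular. Take the marginal of adding a query $A$ with $|A|=a$.
\begin{itemize}
\item If $a\in c(\K)$, the marginal is $0$ at $\K$ and at every superset of $\K$.
\item If $a$ is newly revealed at $\K$ but already revealed at some $\K'\supseteq\K$, the marginal at $\K$ is $\le 0$ by monotonicity, which is below the marginal $0$ at $\K'$.
\item Otherwise, the comparison reduces to supermodularity of $h$.
\end{itemize}

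It therefore remains to verify $h(\emptyset)+h(\{2,3\})\ge h(\{2\})+h(\{3\})$, where $h(\{2,3\})=0$. I would do this from the explicit \SS-tight formulas of Proposition~\ref{prop:ss-bounds} with $g(0),g(1),g(4)$ known, using concave interpolation. Revealing $g(2)$ shrinks the interval for $g(3)$ and vice versa. I expect that direct computation shows the intervals with nothing revealed dominate the sum of the residual gaps.

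This last inequality is the step I expect to be the main obstacle. It requires handling the piecewise formulas, with cases depending on which secant lines are active, and it is the one place where the specific concave geometry enters. The same lemma-style argument would be my first attempt: the upper envelope is a minimum of secant extrapolations and the lower envelope is a maximum of chord interpolations, each determined by a pair of known sizes, one of which may be newly queried.
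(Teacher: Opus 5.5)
Your treatment of $\S$ is correct, and it takes a different route from the paper.

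\textbf{The $\S$ case.} The paper proves the pointwise condition \eqref{eq:SM-stronger-constraint} for each $T$ by a case analysis on how $Z$ sits relative to $T$. In each case it uses Lemmas~\ref{lem:change-of-bounds} and~\ref{lem:unaffected-bounds} to freeze one of the two bounds, and then appeals to monotonicity (Lemma~\ref{lem:bound-funcs-monotone}). You instead treat $\overline{f}_\K(T)$ and $-\underline{f}_\K(T)$ separately. For $n\le 4$ each is an extremum over options activated by at most one query, so your min-lemma applies uniformly. This gains two things.
\begin{enumerate}
\item It supplies the structural fact that monotonicity alone cannot give. Take $S,Z$ both pairs inside a triple $T$. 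Let $w_0$ be the all-singleton partition value, and $w_S,w_Z$ the values of the partitions using $S$, resp.\ $Z$. The required inequality is $\min(w_0,w_S,w_Z)-\min(w_0,w_S)\ge\min(w_0,w_Z)-w_0$. This is exactly your lemma, and it is not a consequence of monotonicity.
\item It explains the threshold $n\le 4$. For $n\ge5$, partitioning a $4$-set into two queried pairs is an option that needs two queries. That complementarity is precisely what Proposition~\ref{prop:almost-nothing} exploits.
\end{enumerate}
One small omission: in the lower bounds you should also list the option ``$T$ itself is queried''. It is singly activated, so nothing changes.

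\textbf{The $\SS$ case.} Your reduction to $h$ on $2^{\{2,3\}}$ is correct, but as written the proof is incomplete. You leave $h(\emptyset)\ge h(\{2\})+h(\{3\})$ as something you merely expect to hold. No piecewise secant computation is needed; the inequality follows in one line.
\begin{itemize}
\item Write $h(C)=G_2(C)+G_3(C)$, where $G_s(C)=\sum_{|T|=s}\bigl(\overline{f}(T)-\underline{f}(T)\bigr)$ when the sizes in $C$ are revealed.
\item Revealing size $s$ makes $G_s=0$.
\item The bounds are tight, so revealing a size only shrinks the set of extensions, and every remaining interval can only narrow. Hence $G_3(\{2\})\le G_3(\emptyset)$ and $G_2(\{3\})\le G_2(\emptyset)$.
\item Adding these gives $h(\{2\})+h(\{3\})=G_3(\{2\})+G_2(\{3\})\le h(\emptyset)$.
\end{itemize}
Equivalently, for each $T$ with $|T|=2$ the per-set gap takes the values $W,0,W',0$ on $\emptyset,\{2\},\{3\},\{2,3\}$, so supermodularity is just $W\ge W'$. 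This is the paper's pointwise condition specialized to $\SS$. The paper itself only asserts that this case follows ``by the same argument''. So once this step is filled in, your $\SS$ argument is more explicit than the paper's.
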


The primary reason this result holds is that, for  $n \leq 4$, there are relatively few supermodularity conditions to consider, which, in case of \S-divergence are not at all affected by revealing coalitions and in case of \SS-divergence, these conditions are easy to analyze. We believe a similar conclusion can be drawn for the other function classes and their bounding functions discussed in the previous section. However, as a consequence of a known result from cooperative game theory~\cite{uradnik2024reducing}:

\begin{restatable}{proposition}{almostNothing}
\label{prop:almost-nothing}
Let $n\geq 5$ and $f \in \S$. If there exist $i,j,k,l \in N$ such that $g(ij) \le g(jk) \le g(kl) \leq 0$, where $g(S) = f(S) - \sum_{i\in S}f(\{i\})$, and $\left(2^{n-3}-n+2\right)g(kl) < g(ij)$, then the \S-divergence with $l_1$-norm is not supermodular.
\end{restatable}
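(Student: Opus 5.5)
The plan is to exhibit an explicit violation of the supermodularity inequality
\[
\gap_f(\K\cup\{A\}) + \gap_f(\K\cup\{B\}) \le \gap_f(\K) + \gap_f(\K\cup\{A,B\})
\]
for a well-chosen base collection $\K$ and two pairs $A,B$. The natural choice, suggested by the chain of pairs $ij, jk, kl$ in the hypothesis, is $\K=\{\{j,k\}\}$, $A=\{i,j\}$, $B=\{k,l\}$.

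First, I use the normalization property of Proposition~\ref{prop: gap in K}, with $\alpha=1$ and $\beta_i=-f(\{i\})$, to replace $f$ by $g$. This leaves the divergence unchanged. After normalization all singletons have value $0$, $g(\emptyset)=0$, and $g(N)$ is a fixed known number. With only $\K_0$ and pairs known, the \S-tight upper function \eqref{eq:s-upper-function} is easy to describe. For $S\ne N$, $\overline{g}(S)$ is the minimum, over families of pairwise disjoint known pairs contained in $S$, of the sum of their $g$-values, and it is $0$ if no known pair lies in $S$. The \S-tight lower function is determined by the known supersets $T\in\K$ of $S$, which here are only $N$ and the known pairs. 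It is therefore essentially $\underline{g}(S)=g(N)-\overline{g}(N\setminus S)$ for $S\notin\K$. Hence every change in an upper value $\overline{g}(R)$ propagates to the lower value of $N\setminus R$, and the $l_1$-divergence becomes an explicit sum over sets of terms of the form $\overline{g}(S)+\overline{g}(N\setminus S)-g(N)$.

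Next I tabulate $\overline{g}(S)$ under the four collections $\K$, $\K\cup\{A\}$, $\K\cup\{B\}$ and $\K\cup\{A,B\}$, using $g(ij)\le g(jk)\le g(kl)\le 0$.
\begin{itemize}
\item For $S\supseteq\{i,j,k,l\}$: the value is $g(jk)$ under $\K$, $g(ij)$ under $\K\cup\{A\}$, $g(jk)$ under $\K\cup\{B\}$, and $g(ij)+g(kl)$ under $\K\cup\{A,B\}$. The joint reduction therefore exceeds the sum of the individual reductions by $-g(kl)>0$.
\item For the remaining configurations, namely sets containing $ij$ but not $kl$, sets containing $kl$ but not $ij$, and sets containing $jk$ alone, the effects are either additive or go the other way.
\end{itemize}
The sets of the first kind number $2^{n-4}$, excluding $N$ itself. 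Each contributes $-g(kl)$ to the violation both through its own upper value and through the lower value of its complement, so the violating mass is of order $2^{n-3}\cdot(-g(kl))$.

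The main obstacle is the bookkeeping of the compensating terms. Some sets gain less jointly than separately. For instance, $\overline{g}$ on sets containing $\{j,k,l\}$ but not $i$ moves from $g(jk)$ to $\min(g(jk),g(kl))=g(jk)$, and the overlapping pairs $ij$ and $jk$ cannot both be used. Small sets near $N$ and their complements, together with the known-set exclusions, introduce boundary terms whose number I expect to be linear in $n$; this should be where the correction $-n+2$ comes from. I would bound the total supermodular slack from these sets by a multiple of $-g(ij)$. Requiring the violation to dominate it then gives the condition $(2^{n-3}-n+2)\,g(kl)<g(ij)$, that is, $(2^{n-3}-n+2)(-g(kl)) > -g(ij)$.

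Finally, $n\ge5$ is used to ensure there is room for at least one element outside $\{i,j,k,l\}$. This guarantees that the counted families are nonempty, that $N$ differs from the sets involved, and that the resulting inequality is strict.
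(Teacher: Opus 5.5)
Your witness is the paper's ($\K=\K_0\cup\{\{j,k\}\}$, $A=\{i,j\}$, $B=\{k,l\}$, after normalizing to $g$), and your table for supersets of $\{i,j,k,l\}$ is correct. The gap is that the decisive step, the accounting of the compensating terms, is never done. You read the final inequality off the statement (``this should be where the correction $-n+2$ comes from'') instead of deriving it, and your guess about its shape is wrong.

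Here is the accounting carried out. Every $T$ unknown in all four collections has width $\overline{g}(T)+\overline{g}(N\setminus T)-g(N)$. For $U\neq N$, the interaction $\overline{g}_{\K\cup\{A,B\}}(U)-\overline{g}_{\K\cup\{A\}}(U)-\overline{g}_{\K\cup\{B\}}(U)+\overline{g}_{\K}(U)$ equals $g(kl)$ if $U\supseteq\{i,j,k,l\}$ and $0$ otherwise. In particular, your example of sets containing $\{j,k,l\}$ but not $i$ carries no slack. So there are $(2^{n-4}-1)+(2^{n-4}-1-(n-4))=2^{n-3}-n+2$ violating terms, each equal to $g(kl)$. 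The linear correction is already in this count, because complements that are singletons are known; it does not come from slack.

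The only slack comes from the two sets that become known. $T=\{k,l\}$ contributes $-g(ij)$, as you anticipate. But $T=\{i,j\}$ contributes $-g(kl)$: revealing $B$ alone raises $\underline{g}(\{i,j\})=g(N)-\overline{g}(N\setminus\{i,j\})$ by $-g(kl)$, since $\{k,l\}\subseteq N\setminus\{i,j\}$, and this gain disappears once $A$ is known. So the slack is not a multiple of $-g(ij)$, and the total second difference is
\[
\bigl(2^{n-3}-n+1\bigr)\,g(kl)-g(ij).
\]
This witness therefore breaks supermodularity iff $(2^{n-3}-n+1)g(kl)<g(ij)$, which is strictly stronger than the stated hypothesis. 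For instance, take $n=6$, $g(ij)=-3.5$, $g(jk)=-2$, $g(kl)=-1$, realized by the minimum-weight matching function on the path $i,j,k,l$. This satisfies the hypothesis, yet the sum is $+0.5$.

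The paper's proof reaches the constant $2^{n-3}-n+2$ only because its case $S=T$ asserts $R_T=0$. That overlooks exactly this effect of $\{k,l\}\subseteq N\setminus\{i,j\}$. So your plan, carried out, proves the statement with $2^{n-3}-n+1$ in place of $2^{n-3}-n+2$; for the stated constant you would need a different witness. Note also that for $n=5$ the stated hypothesis is vacuous: it reads $g(kl)<g(ij)\le g(kl)$.
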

Note that while similar conclusions to the one in Proposition~\ref{prop:SM-of-divergence-on-4-players} might be drawn for other classes, this is not the case for \Cref{prop:almost-nothing}. The complexity of our approach when $n \geq 5$ prevents us from immediately generalizing the result. \SS-divergence may hence be supermodular even when $n \geq 5$. However, minimizing the \SS-divergence has a straightforward solution: if the querying budget is $n$, querying one subset value for each subset size ensures that the \SS-divergence is zero. 


\subsection{Online algorithm}

Compared to the offline setting, solving the online problem is substantially more challenging. A key difficulty is that the algorithm must compute (or approximate) a restriction of $\valueDistribution$ consistent with previously observed values—hard to do when access to $\valueDistribution$ could be limited to sampling. To address this, we use reinforcement learning, specifically proximal policy optimization (PPO)~\cite{Schulman2017}, an online method that selects actions (subsets) based on past rewards. At each step $\tau \le t$, PPO selects a new subset $S_\tau$ informed by the previous trajectory $\{S_i\}_{i=1}^{\tau-1}$. The reward, which the algorithm aims to maximize, is defined as the negative expected divergence averaged over this trajectory.

\section{Empirical Evaluation}\label{sec: experiments}

We evaluate our algorithms on 3 distributions on different subclasses of subadditive functions. We outline the setup and benchmarks here; exact formulations, implementation details, and extended results are provided in the relevant appendices. 
The (uniform) distributions are:

\begin{itemize}[itemsep=-2pt, topsep=-3pt, leftmargin=2em]
    \item \convex{}: Distribution over monotonically decreasing submodular functions with values in $[-1, 0]$, obtained by negating increasing supermodular functions as in~\cite{uradnik2024reducing, Beliakov2022}.
    \item \xos{}: Distribution over XOS functions formed by taking the pointwise maximum of six random additive functions, each defined by singleton values drawn uniformly from $(0,1)$.
    \item \covg{}: Distribution over functions from combinatorial optimization, particularly set cover problems, defined as $f(S) = \left| \bigcup_{i \in S} X_i \right|$, generated by randomly sampling $X_i \subseteq [2n]$.
\end{itemize}

\textbf{Experimental Setup.} We test the distributions with $n\in\{5,10\}$. For $n=10$, the exhaustive nature of \offlineoptimal{} renders it computationally infeasible, so we only report results for \offlinegreedy{}. \convex{} is not \SAM and therefore uses the \S-divergence, while \xos{} and \covg{} use the tighter \SAM-divergence. We train \ppo{} over 3 million steps, sampling $6 \cdot 2048$ trajectories per iteration and optimizing for 10 epochs each round. Input values $f(S)$ are normalized as described in Appendix~\ref{app:value-normalization}. Offline algorithms use $\kappa=90$ samples to estimate expected divergence. All experiments were conducted on an AMD EPYC 7532 cluster (2.4\,GHz, 18 CPUs) in Python~3. For example, \offlinegreedy{} took 1 min to complete all 25 steps for $n=5$ and 20 h for $n=10$, while solving the same instances took \offlineoptimal{} 43 h ($n=5$). Each figure also contains a simple \textbf{\random{}} strategy, which selects subsets uniformly at random. Further details are in Appendix~\ref{app: alg specs}.

\begin{figure*}[t!]
\centering
\begin{tikzpicture}
\begin{groupplot}[
    group style={group size=3 by 2,
    ylabels at=edge left,
    xlabels at=edge bottom,
    horizontal sep=.8cm,
    vertical sep=.95cm,
    },
	title={\texttt{supermodular}($ n $)},
	title style={yshift=-1.5ex},
	xlabel={Steps},
    every axis x label/.style={
        at={(axis description cs:0.5,-0.1)},
        yshift=-4pt, 
        anchor=north
    },
	ylabel={Divergence},
    yticklabel style={font=\scriptsize},
	scaled y ticks=true,
	scale ticks above exponent=1,
	enlargelimits=false,
	xmin=0,
    height=0.25\textwidth,
    width=0.373\textwidth,
    axis line style={very thin},
    grid style={very thin},
    xmax=24
]
\nextgroupplot[legend to name=sharedlegend, legend columns=-1, title={\convex[5]}]
	\exploitabilityplot{figures/domains/convex_5.txt}{expected_greedy}{\textsc{Offline Greedy}}
	\exploitabilityplot{figures/domains/convex_5.txt}{expected_best_states}{\textsc{Offline Optimal}}
	\exploitabilityplot{figures/domains/convex_5.txt}{eval}{\textsc{PPO}}
	\exploitabilityplot{figures/domains/convex_5.txt}{random_eval}{\textsc{Random}}

    \nextgroupplot[title={\xos[5]}]
	\exploitabilityplot{figures/domains/xos_5.txt}{random_eval}{\random}
	\exploitabilityplot{figures/domains/xos_better_ppo5.txt}{eval_ln}{\ppo}
	\exploitabilityplot{figures/domains/xos_5.txt}{expected_best_states}{\offlineoptimal}
	\exploitabilityplot{figures/domains/xos_5.txt}{expected_greedy}{\offlinegreedy}
    \legend{}
    \nextgroupplot[title={\covg[5]}]
	\exploitabilityplot{figures/domains/covg_fn_generator_5.txt}{random_eval}{\textsc{Random}}
	\exploitabilityplot{figures/domains/covg_fn_generator_5.txt}{eval}{\textsc{PPO}}
	\exploitabilityplot{figures/domains/covg_fn_generator_5.txt}{expected_best_states}{\textsc{Offline Optimal}}
	\exploitabilityplot{figures/domains/covg_fn_generator_5.txt}{expected_greedy}{\textsc{Offline Greedy}}
    \legend{}
    \nextgroupplot[title={\convex[10]}]
	\exploitabilityplot{figures/domains/convex_10.txt}{random_eval}{\textsc{Random}}
	\exploitabilityplot{figures/domains/convex_10.txt}{expected_greedy}{\textsc{Offline Greedy}}
	\exploitabilityplot{figures/domains/convex_10.txt}{eval}{\textsc{PPO}}
    \legend{}
    \nextgroupplot[title={\xos[10]}]
	\exploitabilityplot{figures/domains/xos_10.txt}{random_eval}{\random}
	\exploitabilityplot{figures/domains/xos_10.txt}{eval_ln_lin}{\ppo}
	\exploitabilityplot{figures/domains/xos_10.txt}{expected_greedy}{\offlinegreedy}
    \legend{}
    \nextgroupplot[title={\covg[10]}]
	\exploitabilityplot{figures/domains/covg_fn_generator_10.txt}{random_eval}{\textsc{Random}}
	\exploitabilityplot{figures/domains/covg_fn_generator_10.txt}{expected_greedy}{\textsc{Offline Greedy}}
	\exploitabilityplot{figures/domains/covg_fn_generator_10.txt}{eval}{\textsc{PPO}}
    \legend{}
\end{groupplot}
\node[anchor=north] at (current bounding box.south) {\pgfplotslegendfromname{sharedlegend}};
\end{tikzpicture}%
	\caption{
	    Comparison of divergence across algorithmic steps for various algorithms, showcasing (left) \convex{}, (center) \xos{}, and (right) \covg{} distributions, $n\in\{5,10\}$.
	}
	\label{fig:domain_main}
\end{figure*}
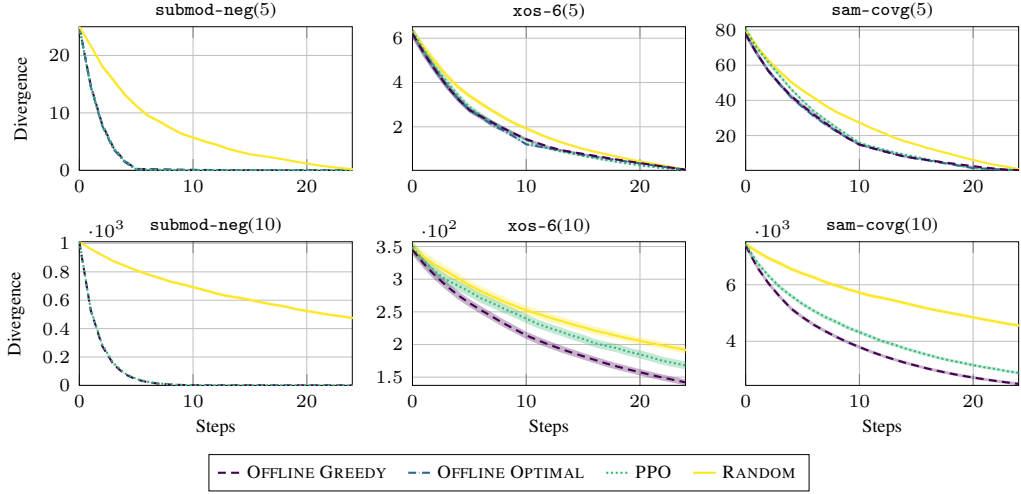

Results are shown in Figure~\ref{fig:domain_main}. A notable observation is that even the \random{} baseline performs reasonably well, indicating that the considered distributions exhibit significant internal structure. This structure allows for accurate approximations from only a few observed values. The \offlinegreedy{}, while still efficient to implement, substantially improves upon \random{}. As $n$ increases, random samples become less informative, amplifying the advantage of sets selected more deliberately by the remaining algorithms. For $n=5$, \offlinegreedy{} performs nearly as well as \offlineoptimal{}, suggesting that it is close to optimal. PPO exploits the information from previous queries to make targeted decisions, achieving slightly lower divergence than \offlinegreedy{} and occasionally outperforming \offlineoptimal{}. For n = $10$, however, PPO fails to generalize effectively and underperforms \offlinegreedy{}. We attribute this to the exponential action space and high-dimensional continuous observations, which pose significant learning challenges.

The relative gains of \offlinegreedy{} and PPO over \random{} are most pronounced on \convex{}. This mirrors findings in cooperative games~\cite{uradnik2024reducing}, where an analogous divergence measure was reduced significantly using only the largest subset values across a wide range of $n$, further supporting that informed selection can approach optimality for structured set functions.

\begin{table}[h!]
    \caption{
    Approximation $\alpha$ minimizing multiplicative error on \covg{} of CDSA \( (f_A, \overline{f}_{\mathcal{K}_A}) \), revealed queried values \( (\underline{f}_{\mathcal{K}_A}, \overline{f}_{\mathcal{K}_A}) \), and \offlinegreedy{} \((\underline{f}_{\mathcal{K}_O}, \overline{f}_{\mathcal{K}_O}) \) with the same budget.
    }
    \label{tab:sampling_comparison}
    \centering
    \begin{tabular}{lccccc}
        \toprule
        Distribution $ \valueDistribution $ & \# Queried & $ \alpha(f_A, \overline f_{\K_A}) $ & $ \alpha(\underline f_{\K_A}, \overline f_{\K_A}) $ & $ \alpha(\underline f_{\K_O}, \overline f_{\K_O}) $ \\
        \midrule
        \covg[5]   & 10.61 & 1.89  & 1.78 & \textbf{1.44} \\
        \covg[8]   & 25.42 & 2.06 & 2.01 & \textbf{1.88} \\
        \covg[10]   & 42.41 & 2.28 & 2.24 & \textbf{2.23} \\
        \bottomrule

    \end{tabular}
\end{table}

Lastly, we test whether our algorithm can construct effective $\alpha$-sketches of \SAM functions. These approximate $f$ from below via $f_A \le f$, minimizing $\alpha(f_A, f) = \max_S \frac{f_A(S)}{f(S)}$, which is equivalent to minimizing the multiplicative error, unlike our original additive error objective. 
We compare against the Cohavi--Dobzinski Sketching Algorithm (CDSA)~\cite{Cohavi2017}, which guarantees $\alpha \in \tilde{\mathcal O}(\sqrt{n})$ for submodular monotone functions. 
For each function \(f \sim \valueDistribution\), CDSA queries a set of subsets \( \mathcal{K}_A \) and builds \(f_A\). 
We then run \offlinegreedy{} with the same budget \(t_A = |\mathcal{K}_A|\) to produce a comparable set \( \mathcal{K}_O \). 
In Table~\ref{tab:sampling_comparison}, we report the ratios \( \alpha(f_A, \overline f_{\mathcal{K}_A}) \), \( \alpha(\underline f_{\mathcal{K}_A}, \overline f_{\mathcal{K}_A}) \), and \( \alpha(\underline f_{\mathcal{K}_O}, \overline f_{\mathcal{K}_O}) \) on \covg{}. 
In all tested scenarios, \offlinegreedy{} achieves the tightest approximation ratio under the same query budget. 
Additional results are in Appendix~\ref{app:k-budget}.

\section{Conclusion}
\label{sec: cenclusion}

In this paper, we study strategies for efficiently mitigating uncertainty within incomplete set functions. We introduce the concept of the \enquote{set function divergence}, which quantifies the size of the set of possible extensions of a partial set function. We show fundamental properties of the set function divergence that enables us to compute it more efficiently for different subclasses of subadditive functions. We focus on reducing the set function divergence through well-informed queries about the unknown values within the incomplete set function, effectively constructing a representation tailored to capture a maximum information about the function in both online and offline fashion. Our findings indicate that our approach significantly outperforms random queries and approaches optimality.

\section*{Acknowledgments}
The authors would like to thank Martin Loebl and Milan Hlad\'{i}k for their insightful comments. 
This work was supported by Charles Univ. project UNCE 24/SCI/008, CoSP Project grant no. 823748 and by the grant P403-22-11117S of the Czech Science Foundation. The authors were supported by the Charles University Grant Agency (GAUK 206523). 
Computational resources were supplied by the project "e-Infrastruktura CZ" (e-INFRA LM2018140) provided within the program Projects of Large Research, Development and Innovations Infrastructures.

\bibliographystyle{ACM-Reference-Format}
\bibliography{main}

\appendix


\newpage

\section{Proofs of Results}

\gapInK*

\begin{proof}
    Let $\hat\K = \K \cup \K_0, \hat\L = \L \cup \K_0$ such that $\K \subseteq \L$. From the definition of the upper and the lower extension, for $T \subseteq N$, it follows
    \begin{equation*}
        \overline{f}_{\hat\K}(T) \ge \overline{f}_{\hat\L}(T) \hspace{0.1in}\text{and}\hspace{0.1in} \underline{f}_{\hat\L}(T) \ge \underline{f}_{\hat\K}(T),
    \end{equation*}
    or equivalently $\overline{f}_{\hat\K} - \underline{f}_{\hat\K} \ge  \overline{f}_{\hat\L} - \underline{f}_{\hat\L}\ge 0 .$
    Now the $\mathbb{C}^n$-divergence is monotonically non-increasing (1.) as long as the norm satisfies $0 \le x \le y \implies \lvert x \rvert \le \lvert y \rvert$. In~\cite{Bauer1961}, it is showed this holds if $\|x\| = \|\lvert x \rvert\|$. Further, from non-negativity and 1., subadditivity follows. 
    Finally, we have
	\begin{align*}
		\gap_{\alpha f + \beta}(\K)
		 & =
		\norm{\overline{\alpha f + \beta}_{\K_0 \cup \K} - \underline{\alpha f + \beta}_{\K_0 \cup \K}} =
		\alpha \norm{\overline{f}_{\K_0 \cup \K} - \underline{f}_{\K_0 \cup \K}}   
        = \alpha\cdot \gap_f(\K).
	\end{align*}
\end{proof}

\sBounds*

\begin{proof}
The first part of the proposition is equivalent to Theorem 1 in~\cite{Masuya2016} with a slight distinction that Theorem 1 deals with superadditivity instead of subadditivity. The second part follows from Theorem 3 in~\cite{Masuya2016}, which states (when translated to subadditivity) that any set function defined for a non-empty set $T\subseteq N$ as
\vspace{-1ex}
\begin{equation}
    f^S(T) \coloneqq \begin{cases}
        \underline{f}_\K[\S](T)  & S\subseteq T,    \\
        \overline{f}_\K[\S](T) & S\not\subseteq T, \\
    \end{cases}
\end{equation}
is an $\S$-extension of $(f,\K)$. For $S \notin \K$, choose $\underline{g}=f^N$ and $\overline{g}=f^S$.
\end{proof}

\samBounds*

\begin{proof}
    First, let us show $g(S) \leq \overline{f}[\SAM](S)$ for every $\SAM$-extension $g$. Denote $S_1^*,\dots,S_k^*$ subsets achieving minimum\footnote{We note these subsets might not be uniquely defined, however, this fact does not violate our proof.} of
    \[
    \min\limits_{\substack{S_1,\dots,S_k \in \K \\
            \bigcup_i S_i \supseteq S}}
    \sum_{i=1}^k f(S_i).
    \]
    We construct $T_1^*,\dots,T_k^*$ satisfying the following conditions:
    \begin{equation*}
        (1)~T^*_i \subseteq S^*_i,\qquad
        (2)~T^*_i \cap T^*_j = \emptyset,\qquad
        (3)~\bigcup T^*_i = S.
    \end{equation*}
    From $(1)$, and monotonicity of $g$, we get $g(T^*_i) \leq g(S^*_i) = f(S^*_i)$ and from $(2)$, $(3)$, we get $g(S) \leq \sum_{i=1}^kg(T^*_i)$, which together imply $g(S) \leq \overline{f}[\SAM](S)$.
    We construct $T^*_1,\dots,T^*_k$ in two steps. First we construct $T_1,\dots,T_k$ as
    \begin{equation}\label{eq:formula}
        T_i \coloneqq S^*_i \setminus \left( S^*_i \cap \bigcup_{j=i+1}^k S^*_j \right)
    \end{equation}
    and then
    \begin{equation}\label{eq:formula2}
        T_i^* = T_i \cap S.
    \end{equation}
    One could view~\eqref{eq:formula} as a process, where we start with $S^*_1$ and construct $T_1$ by eliminating all elements which are present in an intersection with any other set $S^*_i$. For constructing $T_2$, elements from $S^*_1 \cap S^*_2$ remain in the intersection, as these are not present in $T_1$. The same idea is applied to every $T_i$. It is obvious from the construction that $T_i \subseteq S_i^*$ and $T_i \cap T_j = \emptyset$. However, it holds $\bigcup T_i = \bigcup S^*_i$, which might violate $(3)$. This condition is further achieved by construction in~\eqref{eq:formula2}, while preserving the previous two.

    Now, let us prove $\overline{f}[\SAM] \in \SAM(f,\K)$, which concludes it is a \SAM-tight. The monotonicity follows, as for $S \subseteq T \subseteq N$, we have
    \begin{equation*}
        \min\limits_{\substack{S_1,\dots,S_k \in \K \\
            \bigcup_i S_i \supseteq S}}
    \sum_{i=1}^k f(S_i) \leq \min\limits_{\substack{S_1,\dots,S_\ell \in \K \\
            \bigcup_i S_i \supseteq T}}
    \sum_{i=1}^\ell f(S_i)
    \end{equation*}
    and the subadditivity also holds as for any $S,T \subseteq N$, $S \cap T = \emptyset$, we have
    \begin{equation*}
        \min\limits_{\substack{X_1,\dots,X_k \in \K \\
            \bigcup_i X_i \supseteq S \cup T}}
    \sum_{i=1}^k f(X_i) \leq \min\limits_{\substack{S_1,\dots,S_\ell \in \K \\
            \bigcup_i S_i \supseteq S}}
    \sum_{i=1}^\ell f(S_i) + \min\limits_{\substack{T_1,\dots,T_m \in \K \\
            \bigcup_i T_i \supseteq T}}
    \sum_{i=1}^m f(T_i),   
    \end{equation*}
    as any $S_1,\dots,S_\ell,T_1,\dots,T_m$ might be viewed as one of $X_1,\dots,X_k$.

    Next, we proceed with showing $\underline{f}_\K[\SAM](S) \leq g(S)$ for every $\SAM$-extensions $g$. Notice $g(A) \geq g(A \cup B) - g(B)$, which for $A\cup B = X \in \K$ yields
    \begin{equation}
        g(A) \geq f(X) - g(X \setminus A) \geq f(X\setminus A) - \overline{f}_\K[\SAM](X \setminus A).
    \end{equation}
    Therefore, $\underline{f}_\K[\SAM](A) \geq \max_{\substack{X \in \K \\ A \subseteq X}}f(X) - \overline{f}_\K[\SAM](X \setminus A)$. Further, for $Y \in \K$, $Y \subseteq S$, it holds $f(Y) \leq g(S)$, therefore $\underline{f}_\K[\SAM](S) \geq \max_{Y \in \K, Y \subseteq S} f(S)$, which concludes the proof.

\end{proof}

\samUpperAlgorithm*
\begin{proof}
    The initial bound, function $f_0$ is \S-upper function, therefore, it is necessarily also \SAM-upper function. Now assume $f_{k-1}$ is \SAM-upper function. Now if $f^{'}_k(S) < f_{k-1}(S)$ for some $S \subseteq N$, it means there is $X \subseteq S$ satisfying
    \begin{equation*}
        f_{k-1}(S) > f_{k-1}(X) + f_{k-1}(X \setminus S).
    \end{equation*}
    Any \SAM-extension $g$ is bounded by \SAM-tight upper function $\overline{f}_\K$, therefore, from subadditivity, 
    \begin{equation*}
        g(S) \leq g(X) + g(X \setminus S) \leq \overline{f}_\K(X) + \overline{f}_\K(X \setminus S) \leq f_{k-1}(X) + f_{k-1}(X \setminus S). 
    \end{equation*}
    We see that by refining $f_{k-1}(S)$ to $f'_k(S) = \min_{X \subseteq S} f_{k-1}(X) + f_{k-1}(S \setminus X)$, we do not get below any \SAM-extension, thus $f'_k(S)$ is \SAM-upper function. Similarly, if $f_k(S) < f'_k(S)$, there is $T \subseteq N, S \subseteq T$ such that $f_k(S) > f'_k(T)$. Again, since for every \SAM-extension $g$, it holds from monotonicity
    \begin{equation*}
        g(S) \leq g(T) \leq \overline{f}_\K(T) \leq f'(T),
    \end{equation*}
    there is no violation in setting $f_k(S) = \min_{T, S \subseteq T} f^{'}_k(S)$, thus $f_k$ is also \SAM-upper function.
\end{proof}

\gapSeparation*

\begin{proof}
    The result holds for $(f, \mathcal{K})$ where $\mathcal{K} = \mathcal{K}_0$ and $f(N) = f(i) = 1$ for every $i \in N$. The set of $\SAM$-extensions contains only a single set function $g$, for which $g(S) = 1$ for every $S \subseteq N$. Thus, we have
\[
\overline{f}_{\mathcal{K}}[\SAM](S) = \underline{f}_{\mathcal{K}}[\SAM](S) = 1 \quad \text{for every } S \subseteq N
\]
and consequently, \SAM-divergence is zero.

For the set of $\S$-extensions, the \S-tight upper function is less restrictive:  $\overline{f}_{\mathcal{K}}(\S, S) = |S|$ for every $S \notin \K$.
The difference in \S-divergence and \SAM-divergence with $l_1$-norm is at least the difference between the \S-tight upper and \SAM-tight upper functions $\overline{f}_\K[\S] - \overline{f}_\K[\SAM]$, which is \begin{equation*}
        \sum_{s=2}^{n-1} {n \choose s}(s-1) \geq \sum_{s=2}^{n-1} {n \choose s} = 2^n - n - 2.
    \end{equation*}
\end{proof}

\xosCharacterization*

\begin{proof}
    Showing that every XOS function is fractionally subadditive is immediate from the definition. For the opposite implication, one has to use the duality of linear programming. The details can be found in~\cite{feige2006maximizing}.
\end{proof}

\xosBound*

\begin{proof}
    The fact that $g(S) \leq \overline{f}_\K[\XOS](S)$ for every $S \subseteq N$ is immediate and follows similar idea as in the proof of Proposition~\ref{prop:sam-bounds}.

    To show the bound is tight, consider for $S \subseteq N$ any collection $\{\alpha_i,S_i\}$ such that $\sum_{i: j \in S_i}\alpha_i \geq 1$ for every $j \in S$. Now we can express
    \begin{equation}
        \sum_{i=1}^k\alpha_i\overline{f}_\K[\XOS](S_i) = \sum_{i=1}^k\sum_{j=1}^{k_i}\alpha_i\beta_jf(S_{ij}).
    \end{equation}
    For $\gamma_{ij} = \alpha_i\beta_{j}$, it holds for every $k \in S$ that $\sum_{i,j: k \in S_{ij}}\gamma_{i,j} \geq 1$, thus
    \begin{equation}
        \overline{f}_\K(S) =\min_{\substack{\alpha_i,S_i\\ S_i \in \K\\ \sum_{i: j \in S_i} \alpha_i \geq 1\\ \forall j \in S}}\alpha_i f(S_i) \leq \sum_{i=1}^k\sum_{j=1}^{k_i}\gamma_{ij}f(S_{ij}) = \sum_{i=1}^k\alpha_i\overline{f}_\K(S_i).
    \end{equation}
\end{proof}

\samCharacterization*

\begin{proof}
    One can readily see that the conditions of subadditivity and submodularity are among conditions from the statement. Any other condition in the statement can be further derived from \SAM conditions. Consider a partition (with some of the sets possibly empty) of $S$ into $T_1,\dots,T_k$ satisfying $T_i \subseteq S_i$ for every $i$. From subadditivity, $f(S) \leq \sum_{i=1}^kf(T_i)$, which is from monotonicity smaller or equal to $\sum_{i=1}^kf(S_i)$.
\end{proof}

\xosGapLowerBound*

\begin{proof}
    Let $N = \{1,2,3,4\}$, $\K = 2^N \setminus \{ S \subseteq N \mid |S| = 3\}$ and $f \in \XOS$ being defined by additive functions 
    \begin{itemize}
        \item $a^{\{i\}} = 2^k e_i$,
        \item $a^{\{i,j,k\}} = 1 / 2 (e_i + e_j + e_k)$,
        \item $a^N = 2^k / 2 (e_1+e_2+e_3+e_4)$,
    \end{itemize}
    where $k \in \R_+$. It follows $f(\{i\}) = f(\{i,j\}) = 2^k$, $f(\{i,j,k\}) = \frac{3}{2}2^k$ and $f(N) = 2\cdot 2^{k+1}$. now $\overline{f}_\K[\SAM] = 2 \cdot 2^k$, while $\overline{f}_\K[\XOS] = \frac{3}{2} 2^k$.
\end{proof}

\subsection{Completions for \SS-functions and \CA-functions}\label{app:ss-bounds}

For $S \subseteq N$, denote $\overline{S}$ as the smallest set in $\K$ that is larger in size than $S$ and $\underline{S}$ as the largest set in $\K$ that is smaller in size than $S$, formally $\overline{S} = \argmin_{T \in K, |S|\leq |T|}|T|$ and $\underline{S} = \argmax_{T \in \K, |T|\leq |S|}|T|$.
Now the value of subset $S$ for the lower function lies on the red line as in Figure~\ref{fig:SS-lower}. Analytically, it can be expressed as

\begin{equation}\label{eq:ss-lower-function}
\underline{f}_\K[\SS](S) \coloneqq \begin{cases}
    f(T) & \text{if } \exists T \in \K\text{, }|T|=|S|,\\
    f(|\underline{S}|) + \frac{(|S|-|\underline{S}|)(f(\overline{S})-f(\underline{S}))}{|\overline{S}|-|\underline{S}|} & \text{otherwise.}\\
\end{cases}
\end{equation}

In a similar manner, the upper bound on subset $S$ is determined a minimum of linear approximations given by pairs of subsets, both either larger, or smaller than $S$. Once again, from concavity, the value of $S$ cannot lie above these approximations, represented as green lines in Figure~\ref{fig:SS-upper}. Formally, for every $S \subseteq$ denote pairs $S_1$, $S_2$ and $S_3$, $S_4$ as $S_1 = \argmin_{T \in K, |S|< |T|}|T|$ and $S_2 = \argmin_{T \in K, |S_1|< |T|}|T|$ and similarly $S_4 = \argmax_{T \in \K, |T|< |S|}|T|$ and $S_3 = \argmax_{T \in \K, |T|< |S_4|}|T|$.
The \SS-tight upper function is then defined for $S$ as $\overline{f}_\K[\SS](S) = f(T)$ if $\exists T \in \K, |T|=|S|$ and otherwise

\begin{align}
\overline{f}_\K[\SS](S) 
&= \min\bigg\{f(|S_1|) + \frac{(|S| - |S_1|)(f(S_2) - f(S_1))}{|S_2| - |S_1|}, \notag \\
&\qquad\quad f(|S_3|) + \frac{(|S| - |S_3|)(f(S_4) - f(S_3))}{|S_4| - |S_3|} \bigg\} \label{eq:ss-upper-function}
\end{align}

\begin{figure}[h!]
    \centering
    \begin{subfigure}[b]{0.48\textwidth}
        \centering
        \begin{tikzpicture}
            \begin{axis}[height=.7\textwidth,width=\textwidth,enlargelimits=false,ymax=1.1,xtick=\empty,ytick=\empty,axis lines=none]
                \addplot [
                    thick,
                    domain=0:6, 
                    samples=100,
                    smooth,
                ]
                {-1/(x/2+1)^3+1};
                \pgfmathsetmacro\y{-1/(0/2+1)^3+1}
                \addplot[blue, mark=*,nodes near coords={$g(0)$},every node near coord/.append style={anchor=south west}] coordinates {(0, \y)};
                \pgfplotsinvokeforeach{1,3,4,5}{
                    \pgfmathsetmacro\y{-1/(#1/2+1)^3+1}
                    \addplot[blue, mark=*,nodes near coords={$g(#1)$},every node near coord/.append style={anchor=north west,yshift=3}] coordinates {(#1, \y)};
                }
                \pgfmathsetmacro\y{-1/(1+1)^3+1}
                \addplot[orange, dashed, thick] coordinates {(2,0) (2, 1.1)};
                \addplot[red, thick] coordinates {(0, 0) (1, {-1/(1/2+1)^3+1})};
                \addplot[red, thick] coordinates {(1, {-1/(1/2+1)^3+1}) (3, {-1/(3/2+1)^3+1})};
                \addplot[red, thick] coordinates {(3, {-1/(3/2+1)^3+1}) (4, {-1/(4/2+1)^3+1})};
                \addplot[red, thick] coordinates {(4, {-1/(4/2+1)^3+1}) (5, {-1/(5/2+1)^3+1})};
                \addplot[orange, mark=*,nodes near coords={$\underline f(S)$},every node near coord/.append style={anchor=north west,yshift=3}] coordinates {(2, {(-1/(1/2+1)^3+1+-1/(3/2+1)^3+1)/2})};
            \end{axis}
        \end{tikzpicture}
        \caption{The red lines depict the lower completion given by the known values on $g$, i.e., the lower bound on unknown values is given by the red lines.}
        \label{fig:SS-lower}
    \end{subfigure}%
    \hfill
    \begin{subfigure}[b]{0.48\textwidth}
        \centering
        \begin{tikzpicture}
            \begin{axis}[height=.7\textwidth,width=\textwidth,enlargelimits=false,ymax=1.1,xtick=\empty,ytick=\empty,axis lines=none]
                \addplot [
                    thick,
                    domain=0:6, 
                    samples=100,
                    smooth,
                ]
                {-1/(x/2+1)^3+1};
                \pgfmathsetmacro\y{-1/(0/2+1)^3+1}
                \addplot[blue, mark=*,nodes near coords={$g(0)$},every node near coord/.append style={anchor=south west}] coordinates {(0, \y)};
                \pgfplotsinvokeforeach{1,3,4,5}{
                    \pgfmathsetmacro\y{-1/(#1/2+1)^3+1}
                    \addplot[blue, mark=*,nodes near coords={$g(#1)$},every node near coord/.append style={anchor=north west,yshift=3}] coordinates {(#1, \y)};
                }
                \addplot[orange, dashed, thick] coordinates {(2,0) (2, 1.1)};
                \addplot[green] coordinates {(0, 0) (5, {(-1/(1/2+1)^3+1)*5})};
                \pgfmathsetmacro\from{5}
                \pgfmathsetmacro\to{3}
                \addplot[green] coordinates {(\from, {(-1/(\from/2+1)^3+1)}) (0,{(-1/(\from/2+1)^3+1)-\from*((-1/(\from/2+1)^3+1)-(-1/(\to/2+1)^3+1))/(\from-\to)})};
                \pgfmathsetmacro\from{5}
                \pgfmathsetmacro\to{4}
                \addplot[green] coordinates {(\from, {(-1/(\from/2+1)^3+1)}) (0,{(-1/(\from/2+1)^3+1)-\from*((-1/(\from/2+1)^3+1)-(-1/(\to/2+1)^3+1))/(\from-\to)})};
                \pgfmathsetmacro\from{4}
                \pgfmathsetmacro\to{3}
                \addplot[green] coordinates {(\from, {(-1/(\from/2+1)^3+1)}) (0,{(-1/(\from/2+1)^3+1)-\from*((-1/(\from/2+1)^3+1)-(-1/(\to/2+1)^3+1))/(\from-\to)})};
                \addplot[orange, mark=*,nodes near coords={$\overline f(S)$},every node near coord/.append style={anchor=south west,yshift=-1}] coordinates {(2,{(-1/(\from/2+1)^3+1)-(\from-2)*((-1/(\from/2+1)^3+1)-(-1/(\to/2+1)^3+1))/(\from-\to)})};
            \end{axis}
        \end{tikzpicture}
        \caption{Similarly, the green lines depict the upper bounds given the known values on $g$. The upper completion is a minimum over the green lines.}
        \label{fig:SS-upper}
    \end{subfigure}
    \caption{An illustration of lower and upper completions for a function $f(T) = g(\absolute{T})$ with an unknown subset $ S $, such that $\absolute{S} = 1$.}
    \label{fig:bounds}
\end{figure}

\ssBounds*

\begin{proof}
    It follows from the construction described above that $\overline{f}_\mathcal{K}[\SS]$ and $\underline{f}_\mathcal{K}[\SS]$ are \SS-upper and \SS-lower functions, respectively. Any function $f'$ of the form $f'(S) = g(|S|)$ that satisfies either $f'(S) > \overline{f}_\mathcal{K}[\SS](S)$ or $ f'(S) < \underline{f}_\mathcal{K}[\SS](S)$ cannot maintain the concavity of $g$ at the same time.

    The tightness of these bound functions follows from two key observations:
    \begin{enumerate}
    \item The \SS-lower function is actually an \SS-extension. The concave function corresponding to the \SS-lower function is represented by the red lines in Figure~\ref{fig:SS-lower}.
    \item The value of the \SS-upper function for a set $S$ such that  $\nexists T \in \mathcal{K}, \, |T| = |S|$ is attained for a function where $g$ is constructed as a combination of the red lines for the lower completion and the green lines for the upper completion in the segment corresponding to $S$.
    \end{enumerate}
    The formal proof is omitted for the sake of brevity but follows directly from the construction and properties of \SS-upper and \SS-lower functions.
\end{proof}

\caBounds*

\begin{proof}
    The result can be proved in a similar way as Proposition~\ref{prop:ss-bounds}. A \CA-extension $f'$ with a value of coalition $S \notin \K$ outside the range $[\underline{f}_\K[\SS](S),\overline{f}_\K[\SS](S)]$ would be in contradiction with concavity of the corresponding function $g$. The way to prove this is to consider coalitions $\underline{S},\overline{S}$, and $S_1,\dots,S_4$ for which $\underline{f}_\K[\SS](S)$, resp. $\overline{f}_\K[\SS](S)$ is attained and consider the behaviour of $g$, resp. $f'$ on any chain of known coalition extending these.  
\end{proof}




\subsection{Proof of the Sampling Bound}
\label{app:offlineFailProb}

\offlineBoundFailProb*

\begin{proof}
	Let $ \ss^* = \left\{ S^*_i \right\}_{i=1}^t $ be the \emph{true} optimal solution.
	For brevity, denote the set of viable solutions as \[
		\viable = \left\{ \ss \in 2^{2^N \setminus \K_0} \mid \absolute{\ss} = t \right\}.
	\]
	Let $ \hat\gap $ be the \emph{empirical value of the divergence} as seen by \offlineoptimal{}, and $ \gap $ the true value (the true expectation over all $ f \sim \valueDistribution $).

	For our proof, we will need to find the \emph{maximum difference} between divergences of two samples \[
		M = \sup_{\substack{\ss \in \viable \\ f,g \in \supp \valueDistribution}} \absolute{\gap_f(\ss) - \gap_g(\ss)}.
	\]
	observe that this value is bounded, since we assume that $ \valueDistribution $ is bounded over a compact support.

	Now, let $ \ss = \left\{ S_i \right\}_{i=1}^t $ be a non-optimal solution, which means that $ \gap(\ss) > \gap(\ss^*) $.
	We denote the midpoint between their divergences as $ m_\ss = \frac {\gap(\ss) + \gap(\ss^*)}2 $.
	Our aim is now to upper-bound the probability that \offlineoptimal{} sees a lower divergence of $ \ss $ than that of $ \ss^* $.
	If this has happened, that means that either $ \hat\gap(\ss) \leq m $ or $ \hat\gap(\ss^*) \geq m $ (or possibly both).
	We thus want to upper-bound the probability of the event \[
		A(\ss) = \hat\gap(\ss) \leq m \lor \hat\gap(\ss^*) \geq m.
	\]
	By union-bound, we have \[
		\pr{A(\ss)} \leq \pr{\hat\gap(\ss) \leq m} + \pr{\hat\gap(\ss^*) \geq m}.
	\]
	Define $ \delta = m - \gap(\ss^*) $.
	By Hoeffding's inequality \citep{hoeffding1963probability},
	\begin{align*}
		 \pr{\hat\gap(\ss) \leq m}
		 &= \pr{\hat\gap(\ss) - \gap(\ss) \leq -\delta}
		 \leq \pr{\absolute{\kappa \cdot \hat\gap(\ss) - \kappa \cdot \gap(\ss)} \geq \kappa \cdot \delta} \\ &
		 \leq 2 \exp \left( - \frac{2\delta^2 \kappa^2}{\kappa M} \right) 
		 = 2 \exp \left( - \frac{2\delta^2 \kappa}{M} \right).
	\end{align*}
	Note that $ \hat\gap(\ss) $ is an average over $ \kappa $ samples, so $ \kappa \cdot \hat\gap(\ss) $ is a sum.
	Similarly for $ \ss^* $,
	\begin{align*}
		 \pr{\hat\gap(\ss^*) \geq m}
		 &= \pr{\hat\gap(\ss^*) - \gap(\ss^*) \geq \delta}
		 \leq \pr{\absolute{\kappa \cdot \hat\gap(\ss^*) - \kappa \cdot \gap(\ss^*)} \geq \kappa \cdot \delta} \\ &
		 \leq 2 \exp \left( - \frac{2\delta^2 \kappa^2}{\kappa M} \right) 
		 = 2 \exp \left( - \frac{2\delta^2 \kappa}{M} \right).
	\end{align*}
	We get \[
		\pr{A(\ss)} \leq 4 \exp \left( - \frac{2\delta^2 \kappa}{M} \right).
	\]

	To prove the proposition, we must bound the probability that \offlinegreedy{} chooses incorrectly, in other words, the probability that $ A(\ss) $ happens for any $ \ss $.
	To make our job easier, we do not consider only the $ \ss $ of size $ t $, but of all sizes.
	This just gives a looser upper bound, not dependent on $ t $.
	Also, define $ \delta = \min \left\{   \frac{\gap(\ss) - \gap(\ss^*)}2 \mid \ss \in \viable, \gap(\ss) > \gap(\ss^*) \right\} $.
	We reach
	\begin{align*}
		\pr{\left( \exists \ss \in \viable \right)\! \left( A(\ss) \right) }&
		\leq \sum_{ \ss \in \viable} \pr{ \left( A(\ss) \right) }
		\leq \sum_{ \ss \in \viable} 4 \exp \left( - \frac{2\delta_\ss^2 \kappa}{M} \right) \\ &
		\leq 2^{2^n} 4 \exp \left( - \frac{2\delta^2 \kappa}{M} \right) \in \mathcal{O}(e^{-\kappa}),
	\end{align*}
	which is what we wanted to prove.
\end{proof}

\subsection{Time Complexity of \offlineoptimal{} and \offlinegreedy{}}

\timeComplexities*

\begin{proof}[Proof of \offlinegreedy{}, standard model.]
  First, we sample $ \kappa $ functions (we assume sampling of a single game takes $ \mathcal O\!\left(2^n\right) $, i.e., constant time for each subset).
  The algorithm then iterates over $ t $ timesteps. 
  In each timestep, we compute the  $\argmin$, which translates to going over all viable subset structures, of which there are $ 2^n-n-2 $.
  For each subset, we compute the divergence in each sample, reaching $ \kappa \cdot T_\gap(n) $.
  Putting it all together, we get the required time complexity.
\end{proof}

\begin{proof}[Proof of \offlineoptimal{}, standard model.]
  The algorithm iterates over $ \kappa $ samples, for each sample it computes the divergence for every possible subset structure of size $ t $.
  The number of possible subset structures of size $ t $ is $ \binom{2^n-n-2}{t} $.
  The time complexity of computing the divergence is $ T_\gap(n) $.
  Therefore, the time complexity of the \offlineoptimal{} algorithm is $ \Theta\!\left(\kappa \cdot \binom{2^n-n-2}{t} \cdot T_\gap(n)\right) $.
\end{proof}

\begin{proof}[Proof of $ T_\gap $ in the standard model.]
    In the standard model, the main component of computing the divergence is the upper and lower function computation: for each subset $ S \subseteq N $, where $ S \notin \K $, we have to consider all its subsets.
    Hence, we reach the $ \mathcal O\!\left(2^{2n}\right) $ time complexity.
\end{proof}

\begin{proof}[Proof of $ T_\gap $ in a parallel model.]
    In the parallel model, we can perform all the computations for all subsets simultaneously, however, we then need to somehow compute the minimum for each, and then sum them up.
    Both can be done logarithmically, which in our case means $ \mathcal O\!\left(\log 2^n\right) = \mathcal O\!\left(n\right) $.
\end{proof}

\begin{proof}[Proof of parallel model.]
    First observe that the bound we gave for \offlineoptimal{} is the same as in the standard model.
    The proof given above works as an upper bound on time complexity in a parallel model, while allowing us to use the improved $ T_\gap $ bound.

    For \offlinegreedy{}, assuming we have $ \Omega\!\left(2^{3n}\right) $ processors, we can even compute the divergence for all the $ \mathcal O\!\left(2^n\right) $ subsets $ S \subseteq N $ we wish to add to $ \K_t $ at once, which allows us to improve the time complexity by a factor of $ \left( 2^n-n \right) $.
\end{proof}

\subsection{Concavity of Divergence}\label{app:concavity}

Before we prove Proposition~\ref{prop:SM-of-divergence-on-4-players}, we need to prepare several lemmas. In the rest of this appendix, we abuse the notation in the following sense. We write $i$ instead of $\{i\}$, $ij$ instead of $\{i,j\}$ and $S$ instead of $\{S\}$. As we prove Proposition~\ref{prop:SM-of-divergence-on-4-players} for \S and \SS separately, write in the rest of the appendix simply $\overline{f}_\K$, $\underline{f}_\K$ when $\S$ and \SS are obvious from the context.

\begin{lemma}
    \label{lem:bound-funcs-monotone}
    Let $\K_0 \subseteq \K\subseteq 2^N \setminus S$ and $f\in \S$. Then $\forall T\in 2^N$
    \begin{equation*}
        \underline{f}_\K[\S](T) \le \underline{f}_{\K \cup S}[\S](T)
        \hspace{2ex}
        \text{and}
        \hspace{2ex}
        \overline{f}_\K[\S](T) \ge \overline{f}_{\K \cup S}[\S](T).
    \end{equation*}    
\end{lemma}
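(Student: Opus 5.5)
The plan is to read off both inequalities directly from the explicit formulas for the $\S$-tight functions. Write $\K' = \K \cup \{S\}$, with $f(S)$ the true value of the subadditive function $f$. I will treat the upper function first, since the lower function is built from it.

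\emph{Upper function.} By \eqref{eq:s-upper-function}, $\overline{f}_\K[\S](T)$ is a minimum over partitions of $T$ into blocks from $\K$. Every partition into blocks of $\K$ is also a partition into blocks of $\K'$, because $\K \subseteq \K'$. The minimum defining $\overline{f}_{\K'}[\S](T)$ therefore ranges over a superset of the feasible partitions. This gives $\overline{f}_{\K'}[\S](T) \le \overline{f}_\K[\S](T)$ for every $T$, and that is the second inequality. Feasibility for $\K$ is guaranteed because all singletons lie in $\K_0 \subseteq \K$.

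\emph{Lower function.} The formula is
\[
\underline{f}_{\K'}[\S](T) = \max_{X \in \K',\, T\subseteq X}\bigl(f(X) - \overline{f}_{\K'}(X\setminus T)\bigr).
\]
Two effects both push this value up. First, the index set grows from $\K$ to $\K'$, so the maximum can only increase. Second, for every fixed $X$, the subtracted term $\overline{f}_{\K'}(X\setminus T)$ is at most $\overline{f}_{\K}(X\setminus T)$ by the first part, so each term in the maximum is at least as large as before. Chaining these gives $\underline{f}_{\K}(T) \le \underline{f}_{\K'}(T)$.

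\emph{Values on known sets.} One should check that the formulas return $f(T)$ when $T \in \K$; this is the convention under which $\underline{f}_\K(T)=\overline{f}_\K(T)=f(T)$. For $T \in \K$, the lower formula with $X=T$ gives $f(T) - \overline{f}_\K(\emptyset) = f(T)$, using $f(\emptyset)$ known and nonnegative. Any other $X \supseteq T$ gives at most $f(T)$ by subadditivity of $f$, since $\overline{f}_\K$ majorizes $f$. So there is no conflict on known sets.

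\emph{Main obstacle.} The only subtle point is that the lower function is not monotone in $\K$ merely syntactically. It depends on $\overline{f}$ with a negative sign, so the order of the argument matters: the upper inequality must be established first and then substituted. A fallback would be to invoke tightness from Proposition~\ref{prop:S-bounds}. The set of $\S$-extensions only shrinks when $S$ is added. The tight bounds are the infimum and supremum over extensions, so they move inward, since $f$ itself is an extension and the extension set is nonempty.
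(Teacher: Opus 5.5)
Your proof is correct and is essentially the paper's argument, which also reads both inequalities directly off the defining formulas: adding $S$ enlarges the set of feasible partitions in the minimum and the index set in the maximum. Your version is more careful than the paper's one-line proof. Because $\underline{f}_\K[\S]$ is defined through $\overline{f}_\K[\S]$ with a negative sign, the upper inequality has to be established first and then substituted, which is the order you use; the paper instead presents the upper inequality as a consequence of the lower one.
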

\begin{proof}
    By definition of the lower function, a larger set of known subsets can only increase the value of the lower function.
    Consequently, the upper function cannot increase if the set of known values becomes larger.
\end{proof}
\begin{lemma}\label{lem:change-of-bounds}
    Let $\K_0 \subseteq\K \subseteq 2^N$, $S\in 2^N\setminus\K$, and $T\in 2^N$.
    Then
    \begin{equation*}
        T\subsetneq S \implies \overline{f}_\K[\S](T) = \overline{f}_{\K\cup S}[\S](T),
    \end{equation*}
    and
    
    \begin{equation*}
        S\subsetneq T \implies \underline{f}_\K[\S](T) = \underline{f}_{\K\cup S}[\S](T).
    \end{equation*}
    
\end{lemma}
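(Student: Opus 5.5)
We will prove both implications directly from the explicit formulas for the \S-tight functions, namely \eqref{eq:s-upper-function} and $\underline{f}_\K[\S](T)=\max_{X\in\K,\,T\subseteq X}\bigl(f(X)-\overline{f}_\K(X\setminus T)\bigr)$. The guiding observation is that adding $S$ to $\K$ only enlarges the index sets of the minimum and the maximum by terms that \emph{involve $S$}. We must show that, under the stated inclusions, these new terms are either unavailable or never the strict optimum.

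For the first implication, let $T\subsetneq S$. In \eqref{eq:s-upper-function} the minimum for $T$ ranges over partitions of $T$ into disjoint members of the known family. Every block of such a partition is a subset of $T$. The newly known set $S$ cannot be a block, since $S\not\subseteq T$ because $|S|>|T|$. Hence the feasible partitions of $T$ with respect to $\K\cup S$ are exactly those with respect to $\K$, and the two minima coincide. So $\overline{f}_\K(T)=\overline{f}_{\K\cup S}(T)$. More generally, the same argument yields $\overline{f}_\K(U)=\overline{f}_{\K\cup S}(U)$ for every $U$ with $S\not\subseteq U$. We will reuse this fact in the next step.

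For the second implication, let $S\subsetneq T$. The lower function at $T$ maximizes over $X\in\K$ with $T\subseteq X$, and $X=S$ does not qualify because $T\not\subseteq S$. So the index set is unchanged, and the only possible change comes from the inner terms $\overline{f}(X\setminus T)$. Since $S\subseteq T$, we have $X\setminus T\subseteq X\setminus S$, so $X\setminus T$ is disjoint from $S$. It is nonempty whenever $X\neq T$, so it does not contain $S$. The general form of the first step (with $U=X\setminus T$, $S\not\subseteq U$) then shows that $\overline{f}_{\K\cup S}(X\setminus T)=\overline{f}_\K(X\setminus T)$. If $X=T$, then $X\setminus T=\emptyset$, whose value is known in $\K_0$ and unchanged. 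Therefore every term of the maximum is identical for $\K$ and $\K\cup S$, and the lower functions agree.

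The main obstacle is the bookkeeping in the second step: we must make sure no term in the lower function is affected indirectly through $\overline{f}$. This is exactly why we prove the stronger statement ``$\overline{f}$ is unchanged on every $U$ not containing $S$'' rather than only on proper subsets of $S$. We will also note the degenerate conventions, namely that the value of $\emptyset$ is known and that $T$ itself may belong to $\K$. In that case both sides equal $f(T)$, so these cases need no separate treatment.
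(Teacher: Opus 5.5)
Your proof is correct and follows the paper's argument essentially verbatim. It has the same three steps: $S$ cannot be a block in a partition of $T\subsetneq S$; $S$ is not an admissible $X\supseteq T$ in the lower function; and the inner terms $\overline{f}(X\setminus T)$ are unchanged because $S\not\subseteq X\setminus T$. One small correction: that last non-containment follows from $X\setminus T$ being disjoint from $S$ together with $S\neq\emptyset$ (since $\emptyset\in\K_0\subseteq\K$ and $S\notin\K$), not from $X\setminus T$ being nonempty, and this also makes your separate treatment of $X=T$ unnecessary.
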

\begin{proof}
    As $T \subsetneq S$, revealing $f(S)$ does not appear in any partition of $T$, thus $\overline{f}_\K(T) = \overline{f}_{\K\cup S}(T)$.

    The value of $T$ for $\underline{f}_\K$, resp. $\underline{f}_{\K \cup S}$ is given by considering maximum over $X \in \K$, $T \subseteq X$. As $S \subsetneq T$, it does not appear as a possible $X$. The only possibility for the change of the upper value would be that by revealing $S$, we get $\overline{f}_{\K\cup S}(X \setminus T) \neq \overline{f}_{\K}(X \setminus T)$ for some $X$. This means $S \subseteq X \setminus T$, but this is not possible as $S \subsetneq T$.
\end{proof}

\begin{lemma}
\label{lem:unaffected-bounds}
    Let $\K_0 \subseteq\K \subseteq 2^N\setminus S$ and $T\in 2^N$, such that $\emptyset \neq S\cap T$, $S\not\subseteq T$ and $T \not\subseteq S$. Then
    \begin{equation*}
        \overline{f}_\K[\S](T) = \overline{f}_{\K\cup S}[\S](T),
        \hspace{3ex}
        \text{and}
        \hspace{3ex}
        \underline{f}_\K[\S](T) = \underline{f}_{\K\cup S}[\S](T).
    \end{equation*}    
\end{lemma}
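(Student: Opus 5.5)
The plan is to argue directly from the explicit formulas for the \S-tight functions in \eqref{eq:s-upper-function} and for $\underline{f}_\K[\S]$, in the same way as Lemma~\ref{lem:change-of-bounds}. The guiding observation is that a newly revealed set $S$ can influence the value at $T$ only if $S$ can occur as a block in some admissible configuration for $T$. Throughout, the values $f(X)$ for $X\in\K$ are unchanged when passing from $\K$ to $\K\cup S$. So it is enough to show that the \emph{feasible families} over which the minima and maxima are taken are identical for $\K$ and for $\K\cup S$.

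First I handle the upper function. By \eqref{eq:s-upper-function}, $\overline{f}_{\K\cup S}[\S](T)$ is a minimum over partitions of $T$ into pairwise disjoint blocks from $\K\cup S$ whose union is exactly $T$. Every block of such a partition is a subset of $T$. Since $S\not\subseteq T$ by assumption, $S$ cannot be a block. Hence the feasible partitions for $\K\cup S$ are exactly those for $\K$, and the two minima coincide: $\overline{f}_\K[\S](T)=\overline{f}_{\K\cup S}[\S](T)$.

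Next I handle the lower function. The value $\underline{f}_{\K\cup S}[\S](T)$ is the maximum over $X\in\K\cup S$ with $T\subseteq X$ of $f(X)-\overline{f}_{\K\cup S}[\S](X\setminus T)$. There are two things to check.
\begin{enumerate}
\item The index set is unchanged. The only new candidate would be $X=S$, but $T\not\subseteq S$ excludes it.
\item Each term is unchanged. For each remaining $X\in\K$ with $T\subseteq X$, I need $\overline{f}_{\K\cup S}[\S](X\setminus T)=\overline{f}_\K[\S](X\setminus T)$. By the same argument as for the upper function, this holds unless $S\subseteq X\setminus T$. That inclusion is impossible because $S\cap T\neq\emptyset$.
\end{enumerate}
So every term and the index set agree, and the maxima are equal.

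The only delicate point, and the one I would double-check, is the second step for the lower function. There the upper function is evaluated at sets other than $T$, so one must verify that no set $X\setminus T$ with $X\supseteq T$ can contain $S$. This is exactly where the hypothesis $S\cap T\neq\emptyset$ enters. By contrast, $S\not\subseteq T$ and $T\not\subseteq S$ are used for the upper function and for the index set of the lower function, respectively. One should also note the degenerate case $T\in\K$: there both functions equal $f(T)$ by tightness, so the statement is trivially true.
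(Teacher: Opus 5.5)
Your proposal is correct and follows the paper's approach. The paper's proof only says the lemma ``follows immediately from the definition of the upper/lower functions'', and your check is that verification written out, in the same way as the paper's proof of Lemma~\ref{lem:change-of-bounds}: the partition families for $\overline{f}[\S](T)$ agree, the index set of the lower function agrees, and every term $\overline{f}[\S](X\setminus T)$ agrees. Each of the three hypotheses is used exactly where you say. Your closing remark about the case $T\in\K$ is unnecessary, because your formula-level argument never assumed $T\notin\K$.
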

\begin{proof}
    Follows immediately from the definition of the upper/lower functions.
\end{proof}

The divergence $\Delta_f$ of a subadditive function $f$ is supermodular, if for every $\hat{\K} \subseteq 2^N \setminus \left( \K_0\cup S\cup Z\right)$, it holds
\begin{equation}
\label{app: eq: supermodularity constraints}
    \gap_f(\hat{\K}\cup S\cup Z) - \gap_f(\hat{\K}\cup S) \ge \gap_f(\hat{\K}\cup Z) - \gap_f(\hat{\K}).
\end{equation}
Denote $\K = \hat{\K} \cup \K_0$. By definition of the divergence and the fact that $\lvert \overline{f}_\K(T) - \underline{f}_\K(T) \rvert = \overline{f}_\K(T) - \underline{f}_\K(T)$, the submodularity condition can be rewritten as
\begin{multline*}
     \sum_{T\in 2^N}\left(\overline{f}_{\L\cup Z}(T) - \underline{f}_{\L\cup Z}(T) 
     - \overline{f}_{\L}(T) +\underline{f}_{\L}(T)\right)
     \ge\\
     \sum_{T\in 2^N}\left(\overline{f}_{\K\cup Z}(T) - \underline{f}_{\K\cup Z}(T) - \overline{f}_{\K}(T) + \underline{f}_{\K}(T)\right).
\end{multline*}
where $\L = \K\cup S$.

To prove Proposition~\ref{prop: gap in K}, we show an even stronger condition holds, i.e., $\forall T \in 2^N$
\begin{multline}
    \label{eq:SM-stronger-constraint}
    \overline{f}_{\K\cup S\cup Z}(T) - \underline{f}_{\K\cup S\cup Z}(T) 
     - \overline{f}_{\K\cup S}(T) +\underline{f}_{\K\cup S}(T)
     \ge\\
     \overline{f}_{\K\cup Z}(T) - \underline{f}_{\K\cup Z}(T) - \overline{f}_{\K}(T) + \underline{f}_{\K}(T).
\end{multline}

For $T \in \K \cup S$, condition~\eqref{eq:SM-stronger-constraint} holds for any superadditive $f\colon 2^N \to \mathbb{R}$ with $N$ of arbitrary size. To see this, notice $\overline{f}_{\K \cup S \cup Z}(T) = \underline{f}_{\K \cup S \cup Z}(T)$ and $\overline{f}_{\K \cup S}(T) = \underline{f}_{\K \cup S}(T)$, thus~\eqref{eq:SM-stronger-constraint} reduces to
\[
\overline{f}_{\K\cup S}(T) - \underline{f}_{\K\cup S}(T) \le \overline{f}_{\K}(T) - \underline{f}_{\K}(T).
\]
This inequality follows from Lemma~\ref{lem:bound-funcs-monotone}. For the sake of clarity, we split the rest of the proof into two lemmas based on the size of $N$.
\smDivergenceOnFourPlayers*
\begin{proof}[Proof for $n=3$ and $\S$]
    Since, only subsets of size 2 are possibly unknown, their lower and upper bounds do not change when another subset of size 2 is revealed. Thus, for $\K_0 \subseteq \K \subseteq 2^N \setminus \{S,Z\}$ and $T \notin \K\cup S$, it holds 
    \[\overline{f}_{\K \cup S \cup Z}(T) - \underline{f}_{\K\cup S\cup Z}(T) = \overline{f}_{\K \cup S}(T) - \underline{f}_{\K \cup S}(T)\]
    and similarly
    \[\overline{f}_{\K \cup Z}(T) - \underline{f}_{\K\cup Z}(T) = \overline{f}_{\K}(T) - \underline{f}_{\K}(T),\]
    which means~\eqref{eq:SM-stronger-constraint} holds.
\end{proof}

\smDivergenceOnFourPlayers*
\begin{proof}[Proof for $n=4$ and $\S$]
To prove~\eqref{eq:SM-stronger-constraint} for $\K_0 \subseteq \K \subseteq 2^N \setminus \{S,Z\}$ and $T \notin \K \cup S$, we distinguish several cases based on the relation between $Z$ and $T$. All of the cases follow a similar pattern.
\begin{enumerate}
    \item $Z \subsetneq T$:
    
    Since $\lvert N \rvert = 4$ and $Z,T$ are unknown in $\K \cup S$, they must be of form $Z = \{i,j\}$ and $T = \{i,j,k\}$. By Lemma~\ref{lem:change-of-bounds}, $\underline{f}_{\K \cup Z}(T) = \underline{f}_\K(T)$ and $\underline{f}_{\K \cup S \cup Z}(T) = \underline{f}_{\K \cup S}(T)$, thus~\eqref{eq:SM-stronger-constraint} reduces to
    \[
    \overline{f}_{\K \cup S \cup Z}(T) - \overline{f}_{\K \cup S}(T) \ge \overline{f}_{\K \cup Z}(T) - \overline{f}_{\K}(T).
    \]
    For a contradiction, suppose the converse holds. As, by Lemma~\ref{lem:bound-funcs-monotone}, $\overline{f}_{\K \cup S \cup Z}(T) - \overline{f}_{\K \cup Z}(T) \ge 0$, this means $\overline{f}_{\K \cup Z}(T) > \overline{f}_{\K}(T)$, which leads, by Lemma~\ref{lem:bound-funcs-monotone}, to a contradiction.
    
    \item $T \subsetneq Z$:
    Similarly to the first case, we have $T = \{i,j\}$, $Z = \{i,j,k\}$ and by Lemma~\ref{lem:change-of-bounds},~\eqref{eq:SM-stronger-constraint} reduces to
    \[
    \underline{f}_{\K \cup S}(T) - \underline{f}_{\K\cup S\cup Z}(T) \ge \underline{f}_\K(T)-\underline{f}_{\K \cup Z}(T).
    \]
    For a contradiction, suppose the converse holds. It means $\underline{f}_\K(T) > f_{\K \cup Z}(T)$, which by Lemma~\ref{lem:bound-funcs-monotone} leads to a contradiction.
    \item $Z = T$:
    It holds $\overline{f}_{\K \cup S \cup Z}(T) = \underline{f}_{\K \cup S\cup Z}(T)$ and $\overline{f}_{\K \cup Z}(T) = \underline{f}_{\K \cup Z}(T)$, thus~\eqref{eq:SM-stronger-constraint} reduces to
    \[
    \overline{f}_\K(T) - \underline{f}_\K(T) \ge \overline{f}_{\K \cup S}(T) - \underline{f}_{\K \cup S}(T)
    \]
    which holds by Lemma~\ref{lem:bound-funcs-monotone}.
    In this case, $Z = \{i,j\}$, $T = \{k,\ell\}$ and~\eqref{eq:SM-stronger-constraint} reduces to
    \[
    \underline{f}_{\K \cup S}(T) - \underline{f}_{\K \cup S \cup Z}(T)\ge \underline{f}_{\K}(T) - \underline{f}_{\K \cup Z}(T)
    \]
    as $\overline{f}_{\K \cup S\cup Z}(T) = \overline{f}_{\K \cup S}(T)$ and $\overline{f}_{\K \cup Z}(T) = \overline{f}_{\K}(T)$. For a contradiction, if $<$ holds, similarly to previous cases, $\underline{f}_{\K}(T) > \underline{f}_{\K \cup S}(T)$, a contradiction with Lemma~\ref{lem:bound-funcs-monotone}.

    \item $Z \cap T \neq \emptyset$ and $Z \not\subseteq T$ and $T \not\subseteq Z$: By Lemma~\ref{lem:unaffected-bounds},
    condition~\eqref{eq:SM-stronger-constraint} reduces to $0 \ge 0$.
\end{enumerate}
    
\end{proof}

The proof of Proposition~\ref{prop:SM-of-divergence-on-4-players} for \SS functions follows the same argument as in the case of \S functions, except that some steps simplify.

We conclude the appendix with the proof of Corollary~\ref{prop:almost-nothing}. We note this proof is a modification of Proposition 6 in~\cite{uradnik2024reducing}.

\almostNothing*

\begin{proof}
	We use the easily derived result, which states that the \S-divergence  of $(N,w)$ is supermodular if and only if the \S-divergence of $(N,v)$ is supermodular. We focus on the divergence of $(N,v)$ as this makes our analysis simple and at the same time, the condition easy to check.

	The supermodularity condition for the divergence is
	\begin{equation}
		\label{app: eq: supermodularity constraints}
		\gap_{(N,v)}(\hat{\K}\cup S\cup Z) - \gap_{(N,v)}(\hat{\K}\cup S) \ge \gap_{(N,v)}(\hat{\K}\cup Z) - \gap_{(N,v)}(\hat{\K}),
	\end{equation}
	where $\hat{\K} \subseteq 2^N \setminus \K_0$, which can be rewritten as


	\begin{multline}
		\label{app: eq: supermodularity constraints2}
		\sum_{T \subseteq N}\left(\overline{v}_{\hat{\K}\cup S\cup Z}(T) - \underline{v}_{\hat{\K}\cup S\cup Z}(T) - \overline{v}_{\hat{\K}\cup S}(T) + \underline{v}_{\hat{\K}\cup S}(T)\right) \ge \\
		\sum_{T \subseteq N}\left(\overline{v}_{\hat{\K}\cup Z}(T) - \underline{v}_{\hat{\K}\cup Z}(T) - \overline{v}_{\hat{\K}}(T) + \underline{v}_{\hat{\K}}(T)\right).
	\end{multline}

	We obtain the result by setting $\K = \K_0 \cup \{\{j,k\}\}$, $S=\{i,j\}$, and $Z=\{k,l\}$.
	Denote $v(ij) = v(jk) - \varepsilon$ and $v(kl) = v(jk) + \delta$ for some $\varepsilon, \delta \ge 0$.
	For these values, we investigate each of the terms separately.
	Let
	\begin{align}
		L_T & = \overline{v}_{\K\cup S\cup Z}(T) - \underline{v}_{\K\cup S\cup Z}(T) - \overline{v}_{\K\cup S}(T) + \underline{v}_{\K\cup S}(T), \\
		R_T & = \overline{v}_{\K\cup Z}(T) - \underline{v}_{\K\cup Z}(T) - \overline{v}_{\K}(T) + \underline{v}_{\K}(T).
	\end{align}
	In the reminder of the proof, we distinguish different cases based on the relation of $T$ and $\{i,j,k,l\}$ for all $T\in 2^N\setminus \K_0$.
	\begin{enumerate}
		\item $\{i,j,k,l\} \subseteq T\subsetneq N$:

		      Using Lemma~\ref{lem:change-of-bounds}, we get
		      \begin{align*}
			      L_T & = \overline{v}_{\K \cup S}(T) - \overline{v}_{\K\cup S\cup Z}(T), \\
			      R_T & = \overline{v}_{\K}(T) - \overline{v}_{\K\cup Z}(T),
		      \end{align*}
		      where
		      \begin{align*}
			      L_T & = v(jk) - \varepsilon - v(jk) + \varepsilon - v(jk) - \delta  \\
			      R_T & = v(jk) - v(jk)  = 0.
		      \end{align*}
		      It holds $L_T = R_T - v(jk) - \delta = R_T - v(kl)$.
		\item $\{i,j,k,l\}\subseteq N \setminus T$:

		      In this case, the upper bound on $v(T)$ is unaffected by the knowledge of $S,Z$, so
		      \begin{align*}
			      L_T & =  \underline{v}_{\K \cup S\cup Z}(T)
			      - \underline{v}_{\K \cup S}(T),                              \\
			      R_T & =  \underline{v}_{\K\cup Z}(T) - \underline{v}_{\K}(T).
		      \end{align*}
		      Since the only superset of $T$ in $\K\cup S \cup Z$ is $N$, the lower bound reduces to
		      \begin{equation*}
			      \underline{v}_\mathcal{L}(T) =
			      v(N) - \overline{v}_\mathcal{L}(N\setminus T),
		      \end{equation*}
		      for every $\mathcal{L}\in \{\K,\K\cup S, \K \cup Z, \K \cup S\cup Z\}$. Thus
		      \begin{align*}
			      L_T & =  \overline{v}_{\K \cup S}(N\setminus T)- \overline{v}_{\K \cup S \cup Z}(N\setminus T), \\
			      R_T & =  \overline{v}_{\K}(N\setminus T)- \overline{v}_{\K \cup Z}(N\setminus T).
		      \end{align*}
		      Further, $\{i,j,k,l\} \subseteq N\setminus T$, which reduces this case to the previous one, thus $L_T = R_T - v(jk) - \delta = R_T - v(kl)$.

		\item $S=T$:

		      In this case, since $\underline{v}_{\mathcal{L}}(T) = \overline{v}_\mathcal{L}(T) = v(T)$ if $T \in \mathcal{L}$ for every $\mathcal{L}\in \{\K,\K\cup S, \K \cup Z, \K \cup S\cup Z\}$ we get
		      \begin{align*}
			      L_T & =  0,                                                                                                                               \\
			      R_T & =  \underline{v}_{\K\cup Z}(T)- \underline{v}_{\K}(T) = \overline{v}_{\K}(N \setminus T) - \overline{v}_{\K \cup Z}(N \setminus T) = 0.
		      \end{align*}
		      Thus $L_T = R_T$ holds.

		\item $Z=T$:

		      In this case, by the fact that $\underline{v}_{\mathcal{L}}(T) = \overline{v}_\mathcal{L}(T) = v(T)$ if $T \in \mathcal{L}$, it holds
		      \begin{align*}
			      L_T & =  \underline{v}_{\K\cup S}(T)- \overline{v}_{\K\cup S}(T), \\
			      R_T & = \underline{v}_{\K}(T)- \overline{v}_{\K}(T).
		      \end{align*}
		      Since $S \not\subseteq T$, it holds $\overline{v}_{\K \cup S}(T) = \overline{v}_\K(T)$, we can rewrite
		      \begin{align*}
			      L_T & = \underline{v}_{\K \cup S}(T) = v(N) - \overline{v}_{\K\cup S}(N \setminus T)= v(N) - v(ij), \\
			      R_T & = \underline{v}_{\K}(T) = v(N) - \overline{v}_{\K}(N \setminus T) = v(N).
		      \end{align*}
		    Since $v(ij) \leq 0$, $L_T \geq L_T + v(ij) = R_T$.

		\item $\emptyset \neq Z\cap T, Z \not\subseteq T$, and $T \not\subseteq Z$ or $\emptyset\neq S\cap T, S \not\subseteq T$, and $T \not\subseteq S$:

		      By Lemma~\ref{lem:unaffected-bounds}, one immediately arrives at $L_T=R_T=0$.
	\end{enumerate}

	A necessary condition for supermodularity of the \S-divergence is 
    \begin{equation}\label{eq:sum}
        \sum_{T\subseteq N}(L_T - R_T) \ge 0.
    \end{equation}

	If the sum is negative, the \S-divergence is thus not supermodular. In the sum, the only coalitions that matter are those from cases one, two, and four. Specifically, in terms one and two, $L_T - R_T \leq 0$, thus case four has to compensate for these in order for supermodularity to be satisfied.
	Specifically, sum in~\eqref{eq:sum} is equal to
	\[
    \sum_{T\subseteq N}(L_T - R_T) = \left(c_1(n) + c_2(n)\right) v(kl) - v(ij),
    \]
	where $c_1(n) = 2^{n-4}-1$ and $c_2(n)=2^{n-4}-n+3$ is the number of terms satisfying conditions 1 and 2, respectively. Thus, supermodularity is violated, if 
	\[
    \left(2^{n-3}-n + 2\right) v(kl) < v(ij).
	\]
	The coefficient for $v(kl)$ is equal to $1, 4,11,26$, $\dots$ for $n = 5, 6,7,8$, $\dots$ and the sequence can be shown to be increasing.
\end{proof}

\section{Computing bounds for \SAM functions}
\label{app:sam:algo}

\begin{algorithm}[h!]
	\caption{\sc Numerical approach to computing \SAM-upper function}
	\label{algo:SAM-bound}
	\SetAlgoLined
	\DontPrintSemicolon
	\KwIn{incomplete set function $(f,\K) \in \SAM$, number of steps $t$, iteration change limit $\varepsilon$}
	\vspace{1ex}
	\ForEach{$S \subseteq N$}{
		$f_0(S) \gets \min_{S_1,\dots,S_k \in \K, S_i \cap S_j = \emptyset, \bigcup_i S_i = S}\sum_{i=1}^k f(S_i)$\;
		$f_0(S) \gets \min_{S \subseteq T} f_0(T)$\;
	}
    
    \For{$k \gets 1$ \KwTo $t$}{
        \For{$s \gets n$ \KwTo $1$}{
        	\ForEach{$S \subseteq N, |S| = s$}{
    	        $f_k'(S) \gets \min_{X \subseteq S} f_{k-1}(X) + f_{k-1}(X \setminus S)$\;
    		$f_k(S) \gets \min_{S \subseteq T} f'_k(T)$\;
            }
        }
        \If{$\max_{S \subseteq N}|f_{k}(S) - f_{k-1}(S)| < \varepsilon$}{
    	            \KwRet{$f_k$} \;
    	        }
    } 
    \KwRet{$f_t$}\;
\end{algorithm}



\section{Pseudocode for \offlineoptimal{} }
\label{app:algo:ofop}

\begin{algorithm}[h!]
	\caption{\sc Offline Optimal}
	\label{algo: offline optimal}
	\SetAlgoLined
	\DontPrintSemicolon
	\KwIn{distribution of subadditive functions $\valueDistribution$, number of steps $t$, number of samples $\kappa$}
	\vspace{1ex}
	$ \overline \K \gets 2^N \setminus \K_0 $\;
	$G \gets \left\{ \right\}$ \tcp*{trajectories \& their $\mathbb{E}[\gap]$}
	\For(\tcp*[f]{each trajectory}){$\mathcal{S} \subseteq \overline \K: \absolute{\mathcal{S}} = t$}
	{
		$\mu \gets 0$\;
		\For(\tcp*[f]{approx. $\mathbb{E}[\gap]$}){$j \in \{1, \dots, \kappa\}$}
		{
			$ f \sim \valueDistribution$\;
			$\mu \gets \mu + \gap_f(\mathcal{S})$\;
		}
		$ \mu \gets \mu \slash \kappa $\;
		$ G \left[ \mathcal{S} \right] \gets \mu $
	}
	$ \left\{ S_i \right\}_{i=1}^t \gets \argmin_{\mathcal{S} \subseteq \overline{\K}: \absolute{\mathcal S} = t} G \left[ \mathcal{S} \right]$\;
	\Return{$\{S_i\}_{i=1}^{t}$}\;
\end{algorithm}

\section{Pseudocode for \offlinegreedy{}}
\label{app:algo:ofgr}

\begin{algorithm}[h!]
	\caption{\sc Offline Greedy}
	\label{algo: offline greedy}
	\SetAlgoLined
	\DontPrintSemicolon
	\KwIn{distribution of subadditive functions $\valueDistribution$, number of steps $t$, number of samples $\kappa$}
	\vspace{1ex}
	\SetKwFunction{FNC}{\sc Offline Greedy}
	\SetKwProg{Fn}{Function}{:}{}
        \If{$t>1$}{
	$\{S_i\}_{i=1}^{t - 1} \gets $
        {\FNC{$\valueDistribution, t-1$}}\;
        }
	$ \overline \K \gets 2^N \setminus ( \K_0 \cup \left\{ S_i \right\}_{i=1}^{t-1} ) $\;
	$G \gets \left\{ \right\}$ \tcp*{trajectories \& their $\mathbb{E}[\gap]$}
	\For(\tcp*[f]{each trajectory}){$S \in \overline \K$}
	{
		$\mu \gets 0$\;
		\For(\tcp*[f]{approx. $\mathbb{E}[\gap]$}){$j \in \{1, \dots, \kappa\}$}
		{
			$ f \sim \valueDistribution$\;
			$\mu \gets \mu + \gap_f(\left\{ S_i \right\}_{i=1}^{t-1} \cup \left\{ S \right\})$\;
		}
		$ \mu \gets \mu \slash \kappa $\;
		$ G \left[ S \right] \gets \mu $
	}
	$S_{t} \gets \argmin_{S\in \overline \K} G \left[ S \right]$\;
	\Return{$\{S_i\}_{i=1}^{t}$}\;
\end{algorithm}



    



\section{Algorithm Specifications}
\label{app: alg specs}

\paragraph{Oracle and Offline Algorithms}
Greedy strategies are computationally straightforward compared to their optimal counterparts.
A single step using a greedy approach demands $ \mathcal{O}(g \cdot 2^n) $ time for a single sample.
Here, $ g $ represents the time required to compute the divergence.

On the other hand, optimal algorithms prove to be computationally intensive due to the necessity of examining every sequence of actions, that is, every subset of $ 2^N $.
For each sample, the time complexity amounts to $ \mathcal{O}(g \cdot 2^{2^n}) $.
Despite our efforts to parallelize the computation where possible, given our available resources, we were only able to compute the complete optimal strategies for set function on ground sets of size up to 5.
The computation for ground set of size 5 took roughly 300 hours.
We estimate, that to compute the optimal strategies for ground set of size 6, for all steps, would take over 100 years.

The online variants are easier to parallelize, since each sample can be computed and evaluated independently, in contrast to the offline variants, where all the samples need to be computed, put together, averaged, and then finally evaluated.

When estimating the expectation with respect to $\valueDistribution$, we used $ \kappa = 90 $ samples.

\paragraph{Reinforcement Learning}
We apply reinforcement leaning~\cite{sutton2014reinforcement} to approximate the optimal strategy of the online principal's problem.
Namely, we use the Proximal policy optimization~({\sc PPO})~\cite{Schulman2017}.
We want to find a strategy of the principal which efficiently minimizes the average divergence.
As such, we train {\sc PPO} to minimize the divergence
\begin{equation*}
	r(\K_{\tau-1}, S_\tau) = -\gap_f(\K_{\tau-1}\cup \{S_\tau\}),
\end{equation*}
at every step, which provides a stronger learning signal compared to the final reward.
This is equivalent to training over a distribution of the online principal's problems with uniformly distributed size $t$.

In our implementation, we parametrize both actor and critic of the {\sc PPO} algorithm with a two-layer fully-connected neural network with 64 hidden units and ReLU activation each.
To optimize the surrogate {\sc PPO} objective, we used the Adam optimizer.
The rest of the hyperparameteres can be found in Table~\ref{tab: hyperparameters}.

\begin{table}[t]
	\caption{Hyperparameters used during training.}
	\label{tab: hyperparameters}
\centering
	\begin{tabular}{c|c|c}
		\hline
		Parameter  & Value             & Description                    \\
		\hline
		\hline
		$\alpha $ & $3\cdot10^{-4}$   & Learning rate                  \\
		$\beta$    & $0$               & Entropy regularization         \\
		$\gamma$   & 0.99              & Reward discounting rate        \\
		$\lambda$  & 0.95              & Generalized advantage estimate \\
		$\varepsilon$ & 0.2            & Surrogate clip range           \\
		$B$        & $5\cdot 10^{4}$   & Rollout buffer size            \\
		$M$        & 0.5               & Max gradient norm              \\
		$n_e$      & 10                & Number of training epochs      \\
		\hline\end{tabular}
\end{table}

\paragraph{Random Algorithm}
The {\sc Random} algorithm is computationally simple, requiring only $\mathcal{O}(g)$ time to compute a single sample.
As such, it took only a few minutes to compute for ground sets of size 5.
We again used $ \kappa=90 $ samples to approximate the expectation and the standard error of the mean.

\section{Oracle algorithms}\label{app:oracle-algs}
Following are the pseudocodes of {\sc Oracle} algorithms. 
\begin{algorithm}[h!]
	\caption{\sc Oracle Optimal}
	\label{algo: online optimal}
	\SetAlgoLined
	\DontPrintSemicolon
	\KwIn{characteristic function $f\in \S$, number of steps $t$}
	\vspace{1ex}
	$ \overline \K \gets 2^N \setminus \K_0 $\;
	$ \left\{ S_i \right\}_{i=1}^t \gets \argmin_{\mathcal S \subseteq \overline \K: \absolute{\mathcal S} = t} \gap_f(\K_0\cup \mathcal{S})$\;
	\Return{$\{S_i\}_{i=1}^{t}$}\;
\end{algorithm}

\begin{algorithm}[h!]
	\caption{\sc Oracle Greedy}
	\label{algo: online greedy}
	\SetAlgoLined
	\DontPrintSemicolon
	\KwIn{characteristic function $f\in \S$, number of steps $t$}
	\vspace{1ex}
	\SetKwFunction{FNC}{\sc Oracle Greedy}
	\SetKwProg{Fn}{Function}{:}{}
        \If{$t>1$}{
	$\{S_i\}_{i=1}^{t - 1} \gets $ {\FNC{$f, t-1$}}\;
        }
	$ \overline \K \gets 2^N \setminus (\K_0 \cup \left\{ S_i \right\}_{i=1}^{t - 1} $)\;
	$S_{t} \gets \argmin_{S\in \overline \K} \gap_f(\K_0\cup \{S_i\}_{i=1}^{t - 1}\cup \left\{ S \right\})$\;
		\Return{$\{S_i\}_{i=1}^{t}$}\;
\end{algorithm}

\section{Additional Results}
\label{app:k-budget}

Let us present more results --- specifically regarding the class of $ k $-budget functions.
A function $ f $ is \emph{$ k $-budget} $ \equiv $ \[
     f\!\left(S\right) = \min \left\{ k, \absolute{S} \right\}.
\]
Let us denote the distribution over $ k $-budget functions with $ k \in \left\{ 1, \ldots, n \right\} $ chosen uniformly at random as $ \kbudget $.

Since these functions are submodular, we can use the Cohavi--Dobzinski Sketching Algorithm (CDSA)~\cite{Cohavi2017} for comparison regarding $ \alpha $-sketching.
See \Cref{tab:sampling_comparison_kbud} for the results.
Note that we keep the notation consistent with that in \Cref{tab:sampling_comparison} in \Cref{sec: experiments}.

\begin{table}[h!]
    \caption{
    Approximation $\alpha$ minimizing multiplicative error on \covg{} of CDSA \( (f_A, \overline{f}_{\mathcal{K}_A}) \), revealed queried values \( (\underline{f}_{\mathcal{K}_A}, \overline{f}_{\mathcal{K}_A}) \), and \offlinegreedy{} \((\underline{f}_{\mathcal{K}_O}, \overline{f}_{\mathcal{K}_O}) \) with the same budget.
    }
    \label{tab:sampling_comparison_kbud}
    \centering
    \begin{tabular}{lccccc}
        \toprule
        Distribution $ \valueDistribution $ & \# Queried & $ \alpha(f_A, \overline f_{\K_A}) $ & $ \alpha(\underline f_{\K_A}, \overline f_{\K_A}) $ & $ \alpha(\underline f_{\K_O}, \overline f_{\K_O}) $ \\
        \midrule
        \kbudget[5]  & 3.72  & 2.58   & 2.02   & \textbf{1.88}   \\
        \kbudget[8]  & 6.88  & 4.19   & 2.86   & \textbf{2.47}   \\
        \kbudget[10] & 8.91  & 5.18 & 3.33   & \textbf{2.90}   \\
        \bottomrule
    \end{tabular}
\end{table}

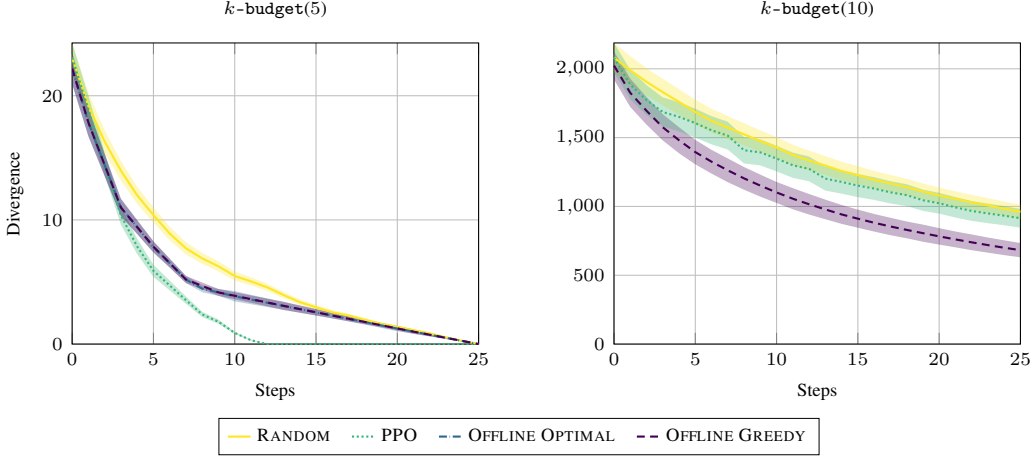
\begin{figure*}[t!]
\centering
\begin{tikzpicture}
\begin{groupplot}[
    group style={group size=2 by 1,
    ylabels at=edge left,
    xlabels at=edge bottom,
	horizontal sep=\exploitabilityplotsep,
    },
	xlabel={Steps},
	ylabel={Divergence},
	enlargelimits=false,
	xmin=0,
	ymin=0,
    height=.4\textwidth,
    width=.5\textwidth,
]
    \nextgroupplot[legend to name=sharedlegend, legend columns=-1, title={\kbudget[5]}]
	\exploitabilityplot{figures/domains/k_budget_generator_5.txt}{random_eval}{\textsc{Random}}
	\exploitabilityplot{figures/domains/k_budget_generator_5.txt}{eval}{\textsc{PPO}}
	\exploitabilityplot{figures/domains/k_budget_generator_5.txt}{expected_best_states}{\textsc{Offline Optimal}}
	\exploitabilityplot{figures/domains/k_budget_generator_5.txt}{expected_greedy}{\textsc{Offline Greedy}}
	
    \nextgroupplot[title={\kbudget[10]}]
	\exploitabilityplot{figures/domains/k_budget_generator_10.txt}{random_eval}{\textsc{Random}}
	\exploitabilityplot{figures/domains/k_budget_generator_10.txt}{expected_greedy}{\textsc{Offline Greedy}}
	\exploitabilityplot{figures/domains/k_budget_generator_10.txt}{eval}{\textsc{PPO}}
    \legend{}
\end{groupplot}
\node[anchor=north] at (current bounding box.south) {\pgfplotslegendfromname{sharedlegend}};
\end{tikzpicture}%
	\caption{
	    Comparison of divergence across algorithmic steps for various algorithms, showcasing the \kbudget{} distribution for $n\in\{5,10\}$.
	}
	\label{fig:domain_kbudget}
\end{figure*}

We further offer the results of all our approaches similar to \Cref{fig:domain_main} in \Cref{fig:domain_kbudget}.

\section{Normalization}\label{app:value-normalization}

In our application, it is convenient to transform a set function $f$ by an affine mapping such that after the transformation, the values of the singletons are equal to 0 and the value of the ground set is equal to $\pm1$.
After such transformation, the minimal information $\K_0$ is trivial.

Formally, let $\alpha > 0$ and $\beta \in \mathbb{R}^n$ such that
\begin{equation}\label{eq:SE-transform}
	g(S) = \alpha \cdot f(S) + \sum_{i \in S}\beta_i.
\end{equation}
By considering
\begin{equation*}
	\beta_i = \frac{f(\{i\})}{f(N) - \sum_{i \in N}f(\{i\})}
	\hspace{3ex}\text{and}\hspace{3ex}
	\alpha = \frac{1}{f(N)},
\end{equation*}
we achieve the desired transformation. Proposition~\ref{prop: gap in K} describes the effect on the divergence when such a transformation is applied.

\begin{observation}\label{lem:gap-is-SE}
	The \S-divergence $\gap$ satisfies
	\begin{equation}
		\gap_{\alpha f + \beta}(\K) = \alpha \cdot \gap_f(\K)
	\end{equation}
	where $\alpha > 0$,  $(\alpha v + \beta)(S) \coloneqq \alpha v(S) + \sum_{i \in N}\beta_i$ and $\beta_i \in \mathbb{R}$ for $i \in N$.
\end{observation}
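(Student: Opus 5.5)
The statement is the Observation on normalization: for $\alpha>0$ and $\beta\in\mathbb{R}^n$, $\gap_{\alpha f+\beta}(\K)=\alpha\,\gap_f(\K)$. This is essentially item 3 of Proposition~\ref{prop: gap in K}. I read the $\sum_{i\in N}\beta_i$ in the displayed definition as the typo it must be for $\sum_{i\in S}\beta_i$, matching \eqref{eq:SE-transform}. The plan is to show that the affine map $\phi(g)=\alpha g+\beta$ is a bijection of $\S$ onto itself which maps $\S(f,\K)$ onto $\S(\alpha f+\beta,\K)$. It then follows that the tight upper and lower functions transform as $\overline{(\alpha f+\beta)}_\K=\alpha\overline{f}_\K+\beta$ and likewise for the lower functions. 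Subtracting them cancels $\beta$, and absolute homogeneity of the norm gives the factor $\alpha$.

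\textbf{Step 1: $\phi$ preserves subadditivity.} Take disjoint $S,T$. The additive part satisfies $\beta(S)+\beta(T)=\beta(S\cup T)$. So
\[
\alpha g(S)+\beta(S)+\alpha g(T)+\beta(T)-\alpha g(S\cup T)-\beta(S\cup T)=\alpha\bigl(g(S)+g(T)-g(S\cup T)\bigr)\ge 0 .
\]
The inverse map $g\mapsto (g-\beta)/\alpha$ has the same form, so $\phi$ is a bijection of $\S$. Since $\phi$ acts pointwise on values, $g$ agrees with $f$ on $\K$ exactly when $\phi(g)$ agrees with $\phi(f)$ on $\K$. Hence $\phi(\S(f,\K))=\S(\phi(f),\K)$, and in particular $\K_0$-extendability is preserved.

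\textbf{Step 2: the tight bounds transform as claimed.} For $S\notin\K$, tightness says $\overline{f}_\K(S)=\max_{g\in\S(f,\K)}g(S)$. On each coordinate $S$, the map $g(S)\mapsto\alpha g(S)+\beta(S)$ is increasing because $\alpha>0$. Therefore the max over $\S(\phi(f),\K)$ equals $\alpha\overline{f}_\K(S)+\beta(S)$, and the same argument with min handles the lower functions. For $S\in\K$ both bounds equal the known value, so the identity holds trivially there.

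As a cross-check, I can verify this directly from the explicit formulas. In \eqref{eq:s-upper-function} each partition sum becomes $\sum_i(\alpha f(S_i)+\beta(S_i))=\alpha\sum_i f(S_i)+\beta(S)$, because the $S_i$ partition $S$. In the lower formula, $\beta(T)-\beta(T\setminus S)=\beta(S)$. I expect this bookkeeping to be the only point needing care, and it is also exactly where the $\sum_{i\in N}$ reading would fail.

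\textbf{Step 3: conclude.} Subtracting the transformed bounds gives $\overline{(\alpha f+\beta)}_\K-\underline{(\alpha f+\beta)}_\K=\alpha(\overline{f}_\K-\underline{f}_\K)$. Taking the norm and using $\|\alpha x\|=\alpha\|x\|$ for $\alpha>0$ yields the claim.
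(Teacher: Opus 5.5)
Your proposal is correct and takes the same route as the paper. Subadditivity is preserved under $f\mapsto\alpha f+\beta$ (which the paper cites rather than proves). The upper and lower functions become $\alpha\overline{f}_{\K}+\beta$ and $\alpha\underline{f}_{\K}+\beta$, so $\beta$ cancels in the difference and $\alpha$ comes out of the norm.

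The paper only asserts that the bounds transform this way. Your Step 2, together with the cross-check against the explicit formulas, supplies that missing justification. Your reading of $\sum_{i\in N}\beta_i$ as a typo for $\sum_{i\in S}\beta_i$ is also the right one.
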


\begin{proof}
	It is a standard result~\cite{Peleg2007} that $f \in \S \implies \alpha f + \beta \in \S$.
	From these two results, we have
	\begin{align*}
		\gap_{\alpha f + \beta}(\K)
		 & =
		\norm{\overline{\alpha f + \beta}_{\K_0 \cup \K} - \underline{\alpha f + \beta}_{\K_0 \cup \K}} \\
		 & =
		\alpha \norm{\overline{f}_{\K_0 \cup \K} - \underline{f}_{\K_0 \cup \K}}                     \\
		 & =
		\alpha\cdot \gap_f(\K)	.
	\end{align*}
\end{proof}

\end{document}